\newtheorem{theorem}{Theorem}
\newtheorem{lemma}{Lemma}
\newtheorem{definition}{Definition}
\newtheorem{proposition}{Proposition}
\DeclareMathOperator*{\argmin}{arg\,min}
\DeclareMathOperator{\eE}{\mathbb{E}}
\newcommand{\rR}{\mathbb{R}} 
\DeclareMathOperator{\diag}{diag}
\def\numofitems{{Q}}
\def\numofproj{{P}}
\definecolor{orange}{RGB}{1,1,0}
\begin{document}

%

%

\twocolumn[

\aistatstitle{A Topic Modeling Approach to Ranking}

\aistatsauthor{ Weicong Ding \And Prakash Ishwar \And
  Venkatesh Saligrama }

\aistatsaddress{Boston University \And Boston University \And
  Boston University } ]

\begin{abstract}
We propose a topic modeling approach to the prediction of preferences
in pairwise comparisons. We develop a new generative model for
pairwise comparisons that accounts for multiple shared latent 
rankings that are prevalent in a population of users. This new model
also captures inconsistent user behavior in a natural way. 
We show how the estimation of latent rankings in the new generative
model can be formally reduced to the estimation of topics in a
statistically equivalent topic modeling problem.
%
We leverage recent advances in the topic modeling literature to
develop an algorithm that can learn shared latent rankings with
provable consistency as well as sample and computational complexity
guarantees. We demonstrate that the new approach is empirically
competitive with the current state-of-the-art
approaches in predicting preferences on some semi-synthetic and real
world datasets.
\end{abstract}
\vspace*{-3ex}
\section{Introduction}
\label{sec:intro}
The recent explosion of web technologies has enabled us to collect an
immense amount of partial preferences for large sets of items, e.g.,
products from Amazon, movies from Netflix, or restaurants from Yelp,
from a large and diverse population of users through transactions,
clicks, check-ins, etc.~\citep[e.g.,][]{Lu11:ref, volkovs14a:ref,
  Agarwal14:ref}.
The goal of this paper is to develop a new approach to model, learn,
and ultimately predict the preference behavior of users in pairwise
comparisons which can form a building block for other partial
preferences.
%
%
Predicting preference behavior is important to personal recommendation systems, e-commerce, information retrieval, etc.

We propose a novel topic modeling approach to ranking and introduce a
new probabilistic generative model for pairwise comparisons that
accounts for a heterogeneous population of inconsistent users.
The essence of our approach is to view the outcomes of comparisons
generated by each user as a {\it probabilistic mixture} of a few {\it
  latent} global rankings that are {\it shared} across the
user-population.
This is especially appealing in the context of emerging web-scale
applications where
(i) there are multiple factors that influence individual preference
behavior, e.g., product preferences are influenced by price, brand,
etc.,
(ii) each individual is influenced by multiple latent factors to different extents, 
(iii) individual preferences for very similar items may be noisy and
change with time, and
(iv) the number of comparisons available from each user is typically
limited.
Research on ranking models to-date does not fully capture all these
important aspects.

In the literature, we can identify two categories of models.
In the first category of models the focus is on learning {\it one
  global} ranking that ``optimally'' agrees with the observations
according to some metric \citep[e.g.,][]{Gleich11:ref, Agarwal14:ref,
  volkovs14a:ref}. Loosely speaking, this tacitly presupposes a fairly
{\it homogeneous population} of users having very similar
preferences.
%
In the second category of models, there are multiple constituent
 rankings in the user population, but each user is associated
with a {\it single} ranking scheme sampled from a set of multiple
constituent rankings~\citep[e.g.,][]{Farias09:ref, Lu11:ref}.  Loosely
speaking, this tacitly presupposes a {\it heterogeneous population} of
users who are clustered into different types by their preferences
and whose preference behavior is influenced by only one factor.
In contrast to both these categories,
we model each user's pairwise preference behavior as a
mixed membership latent variable model. This captures both
heterogeneity (via the multiple shared constituent rankings) and
inconsistent preference behavior (via the probabilistic mixture). This
is a fundamental change of perspective from the traditional
clustering-based approach to a decomposition-based one.

A second contribution of this paper is the development of a novel
algorithmic approach to efficiently and consistently estimate the
latent rankings in our proposed model.
This is achieved by establishing a formal connection to probabilistic
topic modeling where each document in a corpus is viewed as a
probabilistic mixture of a few prevailing
topics \citep{Blei2012Review:ref}.
This formal link allows us to leverage algorithms that were
recently proposed in the topic modeling literature \citep{Arora2:ref,
  DDP:ref, Ding14:ref} for estimating latent shared rankings.
Overall, our approach has a running time and a sample complexity bound
that are provably {\it polynomial} in all model parameters. Our approach is
asymptotically consistent as the number of users goes to infinity even
when the number of comparisons for each user is a small constant.
We also demonstrate competitive empirical performance in collaborative
prediction tasks. Through a variety of performance metrics, we
demonstrate that our model can effectively capture the variability of
real-world user preferences.
\vspace*{-2ex}
\section{Related Work}
\vspace*{-2ex}
Rank estimation from partial or total rankings has been extensively
studied over the last several decades in various settings.
%
%
A prominent setting is one in which individual user rankings (in a homogeneous population) are modeled as independent drawings from a
probability distribution which is centered around a
single ground-truth global ranking.
Efficient algorithms have been developed to estimate the global
ranking under a variety of probability models
\cite[][]{Quin10:ref,Gleich11:ref,Negahban12:ref,Osting13:ref,volkovs14a:ref}.
Chief among them are the Mallows model \citep{mallows1957:ref}, the
Plackett-Luce (PL) model \citep{plackett1975analysis}, and the Bradly-Terry-Luce (BTL) model \citep{Agarwal14:ref}.

To account for the heterogeneity in the user population,
\citep{Jagabathula08:ref, Farias09:ref} considered
models with multiple prevalent rankings and proposed consistent
combinatorial algorithms for estimating the rankings. The mixture of
Mallows model recently studied in \citep{Lu11:ref, MxMallow14:ref}
considers multiple constituent rankings as the ``centers'' for the Mallows
components, as do the ``mixture of PL'' and the ``mixture of BTL'' models \citep{MxRUT13:ref, MxMNL14:ref}.
In all these settings, however, each user is associated with only one
ranking sampled from the mixture model. 
%
They capture the cases where the population can be clustered into a
few types
in terms of their preference behavior.

The setup of our model, although being fundamentally different in
modeling perspective, is most closely related to the seminal work in
\cite{Jagabathula08:ref, Farias09:ref} (denoted by FJS) (see
Table~\ref{table:compare} and appendix).
As it turns out, our proposed model subsumes those proposed in FJS as
special cases.
On the other hand, while the 
algorithm in FJS can be applied to our more general setting, our
algorithm has provably better computational efficiency, polynomial
sample complexity, and superior empirical performance.
\begin{table*}[t]
\caption{\small Comparison to closely related work \citep{Jagabathula08:ref}
  \citep{Farias09:ref} (FJS) }
\label{table:compare}
\centering
{\small
\begin{tabular}{|c|c|c|c|c|c|}
\hline 
{\bf Method} & {\bf Assumptions} & {\bf Statistics} & {\bf
  Consistency} & {\bf Computational} & {\bf Sample} \\
& {\bf on} $\bm{\sigma}$ & {\bf used} &{\bf proved?}& {\bf
  complexity}& {\bf complexity}\\
\hline 
FJS & Separability & 1st order & Yes & Exponential in $K$ & Not
provided \\
\hline 
This paper & Separability & up to 2nd order & Yes & Polynomial &
Polynomial \\
\hline 
\end{tabular} 
}
\vspace*{-2ex}
\end{table*}

{\bf Relation to topic modeling:} Our ranking model shares the same
motivation as topic models. Topic modeling has been extensively
studied over the last decade and has yielded a number of powerful
approaches \citep[e.g.,][]{Blei2012Review:ref}.
While the dominant trend is to fit a MAP/ML estimate using
approximation heuristics such as variational Bayes or MCMC, recent
work has demonstrated that the topic discovery problem can lend itself
to provably efficient solutions with additional structural
conditions~\citep{Arora2:ref, Ding14:ref}. This forms the basis of our
technical approach.

{\bf Relation to rating based methods:} There is also a considerable
body of work on modeling numerical ratings
\citep[e.g.,][]{recsysbook:ref} from which ranking preferences can be
derived.
An emerging trend explores the idea of combining a topic model for
{\it text reviews} simultaneously with a rating-based model for ``star
ratings'' \citep{WongBlei11:ref}.
These approaches are, however, outside the scope of this paper.

The rest of the paper is organized as follows. We formally introduce
the new generative model in Sec.~\ref{sec:model}.
We then present the key geometrical perspective underlying the
proposed approach in Sec.~\ref{sec:idea}. We summarize the main steps
of our algorithm and the overall computational and statistical
efficiency in Sec.~\ref{sec:alg}.
We demonstrate competitive performance on semi-synthetic and
real-world datasets in Sec.~\ref{sec:exp}.
\vspace*{-2ex}
\section{A new generative model}
\vspace*{-2ex}
To formalize our proposed model, let $\mathcal{U} :=
\{1,\ldots,\numofitems\}$ be a universe of $\numofitems$ items. Let
the $K$ latent rankings over $\numofitems$ items that are shared
across a population of $M$ users be denoted by permutations
$\sigma^1,\ldots, \sigma^K$. Each user compares $N\geq 2$ pairs of items. The {\it unordered} item pairs $\{i,j\}$ to be compared are assumed to be drawn independently from some distribution $\mu$ with $\mu_{i,j}>0$ for all $i,j$ pairs. The $n$-th comparison result of user $m$ is denoted by an {\it ordered} pair $w_{m,n} = (i,j)$, if user $m$ compares item $i$ and
$j$ and prefers $i$ over $j$. Let a probability vector $\bm{\theta}_{m}$ be the user-specific weights over the $K$ latent rankings. The generative model for the comparisons from each user $m=1,\ldots, M$ is,
\vspace*{-1ex}
\begin{enumerate}
\item Sample $\bm{\theta}_{m} \in \bigtriangleup^{K}$ from a
  prior distribution $\Pr(\theta)$
\vspace*{-2ex}
\item For each comparison $n=1,\ldots, N$:
\vspace*{-1ex}
\begin{enumerate}
\item Sample a pair of items $\{i,j\}$ from $\mu$
\item Sample a ranking token $z_{m,n}\in\{1,\ldots, K\} \sim
  \text{Multinomial}(\bm\theta_m)$
\item If $\sigma^{z_{m,n}}(i) < \sigma^{z_{m,n}}(j)$, then $w_{m,n} =
  (i,j)$, otherwise $w_{m,n} = (j,i)$ \footnote {$\sigma^k(i)$ is the position of item  $i$ in the ranking $\sigma_k$ and item $i$ is preferred over $j$ if  $\sigma^k(i) < \sigma^k(j)$.} 
\end{enumerate}
\end{enumerate}
\vspace*{-1ex}

\label{sec:model} 
\begin{figure}[!htb]
\vglue -2ex
\centering{
\includegraphics[width=0.8\linewidth]{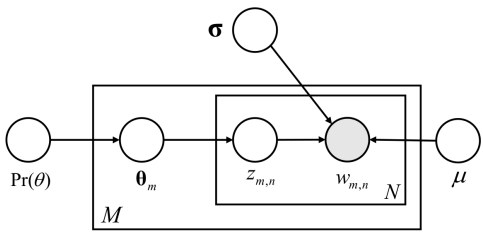}}
\vglue -3ex
\caption{{\small Graphical model representation of the generative model. The boxes represent replicates. The outer plate represents users, and the inner plate represents ranking tokens and comparisons of each user. }}
\label{fig:graphical}
\end{figure}
Figure~\ref{fig:graphical} is a standard graphical model
representation of the proposed generative process. Each user is
characterized by $\bm{\theta}_m$, the user-specific weights over the
$K$ shared rankings.
For convenience, we represent $\sigma^1,\ldots, \sigma^K$ by a
$W\times K$ nonnegative ranking matrix $\bm{\sigma}$ whose $W= Q(Q-1)$
rows are indexed by all the ordered pairs $(i,j)$.
We set $\sigma_{(i,j),k}= \mathbb{I}( \sigma^{k}(i)<\sigma^{k}(j) )$,
so that the $k$-th column of $\bm{\sigma}$ is an equivalent
representation of the ranking $\sigma^k$.
We then denote by $\bm{\theta}$ the $K\times M$ dimensional weight
matrix whose columns are the user-specific mixing weights
$\bm{\theta}_m$'s.
Finally, let $\mathbf{X}$ be the $W\times M$ empirical
comparisons-by-user matrix where $X_{(i,j),m}$ denotes the number of
times that user $m$ compares pair $\{i,j\}$ and prefers item $i$ over
$j$.
The principal algorithmic problem is to estimate the ranking matrix
$\bm{\sigma}$ given $\mathbf{X}$ and $K$.

If we denote by $\mathbf{P}$ a $W\times W$ diagonal matrix with the
$(i,j)$-th diagonal component $P_{(i,j),(i,j)} = \mu_{i,j}$, and set
$\mathbf{B} = \mathbf{P}\bm{\sigma}$, then the generative model
induces the following probabilities on comparisons $w_{m,n}$:
\vspace*{-1ex}
\begin{align}
\label{eq:distribution}
p(w_{m,n} = (i,j) \vert \bm{\theta}_m, \bm{B}) & = \mu_{i,j}
\sum\limits_{k=1}^{K}\sigma_{(i,j),k}\theta_{k,m} \nonumber \\
& = \sum_{k=1}^{K} B_{(i,j),k}\theta_{k,m}
\end{align}
Similarly, if we consider a probabilistic topic model on a set of $M$
documents, each composed of $N$ words drawn from a vocabulary of size
$W$, with a $W\times K$ {\it topic} matrix $\bm{\beta}$ and
document-specific mixing weights $\bm{\theta}^{\text{TM}}_m$ sampled
from a topic prior $\Pr^{\text{TM}}(\theta)$
\citep[e.g.][]{Blei2012Review:ref}, then, the distribution induced on
the observation $w_{m,n}^{\text{TM}}$, i.e., the $n$-th word in
document $m$, has the same form as in \eqref{eq:distribution}:
\vspace*{-1ex}
\begin{equation}
p(w_{m,n}^{\text{TM}} = i\vert \bm{\theta}^{\text{TM}}_m, \bm{\beta})
= \sum_{k=1}^{K}\beta_{i,k}\theta^{\text{TM}}_{k,m}
\label{eq:topicmodel}
\end{equation}
where $i=1,\ldots, W$ is any distinct word in the vocabulary. Noting
that $\mathbf{B}$ is column-stochastic, we have,
\begin{lemma}
\label{lem:statequivalent}
The proposed generative model is statistically equivalent to a
standard topic model whose topic matrix $\bm{\beta}$ is set to be $
\mathbf{B}$ and the topic prior to be $\Pr(\theta)$.
\end{lemma}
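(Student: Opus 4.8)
The plan is to show that the two generative models induce identical joint distributions over all observable data, which is precisely what "statistically equivalent" means. The key observation is already laid bare in equations~\eqref{eq:distribution} and~\eqref{eq:topicmodel}: both models produce, for each user/document $m$, a sequence of $N$ tokens, and the per-token conditional distribution given the mixing weights has exactly the same functional form, namely a mixture $\sum_{k=1}^{K} B_{(i,j),k}\theta_{k,m}$ versus $\sum_{k=1}^{K}\beta_{i,k}\theta^{\text{TM}}_{k,m}$. First I would make explicit that the outcome space of a single comparison, the set of ordered pairs $(i,j)$, is in bijection with a vocabulary of $W = Q(Q-1)$ words, so that a comparison $w_{m,n}$ and a word $w_{m,n}^{\text{TM}}$ range over index sets of the same cardinality. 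Under this identification the matrix $\mathbf{B}$ plays the role of the topic matrix $\bm\beta$.

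Next I would verify the structural preconditions that make $\mathbf{B}$ a \emph{legitimate} topic matrix, so that the correspondence is not merely formal. A topic matrix must be column-stochastic: each column is a probability distribution over the $W$ words. For a fixed ranking $k$, summing $B_{(i,j),k} = \mu_{i,j}\sigma_{(i,j),k}$ over all ordered pairs $(i,j)$ gives $\sum_{\{i,j\}} \mu_{i,j}$, because for each \emph{unordered} pair $\{i,j\}$ exactly one of the two ordered orientations has $\sigma_{(i,j),k}=1$ (ranking $\sigma^k$ strictly prefers one of the two items). Since $\mu$ is a probability distribution over unordered pairs, this sum is $1$, so every column of $\mathbf{B}$ sums to one; together with nonnegativity this confirms column-stochasticity, matching the remark in the excerpt. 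I would also note that the mixing-weight prior is taken to be literally the same, $\Pr^{\text{TM}}(\theta) = \Pr(\theta)$, and that in both models the $\bm\theta_m$ are drawn i.i.d.\ across $m$ and the tokens are drawn conditionally i.i.d.\ across $n$ given $\bm\theta_m$.

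With these pieces in place, I would assemble the joint likelihood of an entire corpus/comparison-set and show the two agree term by term. Integrating out $\bm\theta_m$, the probability of user $m$'s data is $\int \prod_{n=1}^{N} p(w_{m,n}\mid\bm\theta_m,\mathbf{B})\,\Pr(\theta)\,d\theta$, and the analogous expression for the topic model differs only by the substitutions $w_{m,n}^{\text{TM}}\!\leftrightarrow\! w_{m,n}$, $\bm\beta\!\leftrightarrow\!\mathbf{B}$, $\Pr^{\text{TM}}\!\leftrightarrow\!\Pr$. Because each factor is equal by~\eqref{eq:distribution}–\eqref{eq:topicmodel} and the priors and independence structure coincide, the integrands are identical and hence the marginal likelihoods are equal; multiplying over the i.i.d.\ users $m=1,\ldots,M$ gives equality of the full joint distributions. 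The one subtlety worth stating carefully, and the closest thing to an obstacle, is the two-stage sampling in the ranking model: a pair $\{i,j\}$ is first drawn from $\mu$ and then the token $z_{m,n}$ determines the orientation. I would show that marginalizing over $z_{m,n}$ collapses this two-stage process into exactly the single mixture probability in~\eqref{eq:distribution}, so that the latent pair-selection and ranking-token variables integrate out to leave the same per-word law as the topic model. Once that reduction is made explicit, statistical equivalence follows, and any estimator consistent for $\bm\beta$ in the topic model is consistent for $\mathbf{B}=\mathbf{P}\bm\sigma$, from which $\bm\sigma$ is recovered since $\mathbf{P}$ is a known positive diagonal matrix.
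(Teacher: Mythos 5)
Your proposal is correct and follows essentially the same route as the paper: verify that $\mathbf{B}$ is column-stochastic (the paper does this exactly as you do, in a separate appendix proposition, using $\sigma_{(i,j),k}+\sigma_{(j,i),k}=1$ and $\sum_{\{i,j\}}\mu_{i,j}=1$), then equate the joint likelihoods by integrating out $\bm{\theta}_m$ and factoring over users and tokens via \eqref{eq:distribution}--\eqref{eq:topicmodel}. The only difference is that you spell out a few steps the paper leaves implicit, such as the bijection between ordered pairs and vocabulary words and the marginalization over $z_{m,n}$ and the pair-selection variable.
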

\begin{proof}  
Note that since $\mathbf{B}$ is column stochastic, it is a valid topic
matrix.
We need to show that the distribution on the comparisons
$\mathbf{w}=\{w_{m,n}\} $ and on the words in topic model
$\mathbf{w}^{\text{TM}}=\{w^{\text{TM}}_{m,n}\}$ are the same. From
\eqref{eq:distribution} \eqref{eq:topicmodel},
\vspace*{-2ex}
\begin{align*}
p(\mathbf{w} \vert \mathbf{B} ) & = \prod_{m=1}^{M} \int
p(w_{m,1},\ldots, w_{m,N}\vert \bm{\theta}_m, \mathbf{B})
\Pr(\bm{\theta}_m) d \bm{\theta}_m \\
& = \prod_{m=1}^{M} \int \left( \prod_{n=1}^{N}
\sum_{k=1}^{K}B_{w_{m,n},k}\theta_{k,m} \right) \Pr(\bm{\theta}_m) d
\bm{\theta}_m \\
%
%
&= p(\mathbf{w}^{\text{TM}} \vert \bm{\beta}).
\end{align*}
\vspace*{-2ex}
\end{proof}
Note that ${\bm{B}}=\mathbf{P} \bm{\sigma}$, $\mu_{i,j} = \mu_{j,i}$,
and ${\sigma}_{(i,j),k}+{\sigma}_{(j,i),k} = 1$. Hence $\bm{\sigma}$
can be inferred directly from $\mathbf{B}$:
\begin{equation}
\label{eq:sigma}
\sigma_{(i,j),k} = \frac{\sigma_{(i,j),k}\mu_{i,j} }{(
  \sigma_{(i,j),k}+ \sigma_{(j,i),k})\mu_{i,j}} = \frac{
  {B_{(i,j),k}}}{ B_{(i,j),k}+B_{(j,i),k}}
\end{equation}
Thus, the problem of estimating the ranking matrix $\bm{\sigma}$ can
be solved by any approach that can learn the topic matrix
$\bm{\beta}$.
Our approach is to leverage recent works in topic modeling
\citep{A12:ref,Arora2:ref,DDP:ref, Ding14:ref} that come with
consistency and statistical and computational efficiency guarantees by exploiting the second-order moments of the columns of $\mathbf{X}$, i.e., a  co-occurrence matrix of pairwise comparisons. 
We can establish parallel results for ranking model via the equivalency result of Lemma~\ref{lem:statequivalent}. 
In particular, by combining Lemma~\ref{lem:statequivalent} with results
in \citep[][Lemma 1 in Appendix]{DDP:ref}, the following result can be
immediately established:
\begin{lemma}
\label{lem:2ndOrder1}
If $\widetilde{\mathbf{X}}$ and $\widetilde{\mathbf{X}}^{\prime}$ are
obtained from $\mathbf{X}$ by first splitting each user's comparisons
into two independent copies and then re-scaling the rows to make them
row-stochastic, then
\vspace*{-1ex}
\begin{equation}
M \widetilde{\mathbf{X}}^{\prime} \widetilde{\mathbf{X}}^{\top}
\xrightarrow[\mbox{almost surely}]{M \rightarrow\infty}
\bar{\bm{B}}\bar{\mathbf{R}} \bar{\bm{B}}^{\top} =: \mathbf{E},
\vspace*{-1ex}
\end{equation}
where 
$\bar{\bm{B}} = \diag^{-1}(\bm{B}\mathbf{a})\bm{B}\diag(\mathbf{a})$,
$\bm{B} = \mathbf{P}\bm{\sigma}$,
$\bar{\mathbf{R}} = \diag^{-1}(\mathbf{a})
\mathbf{R}\diag^{-1}(\mathbf{a})$, and
$\mathbf{a}$ and $\mathbf{R}$ are, respectively, the $K\times 1$
expectation and $K\times K$ correlation matrix of the weight vector
$\bm{\theta}_m$.
\end{lemma}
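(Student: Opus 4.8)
The plan is to transfer the problem to the equivalent topic model via Lemma~\ref{lem:statequivalent} and then invoke the second-order moment analysis of \citep{DDP:ref}. Because Lemma~\ref{lem:statequivalent} asserts that the law of the full collection of comparisons $\mathbf{w}=\{w_{m,n}\}$ is \emph{identical} to the law of the words $\mathbf{w}^{\text{TM}}$ generated by a topic model with topic matrix $\bm{\beta}=\mathbf{B}$ and prior $\Pr(\theta)$, every statistic built deterministically from $\mathbf{X}$ has the same distribution as the corresponding statistic of the topic-model word-document matrix. In particular the split-and-rescale operation, the matrix product, and the multiplication by $M$ all commute with this equivalence, so it suffices to establish the claimed limit for a topic model and then substitute $\bm{\beta}=\mathbf{B}$.

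First I would record the structure that makes the limit computable. The users are i.i.d.\ (each $\bm\theta_m$ is an independent draw from $\Pr(\theta)$), and, conditioned on $\bm\theta_m$, the $N$ comparisons of user $m$ are i.i.d.\ with the per-token law \eqref{eq:distribution}. Hence after splitting user $m$'s comparisons into two groups, the two resulting empirical count vectors are \emph{conditionally independent} given $\bm\theta_m$, each with conditional mean proportional to $\mathbf{B}\bm\theta_m$. Writing $M\widetilde{\mathbf{X}}^{\prime}\widetilde{\mathbf{X}}^{\top}$ entrywise exhibits it as a sum over users of a per-user outer product divided by the shared row-normalizing totals; the row totals $\tfrac1M\sum_{m}X_{(i,j),m}$ converge almost surely to $N(\mathbf{B}\mathbf{a})_{(i,j)}$, which is strictly positive because $\mu_{i,j}>0$ forces every $(\mathbf{B}\mathbf{a})_{(i,j)}$ to be positive and thus makes the diagonal rescalings well defined.

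Next I would apply the strong law of large numbers to the i.i.d.\ per-user terms. Conditional independence of the two split copies lets the conditional expectation factor, so the per-user expectation of the (unnormalized) outer product equals $\eE_{\bm\theta}[(\mathbf{B}\bm\theta)(\mathbf{B}\bm\theta)^{\top}]=\mathbf{B}\,\eE[\bm\theta\bm\theta^{\top}]\,\mathbf{B}^{\top}=\mathbf{B}\mathbf{R}\mathbf{B}^{\top}$. Combining this with the a.s.\ limits of the row totals collapses the whole expression to $\diag^{-1}(\mathbf{B}\mathbf{a})\,\mathbf{B}\mathbf{R}\mathbf{B}^{\top}\,\diag^{-1}(\mathbf{B}\mathbf{a})$, and inserting the identity $\diag(\mathbf{a})\diag^{-1}(\mathbf{a})$ between $\mathbf{B}$ and $\mathbf{R}$ (and symmetrically on the right) regroups this exactly as $\bar{\bm{B}}\bar{\mathbf{R}}\bar{\bm{B}}^{\top}=\mathbf{E}$, with $\bar{\bm{B}}$ and $\bar{\mathbf{R}}$ as defined in the statement. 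This computation is precisely the content of Lemma~1 of \citep{DDP:ref} for a generic topic matrix $\bm\beta$, so the cleanest route is to cite it verbatim for $\bm\beta=\mathbf{B}$ rather than redo the bookkeeping.

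I expect the main obstacle to be conceptual rather than computational: Lemma~\ref{lem:statequivalent} only guarantees equality \emph{in distribution}, whereas the claim is \emph{almost-sure} convergence, which is a pathwise statement. The resolution I would use is that the equivalence is established per user and preserves both the i.i.d.\ structure across users and the conditional-independence structure within each user; since the SLLN depends only on these distributional facts, the argument of \citep{DDP:ref} runs verbatim on the ranking model and delivers a.s.\ convergence to the same deterministic matrix $\mathbf{E}$. The only other point needing care is matching the row-stochastic normalization convention so that the limit factors with a \emph{row-stochastic} $\bar{\bm{B}}$, which is what the downstream separability-based recovery requires; this is exactly why the $\diag^{-1}(\mathbf{B}\mathbf{a})$ factor appears and why $\mathbf{a}=\eE[\bm\theta_m]$ (rather than some other normalization) is the quantity that enters.
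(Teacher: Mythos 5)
Your proposal is correct and follows essentially the same route as the paper: the main text derives the lemma by combining the statistical equivalence of Lemma~\ref{lem:statequivalent} with the second-order result of \citep{DDP:ref}, and the appendix carries out exactly the entrywise computation you sketch (writing each entry of $M\widetilde{\mathbf{X}}^{\prime}\widetilde{\mathbf{X}}^{\top}$ as a ratio of empirical averages, applying the SLLN so the numerator tends to $(\mathbf{B}\mathbf{R}\mathbf{B}^{\top})_{i,j}$ and the denominators to $(\mathbf{B}\mathbf{a})_i(\mathbf{B}\mathbf{a})_j$, then regrouping with $\diag(\mathbf{a})\diag^{-1}(\mathbf{a})$). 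The only thing the paper's appendix adds beyond your argument is an explicit finite-sample convergence rate via McDiarmid's inequality, which the lemma as stated does not require.
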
 

\section{A Geometric Perspective}
\vspace*{-2ex}
\label{sec:idea}
\begin{figure}[!htb]
\vglue -1ex
\centering{
\includegraphics[width=0.9\linewidth]{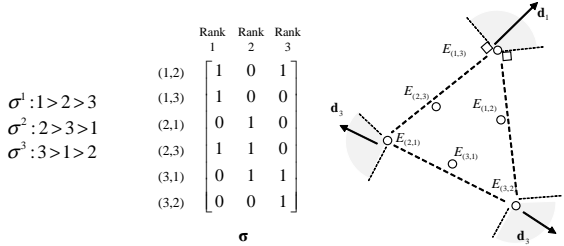}}
\vglue -1ex
\caption{\small{A separable ranking matrix $\bm{\sigma}$ with $K=3$
    rankings over $Q=3$ items, and the underlying geometry of the row
    vectors of $\mathbf{E}$. $(1,3),(2,1),(3,2)$ are novel
    pairs. Shaded regions depict the solid angles of the extreme
    points. }}
\label{fig:extreme}
\end{figure}
%
%
The key insight of our approach is an intriguing geometric property of
the normalized second-order moment matrix $\mathbf{E}$ (defined in
Lemma~\ref{lem:2ndOrder1}) illustrated in Fig.~\ref{fig:extreme}.
%
%
This arises from the so-called {\it separability condition} on the
ranking matrix $\bm{\sigma}$,
\begin{definition}
\label{separable_condition}
A ranking matrix $\bm{\sigma}$ is separable if for each ranking $k$,
there is at least one ordered pair $(i,j)$, such that
$\sigma_{(i,j),k}>0$ and $\sigma_{(i,j),l}=0$, $\forall ~ l\neq k$.
\end{definition}
In other words, for each ranking, there exists at least one ``novel''
pair of items $\{i,j\}$ such that $i$ is uniquely preferred over $j$
in that ranking while $j$ is ranked higher than $i$ in all the other
rankings.
%
Figure~\ref{fig:extreme} shows an example of a separable ranking
matrix in which the ordered pair $(1,3)$ is novel to ranking
$\bm{\sigma}^1$, the pair $(2,1)$ to $\bm{\sigma}^2$, and the pair
$(3,2)$ to $\bm{\sigma}^3$.

The separability condition has been identified as a good approximation
for real-world datasets in nonnegative matrix factorization
\citep{Donhunique:ref} and topic modeling \citep{Arora2:ref,
  Ding14:ref}, etc.
In the context of ranking, this condition has appeared, albeit
implicitly in a different form, in the seminal works of
\citep{Jagabathula08:ref, Farias09:ref}.
Moreover, as shown in \citep{Farias09:ref}, the separability condition
is satisfied with high probability when the $K \ll Q$ underlying
rankings are sampled uniformly from the set of all $Q!$ permutations.
In our experiments we have observed that the ranking matrix induced by
the rating matrix estimated by matrix factorization is often
separable (Sec.~\ref{sec:semi}).

%
If $\bm{\sigma}$ is separable then the novel pairs correspond to
extreme points of the convex hull formed by all the row vectors of
$\mathbf{E}$ (Fig.~\ref{fig:extreme}).
Thus, the novel pairs can be efficiently identified through an extreme
point finding algorithm.
Once all the novel pairs 
are identified, the ranking matrix can be estimated using a
constrained linear regression \citep{Arora2:ref, Ding14:ref}.
To exclude redundant rankings and ensure unique identifiability, we
assume $\mathbf{R}$ has full rank.

We leverage
the normalized {\bf Solid Angle} subtended by extreme points to detect
the novel pairs as proposed in \citep[][Definition 1]{Ding14:ref}.
The solid angles are indicated by the shaded regions in
Fig.~\ref{fig:extreme}.  From a statistical viewpoint, it can be
defined as the probability that a row vector $\mathbf{E}_{(i,j)}$ has
the maximum projection value along an isotropically distributed random
direction $\mathbf{d}$:
\begin{align}
\label{eq:solidangle}
\nonumber q_{(i,j)} \triangleq p \{ & \forall (s,t):
\mathbf{E}_{(i,j)}\neq \mathbf{E}_{(s,t)}, \\
& \qquad \langle \mathbf{E}_{(i,j)}, \mathbf{d} \rangle > \langle
\mathbf{E}_{(s,t)}, \mathbf{d} \rangle \}
\end{align}
%
These can be efficiently approximated using a few iid isotropic
$\mathbf{d}$'s. 
By following the approach in \citep[][Lemma 2]{Ding14:ref} for topic modeling, one
can prove the following result which shows that the solid angles can be used to detect novel pairs:
\begin{lemma}
\label{lem:solidangle}
Suppose $\bm{\sigma}$ is separable and $\mathbf{R}$ is full rank,
then, $q_{(i,j)} > 0$ if and only if $(i,j)$ is a novel pair.
\end{lemma}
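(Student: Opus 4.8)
The plan is to reduce the statement to a purely geometric characterization of extreme points and then to identify those extreme points with the novel pairs. First, I would combine the statistical equivalence of Lemma~\ref{lem:statequivalent} with the solid-angle characterization of \citep[][Lemma 2]{Ding14:ref}: for the normalized second-order moment matrix $\mathbf{E}$, the solid angle $q_{(i,j)}$ is strictly positive if and only if the row $\mathbf{E}_{(i,j)}$ is an extreme point of the convex hull $\mathrm{conv}\{\mathbf{E}_{(s,t)}\}$ of all the row vectors. This holds because an isotropic direction $\mathbf{d}$ attains its strict maximum over a finite point set on a vertex of the convex hull with positive probability, whereas any point lying in the convex hull of the others satisfies $\langle \mathbf{E}_{(i,j)},\mathbf{d}\rangle \le \max_{(s,t)}\langle \mathbf{E}_{(s,t)},\mathbf{d}\rangle$ and so wins with probability zero. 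This step turns the claim into the equivalent assertion that the extreme points of $\mathrm{conv}\{\mathbf{E}_{(s,t)}\}$ are exactly the rows indexed by novel pairs.

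Next I would show that every row of $\mathbf{E}$ is a convex combination of the $K$ rows associated with novel pairs, so that no non-novel pair can be extreme. Using the form of $\mathbf{E}$ from Lemma~\ref{lem:2ndOrder1}, write $\mathbf{E}_{(i,j)} = \bar{B}_{(i,j)}\,\bar{\mathbf{R}}\,\bar{\bm{B}}^{\top}$, where $\bar{B}_{(i,j)}$ is the $(i,j)$-th row of $\bar{\bm{B}}$. The key observation is that $\bar{\bm{B}} = \diag^{-1}(\bm{B}\mathbf{a})\bm{B}\diag(\mathbf{a})$ is row-stochastic and has the same row-support as $\bm{\sigma}$, so separability forces the row of any novel pair $(i_k,j_k)$ to normalize to the standard basis vector, i.e. $\bar{B}_{(i_k,j_k)} = \mathbf{e}_k^{\top}$. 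Since $\bar{B}_{(i,j)}$ lies in the probability simplex, it equals $\sum_k \bar{B}_{(i,j),k}\mathbf{e}_k$, and applying the linear map $v\mapsto v\,\bar{\mathbf{R}}\,\bar{\bm{B}}^{\top}$ gives $\mathbf{E}_{(i,j)} = \sum_k \bar{B}_{(i,j),k}\,\mathbf{E}_{(i_k,j_k)}$. A non-novel pair has at least two nonzero weights $\bar{B}_{(i,j),k}$, so its row is a strict convex combination of the novel rows and hence is not extreme.

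Finally I would establish the converse, that each of the $K$ novel rows genuinely is an extreme point, which is where the full-rank hypothesis enters and which I expect to be the crux. The novel rows are exactly the $K$ rows of $\bar{\mathbf{R}}\,\bar{\bm{B}}^{\top}$, so it suffices to prove that these $K$ vectors are linearly (hence affinely) independent and thus form the vertices of a genuine $(K-1)$-simplex. This follows because $\bar{\mathbf{R}} = \diag^{-1}(\mathbf{a})\mathbf{R}\diag^{-1}(\mathbf{a})$ is invertible whenever $\mathbf{R}$ is full rank and $\mathbf{a}$ has strictly positive entries (positivity of $\mathbf{a}$ itself follows from full rank of $\mathbf{R}$, since $a_k=0$ would force the $k$-th row of $\mathbf{R}$ to vanish), while $\bar{\bm{B}}^{\top}$ has rank $K$ precisely because $\bar{\bm{B}}$ contains the rows $\mathbf{e}_1,\dots,\mathbf{e}_K$ supplied by the novel pairs; hence $\bar{\mathbf{R}}\,\bar{\bm{B}}^{\top}$ has rank $K$. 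Affine independence then guarantees both that each novel row is an extreme point and that the strict convex combinations from the previous paragraph never collapse onto a vertex. The main obstacle is this propagation of the rank condition through the two normalizations; a secondary subtlety I would address is the degenerate rows corresponding to pairs never preferred in any ranking (all-zero rows of $\bm{\sigma}$), which must be handled by the row-normalization convention so that they do not introduce spurious extreme points. Combining the two inclusions yields $q_{(i,j)}>0$ if and only if $(i,j)$ is a novel pair.
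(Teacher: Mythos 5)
Your proposal is correct and follows essentially the same route as the paper, which defers this result to \citep[Lemma~2]{Ding14:ref} but whose supporting machinery (Proposition~\ref{prop:similarity} in the appendix) rests on exactly the facts you use: $\bar{\bm{B}}$ is row-stochastic with $\bar{\bm{B}}_{(i_k,j_k)}=\mathbf{e}_k^{\top}$ for novel pairs, so every row of $\mathbf{E}$ is a convex combination of the $K$ novel rows, and full rank of $\bar{\mathbf{R}}$ (equivalently the $\gamma$-simplicial property) makes those $K$ rows affinely independent, identifying extreme points with novel pairs and positive solid angle with extremality. Your observations that positivity of $\mathbf{a}$ follows from full rank of $\mathbf{R}$ and that all-zero rows of $\bm{\sigma}$ need a normalization convention are valid refinements consistent with the paper's implicit assumption $\eta=\min_w[\mathbf{B}\mathbf{a}]_w>0$.
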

%
This motivates the following solution approach:
$(1)$ Estimate the solid angles $q_{(i,j)}$,
$(2)$ Select $K$ distinct pairs with largest $q_{(i,j)}$'s,
and $(3)$ Estimate the ranking matrix $\bm{\sigma}$ using constrained
linear regression.


Given the estimated ranking matrix $\bm{\sigma}$ (and $\mathbf{B}$),
we follow the typical steps in topic modeling
\citep{Blei2012Review:ref} to fit the ranking prior, infer
user-specific preferences $\bm{\theta}_m$, and predict new comparisons
(see Sec.~\ref{sec:exp}).
%
\section{Algorithm and Analysis}
\label{sec:alg}
\vspace*{-1ex}
The main steps of our approach are outlined in Algorithm
~\ref{alg:highlevel}
and expanded in detail in Algorithms~\ref{alg:rp},~\ref{alg:esttopic}
and ~\ref{alg:post}.
Algorithm~\ref{alg:rp} detects all the novel pairs for the $K$
distinct rankings.
Once the novel pairs are identified, Algorithm~\ref{alg:esttopic}
estimates matrix $\mathbf{B}$ using constrained linear regression
followed by row and then column scaling.

Algorithm~\ref{alg:post} further processes $\widehat{\mathbf{B}}$ to
obtain an estimate of the ranking matrix $\bm{\sigma}$.
Step 1 is based on Eq.~\eqref{eq:sigma} and step 2 further rounds each
element to $0$ or $1$. Algorithm~\ref{alg:post} guarantees that
$\widehat{\bm{\sigma}}$ is binary and satisfies the condition:
$\widehat{\sigma}_{(i,j),k} + \widehat{\sigma}_{(j,i),k} =1$ for all
$i\neq j$ and all $k$.

\begin{algorithm}[!htb]
\caption{Ranking Recovery (Main Steps)}
\label{alg:highlevel}
\begin{algorithmic}[1]
\REQUIRE Pairwise comparisons $\widetilde{\mathbf X}$,
$\widetilde{\mathbf X}^{\prime} (W\times M)$; Number of rankings $K$;
Number of projections $P$; Tolerance parameters $\zeta,\epsilon >0$.
\ENSURE Ranking matrix estimate $\widehat{\bm{\sigma}}$.
\STATE Novel Pairs
$\mathcal{I}\leftarrow$NovelPairDetect($\widetilde{\mathbf
  X},\widetilde{\mathbf X}^{\prime}, K, \numofproj, \zeta$)
\STATE $\widehat{\mathbf{B}} \leftarrow$EstimateRankings($\mathcal{I},
\mathbf{X}, \epsilon$)
\STATE $\widehat{\bm{\sigma}} \leftarrow$PostProcess($\widehat{\mathbf{B}}$)
\end{algorithmic}
\end{algorithm}
%
%
\begin{algorithm}
\caption{NovelPairDetect (via Random Projections)}
\label{alg:rp}
\begin{algorithmic}
\REQUIRE $\widetilde{\mathbf X}$, $\widetilde{\mathbf X}^{\prime}$;
number of rankings $K$; number of projections $P$; tolerance $\zeta$;
\ENSURE $\mathcal{I}$: The set of all novel pairs of $K$ distinct rankings.
\STATE $\widehat{\mathbf{E}} \leftarrow M \widetilde{\mathbf
  X}^{\prime}\widetilde{\mathbf X}^{\top}$
\STATE $\forall (i,j)$, $\mathcal{J}_{(i,j)} \leftarrow \lbrace (s,t):
\widehat{E}_{(i,j),(i,j)}
-2\widehat{E}_{(i,j),(s,t)}+\widehat{E}_{(s,t),(s,t)} \geq \zeta/2
\rbrace$,
\FOR {$r=1,\ldots,P$}
\STATE Sample ${\mathbf d}_r \in\rR^{W}$ from an isotropic prior
	\STATE $\hat{q}_{(i,j),r} \leftarrow \mathbb{I}\lbrace \forall
        (s,t)\in\mathcal{J}_{(i,j)}, ~ \widehat{\mathbf{E}}_{(s,t)}
               {\mathbf d}_r \leq \widehat{\mathbf{E}}_{(i,j)}
               {\mathbf d}_r \rbrace$ , $\forall (i,j)$
\ENDFOR
\STATE $\hat{q}_{(i,j)} \leftarrow \frac{1}{P} \sum_{r = 1}^{{P}}
\hat{q}_{(i,j), r}$, $\forall (i,j)$
\STATE $k \leftarrow 0$,$l \leftarrow 1$, and $\mathcal{I}
\leftarrow\emptyset$
\WHILE {$k \leq K$}
\STATE $(s,t) \leftarrow$ index of the $l^{\text{th}}$ largest
value among $\hat{q}_{(i,j)}$'s
\IF {$(s,t) \in \bigcap_{(i,j)\in\mathcal{I}} \mathcal{J}_{(i,j)}$}
\STATE {$\mathcal{I} \leftarrow \mathcal{I} \cup \{(s,t)\}$, $~~k
\leftarrow k + 1$}
\ENDIF
\STATE $l \leftarrow l + 1$
\ENDWHILE
\end{algorithmic}
\end{algorithm}
%
%
\begin{algorithm}[!htb]
\caption{Estimate Rankings}
\label{alg:esttopic}
\begin{algorithmic}
\REQUIRE $\mathcal{I} = \{(i_1,j_1),\ldots, (i_K, j_K)\}$ the set of
novel pairs of $K$ rankings; ${\mathbf X}$, ${\mathbf X}^{\prime}$;
precision $\epsilon$
\ENSURE $\widehat{{\bm{B}}}$ as the estimate of ${\bm{B}}$. 
\STATE 
$
{\mathbf Y} = (\widetilde{\mathbf X}_{(i_1,j_1)}^{\top}, \ldots,
\widetilde{\mathbf X}_{(i_K,j_K)}^{\top})^{\top},$
\STATE
$
{\mathbf Y^{\prime}} = (\widetilde{\mathbf
  X}_{(i_1,j_1)}^{{\prime}\top}, \ldots, \widetilde{\mathbf
  X}_{(i_K,j_K)}^{{\prime}\top})^{\top}
$
\FORALL {$(i,j)$ pairs}
\STATE Solve $\widehat{\bm{\beta}}_{(i,j)} \leftarrow \argmin\limits_{\mathbf{b}} M
       (\widetilde{\mathbf X}_{(i,j)} - {\mathbf b} {\mathbf Y})
       (\widetilde{\mathbf X}^{\prime}_{(i,j)} - {\mathbf b} {\mathbf
         Y}^{\prime})^{\top} $
\STATE Subject to $b_k \geq 0, \sum_{k=1}^{K} b_k = 1$, With precision
$\epsilon$
\STATE $\widehat{\bm{\beta}}_{(i,j)} \leftarrow (\frac{1}{M} {\mathbf X}_{(i,j)} {\mathbf 1}) \widehat{\bm{\beta}}_{(i,j)}$
\ENDFOR
\STATE $\widehat{\bm{B}} \leftarrow$column normalize $\widehat{\bm{
    \beta}}$
\end{algorithmic}
\end{algorithm}
%
%
\begin{algorithm}
\caption{Post Processing}
\label{alg:post}
\begin{algorithmic}[1]
\REQUIRE $\widehat{\mathbf{B}}$ as the estimate of $\mathbf{B}$
\ENSURE $\widehat{\bm{\sigma}}$ as the estimate of $\bm{\sigma}$
\STATE $\widehat{{\sigma}}_{(i,j),k}\leftarrow
\frac{\widehat{{B}}_{(i,j),k}}{\widehat{{B}}_{(i,j),k} +
  \widehat{{B}}_{(j,i),k}}$, $\forall i,j\in\mathcal{U}, \forall k$
\STATE $\widehat{{\sigma}}_{(i,j),k} \leftarrow
\text{Round}[\widehat{{\sigma}}_{(i,j),k}]$, $\forall
i,j\in\mathcal{U}, \forall k$
\end{algorithmic}
\end{algorithm}

Our approach inherits the polynomial computational complexity of the
topic modeling algorithm in \cite{Ding14:ref}:
\begin{theorem}
\label{thm:computation}
The running time of Algorithm~\ref{alg:highlevel} is $\mathcal{O}(MNK
+ Q^{2}K^{3})$.
\end{theorem}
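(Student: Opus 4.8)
The plan is to bound the running time by analyzing the three subroutines invoked by Algorithm~\ref{alg:highlevel} separately and summing the results, relying throughout on two structural facts. First, the comparison matrices $\widetilde{\mathbf{X}}$ and $\widetilde{\mathbf{X}}'$ together have only $\mathcal{O}(MN)$ nonzero entries, since each of the $M$ users contributes exactly $N$ comparison tokens and hence at most $N$ nonzeros per column. Second, $W = Q(Q-1) = \mathcal{O}(Q^{2})$, so any cost of the form ``$\mathcal{O}(W)$ times a function of $K$'' turns into a $Q^{2}$-in-$K$ term. The goal is to show these two facts are exactly what generate the $MNK$ and $Q^{2}K^{3}$ terms, respectively.

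For NovelPairDetect (Algorithm~\ref{alg:rp}), the key point is that one never materializes the dense $W\times W$ matrix $\widehat{\mathbf{E}}$, which by itself would cost $\mathcal{O}(Q^{4})$. Every use of $\widehat{\mathbf{E}}$ is through a matrix-vector product, and by associativity $\widehat{\mathbf{E}}\mathbf{d}_r = M\,\widetilde{\mathbf{X}}'(\widetilde{\mathbf{X}}^{\top}\mathbf{d}_r)$ is evaluated as two sparse multiplications, each costing $\mathcal{O}(MN)$; the diagonal entries $\widehat{E}_{(i,j),(i,j)}$ needed to form the sets $\mathcal{J}_{(i,j)}$ are computable in $\mathcal{O}(MN)$ total as well. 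Over the $P$ projections this gives $\mathcal{O}(PMN)$, and the remaining bookkeeping---averaging to obtain the $\hat{q}_{(i,j)}$, sorting the $W$ values, and the intersection tests in the while loop---contributes only lower-order $\mathcal{O}(W\log W + WK)$ terms. Under the projection count $P$ used for the consistency guarantee, this stage stays within the claimed order.

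For EstimateRankings (Algorithm~\ref{alg:esttopic}) I would separate each constrained regression into assembling its quadratic objective and solving it. The objective in $\mathbf{b}\in\rR^{K}$ is determined by the shared $K\times K$ matrix $\mathbf{Y}{\mathbf{Y}'}^{\top}$, formed once, together with the length-$K$ vectors $\widetilde{\mathbf{X}}_{(i,j)}{\mathbf{Y}'}^{\top}$ and $\widetilde{\mathbf{X}}'_{(i,j)}\mathbf{Y}^{\top}$. Since the total number of nonzeros across all rows $\widetilde{\mathbf{X}}_{(i,j)}$ is $\mathcal{O}(MN)$ and each nonzero participates in $K$ inner products, forming these linear terms over all $W$ pairs costs $\mathcal{O}(MNK)$ in aggregate---this is precisely the origin of the $MNK$ term. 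Solving each constrained least-squares problem over the simplex in $K$ variables then costs $\mathrm{poly}(K)$, which I would record as $\mathcal{O}(K^{3})$ per pair, so across the $W = \mathcal{O}(Q^{2})$ pairs this yields the $\mathcal{O}(Q^{2}K^{3})$ term. Finally, PostProcess (Algorithm~\ref{alg:post}) merely rescales and rounds the $\mathcal{O}(WK) = \mathcal{O}(Q^{2}K)$ entries of $\widehat{\mathbf{B}}$ and is dominated by the preceding terms.

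Summing the three contributions gives $\mathcal{O}(MNK + Q^{2}K^{3})$. The main obstacle I anticipate is the honest accounting of NovelPairDetect: one must argue carefully that the sets $\mathcal{J}_{(i,j)}$ and all projection comparisons are carried out without ever forming the $\mathcal{O}(Q^{4})$-size matrix $\mathbf{E}$, and that both the projection count $P$ and the per-pair simplex-constrained solve are genuinely polynomial and fit inside the stated bound rather than concealing a larger dependence on $K$ or $Q$.
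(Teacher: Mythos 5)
Your overall decomposition --- an $\mathcal{O}(MNK)$ term coming from $P=\mathcal{O}(K)$ sparse passes over the $\mathcal{O}(MN)$ nonzeros of $\widetilde{\mathbf{X}},\widetilde{\mathbf{X}}'$ together with the assembly of the regression objectives, and an $\mathcal{O}(Q^{2}K^{3})$ term coming from $W=Q(Q-1)$ simplex-constrained solves in $K$ variables --- is exactly the accounting the theorem intends. The paper itself does not spell this out: its entire proof is a deferral to Propositions 1 and 2 of \cite{Ding14:ref}, plus the remark that Algorithm~\ref{alg:post} costs $\mathcal{O}(WK)$ and is dominated. So your write-up is considerably more self-contained than the paper's, and the EstimateRankings and PostProcess portions of your argument are sound.

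There is, however, a concrete gap, and it sits precisely where you flagged the danger. You claim the sets $\mathcal{J}_{(i,j)}$ can be formed from the diagonal of $\widehat{\mathbf{E}}$ in $\mathcal{O}(MN)$ total time, but membership of $(s,t)$ in $\mathcal{J}_{(i,j)}$ is decided by $\widehat{E}_{(i,j),(i,j)}-2\widehat{E}_{(i,j),(s,t)}+\widehat{E}_{(s,t),(s,t)}\geq\zeta/2$, which involves the \emph{off-diagonal} entry $\widehat{E}_{(i,j),(s,t)}$; there are $W^{2}=\Theta(Q^{4})$ such memberships, and the per-projection indicator $\hat{q}_{(i,j),r}$ as written quantifies over all of $\mathcal{J}_{(i,j)}$ for every $(i,j)$, so a literal implementation of Algorithm~\ref{alg:rp} costs $\Omega(W^{2})$ per projection, which is not dominated by $Q^{2}K^{3}$ unless $Q\lesssim K^{3/2}$. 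Closing this requires an additional idea (implicit in the cited analysis): after computing the $W$ projected values $\widehat{\mathbf{E}}_{(i,j)}\mathbf{d}_{r}$ in $\mathcal{O}(MN+W)$ time via the factored form, the indicator reduces to an argmax test in which only the near-duplicate cluster of the maximizer matters, so the $\mathcal{J}$ sets need to be materialized only for the $\mathcal{O}(K)$ candidates retained in the while loop, never for all $(i,j)$. Without some such argument the stated bound does not follow from your accounting. A secondary point: the $MNK$ term presupposes $P=\mathcal{O}(K)$ (as in the experiments, $P=150K$), whereas the $P$ demanded by the consistency guarantee of Theorem~2 scales as $q_{\wedge}^{-2}\log(W/\delta)$, not as $K$; you should state explicitly which regime of $P$ the running-time claim refers to rather than appealing to ``the projection count used for the consistency guarantee.''
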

\vspace*{-1ex}

We further derive the sample complexity bounds for our approach which
is also polynomial in all model parameters and $\log(1/\delta)$ where
$\delta$ is the upper bound on error probability.
A major technical improvement compared to the results that appear in
\cite{Ding14:ref} is that our analysis holds true for {\it any
  isotropic distribution} over the random directions $\mathbf{d}$ in
Alg.~\ref{alg:rp}.
The previous result in \cite[][Theorem 1, 2]{Ding14:ref} was designed only for
specific distributions such as spherical Gaussian.
Formally, 
\begin{theorem}
\label{thm:sample}
Let the ranking matrix $\bm{\sigma}$ be separable and $\mathbf{R}$
have full rank. Then the Algorithm~\ref{alg:highlevel} can
consistently recover $\bm{\sigma}$ up to a column permutation as the
number of users $M\rightarrow \infty$ and number of projections
$P\rightarrow \infty$.
Furthermore, for any isotropically drawn random direction
$\mathbf{d}$, $\forall \delta>0$, if
\begin{align*}
M \geq \max \Biggl\{ 40 \frac{\log(3W/\delta)}{ N \rho^2 \eta^4}, ~
320 \frac{W^{0.5} \log(3W/\delta)}{N \eta^6 \lambda_{\min}} \Biggr\}
\end{align*}
and
$\numofproj \geq 16 \frac{\log(3W/\delta)}{q_{\wedge}^2}$, 
then Algorithm~1 fails with probability at most $\delta$.  The other
model parameters are defined as $\eta = \min_{1\leq w \leq W}
[\mathbf{B}\mathbf{a}]_w$, $\rho =\min\{ \frac{d}{8}, \frac{\pi d_2
  q_{\wedge}}{4 W^{1.5}} \}$, $d_2 \triangleq (1-b)\lambda_{\min}$,
$d=(1-b)^{2}\lambda_{\min}^{2}/\lambda_{\max}$,
$b=\max_{j\in\mathcal{C}_0,k} \bar{B}_{j,k} $ and $\lambda_{\min}$,
$\lambda_{\max}$ are the minimum /maximum eigenvalues of
$\bar{\mathbf{R}}$.  $q_{\wedge}$ is the minimum solid angle of the
extreme points of the convex hull of the rows of $\mathbf{E}$.
\end{theorem}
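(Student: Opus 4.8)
The plan is to prove both claims by transporting the analysis of the topic-modeling algorithm of \cite{Ding14:ref} through the statistical equivalence of Lemma~\ref{lem:statequivalent}, and then upgrading it to arbitrary isotropic projection laws. The backbone is the factorization $\mathbf{E} = \bar{\bm{B}}\bar{\mathbf{R}}\bar{\bm{B}}^{\top}$ from Lemma~\ref{lem:2ndOrder1} together with the novel-pair characterization of Lemma~\ref{lem:solidangle}. I would organize the argument as: (i) a deterministic ``geometry is stable under perturbation'' lemma, then (ii) two concentration bounds, one for $\widehat{\mathbf{E}} := M\widetilde{\mathbf{X}}^{\prime}\widetilde{\mathbf{X}}^{\top}$ around $\mathbf{E}$ and one for the empirical solid angles around $q_{(i,j)}$, and finally (iii) a rounding argument showing that a sufficiently accurate $\widehat{\bm{B}}$ yields an exact $\widehat{\bm\sigma}$.

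For the \emph{consistency} claim I would first invoke Lemma~\ref{lem:2ndOrder1}: $\widehat{\mathbf{E}} \to \mathbf{E}$ almost surely as $M\to\infty$. By Lemma~\ref{lem:solidangle}, separability and full-rank $\mathbf{R}$ make $q_{(i,j)}>0$ exactly on the novel pairs, which are the $K$ extreme points of the convex hull of the rows of $\mathbf{E}$. As $P\to\infty$ the empirical angles $\hat q_{(i,j)}$, being averages of i.i.d.\ Bernoulli indicators, converge to $q_{(i,j)}$, so Algorithm~\ref{alg:rp} selects the correct $K$ novel pairs in the limit. Feeding the correct novel pairs to the constrained regression of Algorithm~\ref{alg:esttopic} recovers $\bar{\bm B}$ (hence $\mathbf{B}$ after rescaling) because full-rank $\bar{\mathbf{R}}$ makes the regression identifiable, and Algorithm~\ref{alg:post} then returns $\bm\sigma$ via Eq.~\eqref{eq:sigma}. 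The permutation ambiguity is intrinsic since the columns of $\bm\sigma$ are interchangeable.

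The \emph{finite-sample} bound decomposes the failure event into three pieces controlled separately. First, I would use a concentration argument for $\widehat{\mathbf{E}}$: after the row-stochastic rescaling and the splitting into two independent halves, each entry is an average of bounded terms over the $M$ users (with an extra $1/N$ gain from the $N$ per-user comparisons), so a Bernstein/Hoeffding bound gives an entrywise deviation below $\rho$ once $M\ge 40\log(3W/\delta)/(N\rho^2\eta^4)$, where $\eta=\min_w[\mathbf{B}\mathbf{a}]_w$ enters through the stability of the row normalization; the second, larger requirement $M\ge 320 W^{0.5}\log(3W/\delta)/(N\eta^6\lambda_{\min})$ controls the operator-norm error that propagates into the regression, whose conditioning is governed by $\lambda_{\min}$. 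Second, in the projection step each $\hat q_{(i,j),r}$ is Bernoulli and $\hat q_{(i,j)}$ concentrates around its mean by Hoeffding, so $P\ge 16\log(3W/\delta)/q_\wedge^2$ preserves the ordering between novel and non-novel angles, while the screen $\mathcal{J}_{(i,j)}$, a $\zeta/2$-separation test, discards near-duplicate rows so that the $K$ largest $\hat q$'s are exactly the novel pairs. Third, since the true $\sigma_{(i,j),k}\in\{0,1\}$, a regression error below the $1/2$ rounding margin in Eq.~\eqref{eq:sigma} makes Algorithm~\ref{alg:post} return $\bm\sigma$ exactly; a union bound over the $W$ rows and the three events yields overall failure probability at most $\delta$.

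The main obstacle, and the point of departure from \cite{Ding14:ref}, is establishing the perturbation stability of the solid angle \emph{uniformly over all isotropic laws} rather than only for the spherical Gaussian. The key observation I would exploit is that for any rotationally invariant $\mathbf{d}$ the normalized direction $\mathbf{d}/\|\mathbf{d}\|$ is uniform on $S^{W-1}$, so tie events $\{\langle\mathbf{E}_{(i,j)},\mathbf{d}\rangle=\langle\mathbf{E}_{(s,t)},\mathbf{d}\rangle\}$, which live on hyperplanes through the origin, carry zero probability; this is exactly what makes $q_{(i,j)}=0$ for non-novel pairs and decouples the analysis from the radial distribution of $\mathbf{d}$. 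The quantitative core is then a deterministic claim: a perturbation of the rows of $\mathbf{E}$ of size $\rho$ only shrinks each novel pair's maximizing cone by a spherical region of uniform measure $\mathcal{O}(\rho\, W^{1.5}/(d_2\,q_\wedge))$, so that choosing $\rho=\min\{d/8,\ \pi d_2 q_\wedge/(4W^{1.5})\}$ keeps the perturbed angle above $q_\wedge/2$. I expect bounding this cone deformation, i.e.\ relating the angular displacement of the hull's extreme rays to the Euclidean perturbation $\rho$ through the conditioning constants $d=(1-b)^2\lambda_{\min}^2/\lambda_{\max}$ and $d_2=(1-b)\lambda_{\min}$, to be the most delicate part, since it must hold simultaneously for every direction on the sphere and for every non-novel pair screened through $\mathcal{J}_{(i,j)}$.
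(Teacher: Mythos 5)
Your plan mirrors the paper's own proof almost step for step: the same decomposition into novel-pair detection (concentration of $\widehat{\mathbf{E}}$ via McDiarmid/Hoeffding, Hoeffding for the empirical solid angles, and a deterministic perturbation bound on the solid angle), followed by consistency of the constrained regression governed by $\lambda_{\min}$ and a $1/2$ rounding margin for Algorithm~4, with a union bound over the three failure events. Your key idea for general isotropic laws --- reducing to the uniform direction on the sphere so that the error event is a wedge whose probability is the dihedral angle over $2\pi$, then bounding that angle by the Euclidean perturbation via $d_2$ --- is exactly the paper's argument, so the proposal is correct and essentially identical in approach.
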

Detailed proofs are provided in the supplementary material. We combine
the analysis of Alg.~\ref{alg:post} and the re-scaling steps in
Alg.~\ref{alg:esttopic} in order to exploit the structural constraints
of the ranking model.  As a result, we obtain an improved sample
complexity bound for $M$ compared to \cite{Ding14:ref, Arora2:ref}
\vspace*{-1ex}
\section{Experimental Validation}
\label{sec:exp}
\vspace*{-1ex}
\subsection{Overview of Experiments and Methodology}
\vspace*{-2ex}
\label{sec:methodsandsetting}
We conduct experiments first on semi-synthetic dataset in order to
validate the performance of our proposed algorithm when the model
assumptions are satisfied, and then on real-world datasets in order to
demonstrate that the proposed model can indeed effectively capture the
variability that one encounters in the real world.
We focus on the collaborative filtering applications where population
heterogeneity and user inconsistency are the well-known
characteristics \citep[e.g.,][]{BPMF:ref}.
We use Movielens, a benchmark movie-rating dataset widely used in the
literature.\footnote{Another large benchmark, Netflix dataset is not
  available due to privacy issues. Movielens is currently available at
  \url{http://grouplens.org/datasets/movielens/}}
The rating-based data is selected due to its public availability and
widespread use, but we convert it to pairwise comparisons data and
focus on modeling from a ranking viewpoint. This procedure has been
suggested and widely used in the rank-\-aggregation literature
\citep[e.g.,][]{Lu11:ref,volkovs14a:ref}.
For the semi-synthetic datasets, we evaluate the {\bf reconstruction
  error} between the learned rankings $\widehat{\bm{\sigma}}$ and the
ground truth. We adopt the standard {\it Kendall's tau distance}
between two rankings.
For the real-world datasets where true parameters are not available,
we use the {\bf held-out log-likelihood}, a standard metric in ranking
prediction \citep{Lu11:ref} and in topic modeling
\cite{Wallach09:ref}.

In addition, we consider the standard task of rating prediction via
our proposed ranking model. Our aim here is to illustrate that our
model is suitable for real-word data. We do not optimize tuning
parameters in order to achieve the best result.
We measure the performance by {\bf root-mean-square-error} (RMSE)
which is the standard in literature\citep[e.g.,][]{BPMF:ref,
  Netflix:ref}.

The parameters of our algorithm are the same as in
\cite{Ding14:ref}. Specifically, the number of random projections $P =
150\times K$, the tolerance parameter $\zeta/2$ for Alg.~\ref{alg:rp}
is fixed at $0.01$ and the precision parameter $\epsilon = 10^{-4}$
for Alg.~\ref{alg:esttopic}.
\vspace*{-2ex}
\subsection{Semi-synthetic simulation}
\vspace*{-2ex}
\label{sec:semi}
We first use a {\it semi-synthetic} dataset to validate the
performance of our algorithm.
In order to match the dimensionality and other characteristics that
are representative of real-world examples,
we generate the semi-synthetic pairwise comparisons dataset using a
benchmark movie star-ratings dataset, Movielens.
The original dataset has approximately $1$ million ratings for $3952$
movies from $M=6040$ users. The ratings range from 1 star to 5 stars.
\begin{figure}[!htb]
\includegraphics[width=0.90\linewidth]{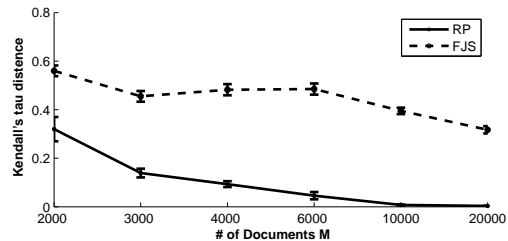}
\caption{{\small The normalized Kendall's tau distance error of the
    estimated rankings, as functions of $M$, estimated by RP and FJS
    from the semi-synthetic dataset with $Q=100, N=300, K=10$.}}
\label{fig:synthetic}
\vspace*{-2ex}
\end{figure}

We follow the procedure in \citep{Lu11:ref} and \citep{volkovs14a:ref}
to generate the semi-synthetic dataset as follows. We consider the
$Q=100$ most frequently rated movies and train a latent factor model
on the star-ratings data using a state-of-the-art matrix factorization
based algorithm \citep{BPMF:ref}.
This approach is selected for its state-of-the-art performance on many
real-world collaborative filtering tasks.
This procedure learns a $Q\times K$ movie-factor matrix whose columns
are interpreted as scores of the $Q$ movies over the $K$ latent
factors\citep{BPMF:ref,volkovs14a:ref}. By sorting the scores of each
column of the movie-factor matrix, we obtain $K$ rankings for
generating the semi-synthetic dataset. We set $K=10$ as suggested by
\cite{Lu11:ref} and \cite{BPMF:ref}. We note that the resulting
ranking matrix $\bm{\sigma}$ satisfies the separability condition.

The other model parameters are set as follows. $\mu_{i,j} =
1/{Q\choose 2}$, $\forall i,j \in\mathcal{U}$.
The prior distribution for $\bm{\theta}_m$ is set to be Dirichlet
$\Pr(\bm{\theta}_m \vert \bm{\alpha})
=\frac{1}{C}\prod\limits_{k=1}^{K} \theta_{k}^{\alpha_k -1}$ as
suggested by \citep{Lu11:ref}.
The parameters $\alpha_k$'s are determined by $\alpha_k = \alpha_0
a_k$, where the concentration parameter $\alpha_0 = 0.1$ and the
expectation $\mathbf{a}=[a_1,\ldots, a_K]^{\top}$ is sampled uniformly
from the $K=10$ dimensional simplex for each random realization.
We note that the correlation matrix $\mathbf{R}$ of the Dirichlet
distribution has full rank \citep{Arora2:ref}.
We fix $N = 300$ comparisons per user to approximate the observed
average pairwise comparisons in the Movielens dataset and vary $M$.

Since the output of our algorithm is determined only up to a column
permutation, we first align the columns of $\bm{\sigma}$ and
$\widehat{\bm{\sigma}}$ using bipartite matching based on $\ell_1$
distance, and then measure the performance by the $\ell_1$ distance
between the ground truth rankings $\bm{\sigma}$ and the estimate
$\widehat{\bm{\sigma}}$.
Due to the way $\bm{\sigma}$ is defined, this is equivalent to the
widely-used {\it Kendall's tau distance} between two rankings which is
proportional to the number of pairs in which two ranking schemes
differ.
We further normalize the $\ell_1$ error by $W=Q\times(Q-1)$ so that
the error measure for each column is a number between $[0,1]$.

We compare our proposed algorithm (denoted by RP) against the
algorithm proposed in \citep{Jagabathula08:ref,Farias09:ref} (denoted
by FJS) for estimating the ranking matrix. To the best of our
knowledge, this is the most recent algorithm with consistency
guarantees for $K>1$.\footnote{We show in the appendix that Alg.~FJS
  can be applied to our generative scheme since it only uses the first
  order statistics, and all the technical conditions are satisfied.}
We compared how the estimation error varies with the number of users
$M$, and the results are depicted in Fig.~\ref{fig:synthetic}.
For each setting, we average over $10$ Monte Carlo runs. Evidently,
our algorithm shows superior performance over FJS. More specifically,
since our ground truth ranking matrix is separable, as $M$ increases,
the estimation error of RP converges to zero, and the convergence is
much faster than FJS. We note that only for $M \geq 100,000$ does the
error of the FJS algorithm eventually start approaching 0.
\vspace*{-2ex}
\subsection{Movielens - Comparison prediction}
\vspace*{-2ex}
\label{sec:movie}
We apply the proposed algorithm (RP) to the real-world Movielens
dataset introduced in Sec.~\ref{sec:semi} and consider the task of
predicting pairwise comparisons.
We consider two settings: $(1)$ new comparison prediction, and $(2)$
new user prediction.
We train and evaluate our model using the comparisons 
obtained from the star-ratings of the Movielens dataset. This
procedure of generating comparisons from star-ratings is motivated by
\citep{Lu11:ref,volkovs14a:ref}.
We focus on the $Q=100$ most frequently rated movies and obtain a
subset of $183,000$ star-ratings from $M=5940$ users. The pairwise
comparisons are generated from the star ratings following
\citep{Lu11:ref, volkovs14a:ref}: for each user $m$, we {\bf select}
pairs of movies $i,j$ that user $m$ rated, and {\bf compare} the stars
of the two movies to generate comparisons.

To {\bf select} pairs of items to compare, we consider: $(a)$ (Full)
all pairs of movies that a user has rated, or $(b)$ (Partial) randomly
select $5 N_{star,m}$ pairs where $N_{start,m}$ is the number of
movies user $m$ has rated.

To {\bf compare} a pair of movies $i,j$ rated by a user, $w_{m,n} =
(i,j)$ if the star rating of $i$ is higher than $j$. For ties, we
consider: $(i)$(Both) generate $w_{m,1}=(i,j)$ and $w_{m,2}=(j,i)$,
$(ii)$ (Ignore) do nothing, and $(iii)$ (Random) select one of
$w_{m,1} ,w_{m,2}$ with equal probability.

{\bf New comparison prediction: } In this setting, for each user, a
subset of her ratings are used to generate the training comparisons
while the remaining are for testing comparisons.
We follow the training/testing split as in
\citep{BPMF:ref}.\footnote{The training/testing split is available at
  \url{http://www.cs.toronto.edu/~rsalakhu/BPMF.html}}
%
%
We convert both the training ratings and testing ratings into training
comparisons and testing comparisons independently.
We evaluate the performance by the predictive log-likelihood of the
testing data, i.e., $\Pr(\mathbf{w}_{test}\vert
\mathbf{w}_{train},\widehat{\bm{\sigma}})$.
Given the estimate $\widehat{\bm{\sigma}}$, we follow
\citep{Arora2:ref,Ding14:ref} to fit a Dirichlet prior model.
We then calculate the prediction log-likelihood using the
approximation in \citep{Wallach09:ref} which is now the standard.
%
We compare against the FJS algorithm. Figure~\ref{fig:mv1}(upper)
summarizes the results for different strategies in generating the
pairwise comparisons with $K=10$ held fixed.
The log-likelihood is normalized by the total number of pairwise
comparisons tested.
As depicted in Fig.~\ref{fig:mv1} (upper), the log-likelihood produced
by the proposed algorithm RP is higher, by a large margin, compared to
FJS. The predictive accuracy is robust to how the comparison data is
constructed.
We also consider the normalized log-likelihood as function of $K$ (see
Fig.~\ref{fig:mv3}). The results validate the superior performance and
suggest that $K=10$ is a reasonable parameter choice.
\begin{figure}[!htb]
\centering
\begin{minipage}[b]{.90\linewidth}
  \centering
  \centerline{\includegraphics[width=0.9\linewidth]{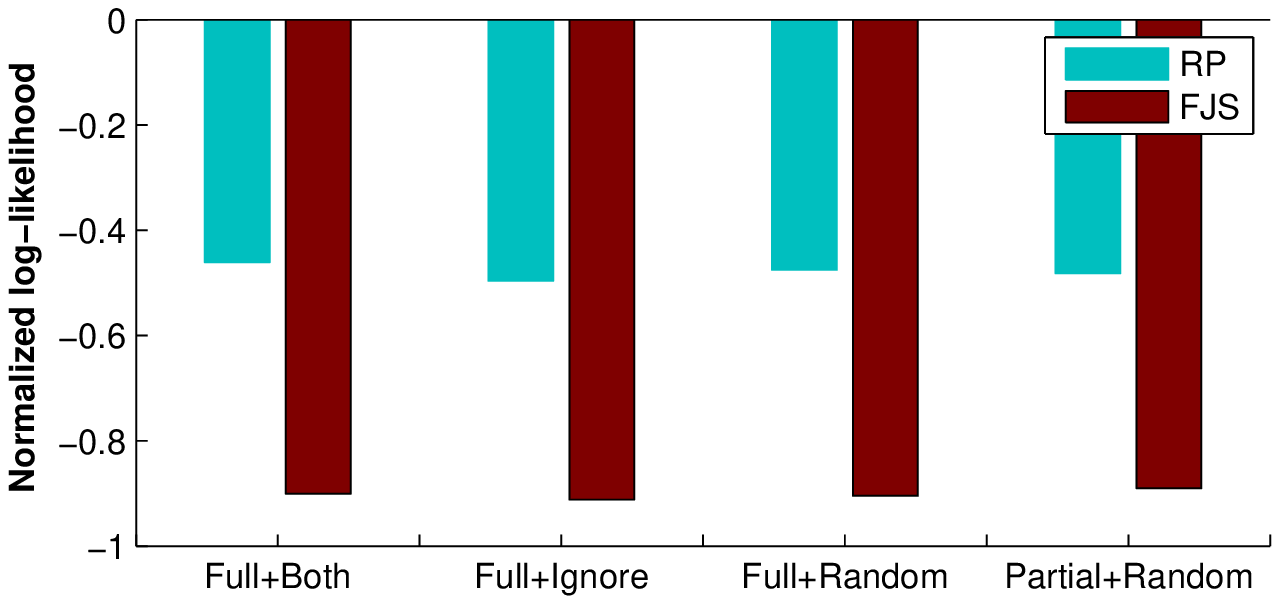}}
\end{minipage}
\begin{minipage}[b]{0.90\linewidth}
  \centering
  \centerline{\includegraphics[width=0.9\linewidth]{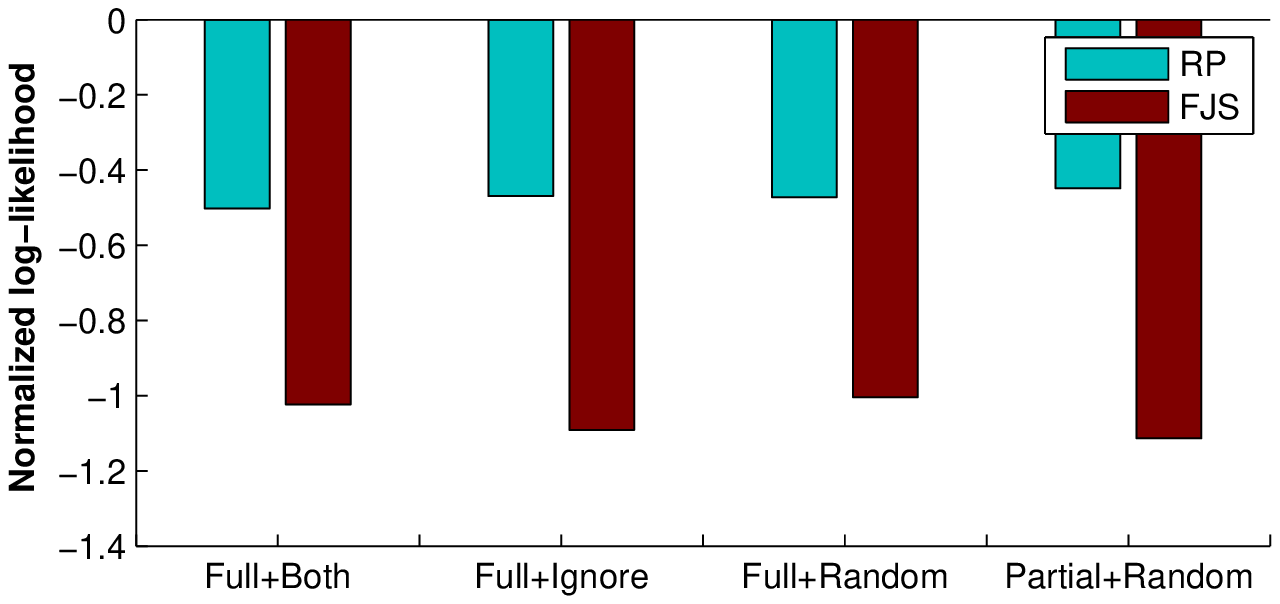}}
\end{minipage}
\caption{{\small The normalized log-likelihood under different
    settings for (upper) new comparison prediction and (lower) new
    user prediction on the truncated Movielens.  $K=10$.}}
\vspace*{-2ex}
\label{fig:mv1}
\end{figure}
\begin{figure}[!htb]
\includegraphics[width=0.90\linewidth]{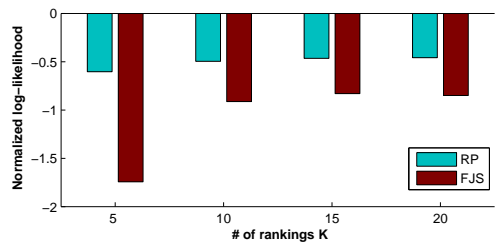}
\vspace*{-1ex}
\caption{{\small The normalized log-likelihood for Full + Ignore
    strategy for various $K$ on the truncated Movielens dataset (new
    comparison prediction).}}
\label{fig:mv3}
\vspace*{-2ex}
\end{figure}
%

%
{\bf New user prediction: } In this setting, all the ratings of a
subset of users are used to generate the training comparisons while
the remaining users' comparisons are used for testing.
Following \citep{Lu11:ref}, we split the first $4000$ users (in the
original dataset) in the Movielens dataset for training, and the
remaining for testing.
%
We use the held-out log-likelihood, i.e., $\Pr(\mathbf{w}_{test}\vert
\widehat{\bm{\sigma}})$ to measure the performance. The
log-likelihoods are again calculated using the standard Gibbs Sampling
approximation \citep{Wallach09:ref}. We compare our algorithm RP with
the FJS algorithm. The log-likelihoods are then normalized by the
total number of comparisons in the testing phase.  We fix the number
of rankings at $K=10$. The results which are summarized in
Fig.~\ref{fig:mv1} (lower) agree with the results of the previous
task.
\vspace*{-2ex}
\subsection{Movielens - Rating prediction via ranking model}
\vspace*{-2ex}
\label{sec:rating}
The purpose of this experiment is to {\bf illustrate} that our ranking model
can capture real-world user behavior through rating predictions, one
important task in personal recommendation \citep{Netflix:ref}.
We first train our ranking model using the training comparisons, and
then predict ratings based on comparison prediction. Our objective is
to demonstrate results comparable to the state-of-the-art
rating-\-based methods rather than achieving the best possible
performance on certain datasets.

We use the same training/testing rating split from \citep{BPMF:ref} as
used in {\it new comparison prediction} in Sec.~\ref{sec:movie}, and
focus only on the $Q=100$ most rated movies. We first convert the
training ratings into training comparisons (for each user, all pairs
of movies she rated in the training set are converted into comparisons
based on the stars and the ties are ignored) and train a ranking
model. The prior is set to be Dirichlet.

To predict stars from comparison prediction, we propose the following
method.
Consider the problem of predicting $r_{i,m}$, i.e., the rating of user
$m$ on movie $i$. We assume $r_{i,m} = s, s=1,\ldots, 5$, then compare
it against the ratings on movie $\{j_1,\ldots, j_V\}$ she has rated in
training. This generates a set of pairwise comparisons
$\mathbf{w}_{i,m}(s)$.
For example, if user $m$ has rated movies $A,B,C$ with stars $4,2,5$
respectively in the training set and we are predicting her rating $s$
of movie $D$. Then for $s=3$, $\mathbf{w}_{D,m}(3) = \{ (A,D), (D,B),
(C,D) \}$ while for $s=1$, $\mathbf{w}_{D,m}(1) = \{ (A,D), (B,D),
(C,D) \}$.
We then chose $s$ to maximize the likelihood of $\mathbf{w}_{i,m}(s)$,
\begin{equation*}
\hat{r}_{i,m} = \arg\max_{s} p(\mathbf{w}_{i,m}(s) \vert \mathbf{w}_{train},\widehat{\bm{\sigma}}).
\vspace*{-1ex}
\end{equation*}
We evaluate the performance using root-mean-square-error (RMSE). This
is a standard metric in collaborative filtering \citep{Netflix:ref}.
\footnote{
Normalized Discounted Cumulative Gain (nDCG) is another standard
metric. It requires, however, to predict a total ranking and is
inapplicable in our test setting.  }
We compared our ranking-based algorithm, RP , against rating based algorithms. We choose to compare two benchmark
algorithms, Probability Matrix Factorization (PMF) in \citep{PMF:ref},
and Bayesian probability matrix factorization (BPMF) in
\citep{BPMF:ref} for their robust empirical performance 
\footnote{The implementation is available at
  \url{http://www.cs.toronto.edu/~rsalakhu/BPMF.html}}.
Both PMF and BPMF are latent factor models. The number of latent
factors $K$ has the similar interpretation as in our ranking
model. The RMSE for different choices of $K$ are summarized in
Table~\ref{table:rmse}.
\begin{table}[!tbh]
\vspace*{-2ex}
\caption{Testing RMSE on the Movielens dataset}
\label{table:rmse}
\centering{{\small
\begin{tabular}{|c|c|c|c|c|}
\hline 
$K$ & PMF & BPMF & RP & BPMF-int \\ 
\hline 
10 & 1.0491 & 0.8254 & 0.8840 & 0.8723 \\ 
\hline 
15 & 0.9127 & 0.8236 & 0.8780 & 0.8734 \\ 
\hline 
20 & 0.9250 & 0.8213 & 0.8721 & 0.8678 \\ 
\hline 
\end{tabular}
} }
\vglue -3ex
\end{table}

Although coming from a different feature space and modeling
perspective, our approach has similar RMSE performance as the
rating-based PMF and BPMF.
Since the ratings predicted by our algorithm are integers from $1$ to
$5$, we also consider restricting the output of BPMF to be integers
(denote as BPMF-int). This is achieved by rounding the real-valued
prediction of BPMF to the nearest integer from 1 to 5.
We observe that our RP algorithm outperforms PMF which is known for
over-fitting issues, and matches the performance of BPMF-int.
This demonstrates that our approach is in fact suitable for modeling
real-world user behavior.

We point out that one can potentially improve these results by
designing a better comparison generating strategy, ranking prior,
aggregation strategies, etc. This is, however, beyond the scope of
this paper.

We note that our proposed algorithm can be naturally parallelized in a distributed database for web scale problems as demonstrated in \citep{Ding14:ref}. The statistical efficiency of the centralized version can be retained with an insignificant communication cost. 

\vspace*{-2ex}
\section*{Acknowledgment}
\vspace*{-2ex} 
This article is based upon work supported by the
U.S. AFOSR and the U.S. NSF under award numbers \# FA9550-10-1-0458
(subaward \# A1795) and \# 1218992 respectively. The views and
conclusions contained in this article are those of the authors and
should not be interpreted as necessarily representing the official
policies, either expressed or implied, of the agencies.
%





\appendix
\section*{SUPPLEMENTARY MATERIAL}
While our analysis of the proposed approach and algorithm largely
tracks the methodology in \citep{Ding14:ref}, here we develop a set of
new analysis tools that can handle more general settings.
Specifically, our new analysis tools can handle {\bf any}
isotropically distributed random projection directions. In contrast,
the work in \citep[e.g.,][]{Ding14:ref} can only handle special types
of random projections, e.g., spherical Gaussian. 
Our new refined analysis can not only handle more general settings, it
also gives an overall improved sample complexity bound.

We also analyse the post-processing step in Algorithm~4. This step
accounts for the special constraints that a valid ranking
representations must satisfy and guarantees a binary-valued estimate
of $\bm{\sigma}$. It should also satisfy the property that either $\sigma^{k}(i) > \sigma^{k}(j) $ or $\sigma^{k}(i) < \sigma^{k}(j)$ for all distinct
$i,j$ and all $k$.

We note that the analysis framework that we present here for the solid
angle can in fact be extended to handle other types distributions for
the random projection directions. This is, however, beyond the scope
this paper.

\section{On the generative model}

\begin{proposition}
$\mathbf{B} = \mathbf{P}\bm{\sigma}$ is column stochastic.
\end{proposition}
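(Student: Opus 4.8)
The plan is to verify directly the two defining properties of a column-stochastic matrix: that every entry of $\mathbf{B}$ is nonnegative and that every column sums to one. Writing out the matrix product and using that $\mathbf{P}$ is diagonal with $P_{(i,j),(i,j)} = \mu_{i,j}$, the entries are $B_{(i,j),k} = \mu_{i,j}\,\sigma_{(i,j),k}$. Nonnegativity is then immediate: $\mu_{i,j} > 0$ by the assumption on the pair-sampling distribution, and $\sigma_{(i,j),k} = \mathbb{I}(\sigma^k(i) < \sigma^k(j)) \in \{0,1\}$ by definition of the ranking matrix.

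The substance lies in the column sums. I would fix a ranking index $k$ and sum over all $W = Q(Q-1)$ ordered pairs,
\begin{equation*}
\sum_{(i,j)} B_{(i,j),k} = \sum_{(i,j)} \mu_{i,j}\,\sigma_{(i,j),k}.
\end{equation*}
The key step is to regroup the ordered pairs into the $\binom{Q}{2}$ unordered pairs $\{i,j\}$, each contributing precisely the two terms indexed by $(i,j)$ and $(j,i)$. At this point I would invoke two facts already recorded in the excerpt: first, $\mu_{i,j} = \mu_{j,i}$, so both ordered terms of a given unordered pair carry the common weight $\mu_{i,j}$; second, $\sigma_{(i,j),k} + \sigma_{(j,i),k} = 1$. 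This last identity is where the permutation structure genuinely enters: since $\sigma^k$ is a bona fide ranking, the positions $\sigma^k(i)$ and $\sigma^k(j)$ of two distinct items are themselves distinct, so exactly one of $\sigma^k(i) < \sigma^k(j)$ and $\sigma^k(j) < \sigma^k(i)$ holds and the two indicators are complementary.

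Combining these, each unordered pair contributes $\mu_{i,j}\bigl(\sigma_{(i,j),k} + \sigma_{(j,i),k}\bigr) = \mu_{i,j}$, so the column sum collapses to $\sum_{\{i,j\}} \mu_{i,j}$, the total mass that $\mu$ places on unordered pairs, which is $1$. Because this argument is independent of $k$, every column sums to one, and together with nonnegativity this establishes that $\mathbf{B}$ is column stochastic.

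I do not anticipate a genuine obstacle here, as the computation is routine; the only point requiring care is the bookkeeping between ordered and unordered pairs and the accompanying verification that $\sigma_{(i,j),k}$ and $\sigma_{(j,i),k}$ are complementary. That verification rests on $\sigma^k$ being an actual permutation with distinct positions, rather than an arbitrary nonnegative vector, and is precisely what makes the pair-wise weights telescope cleanly to the total mass of $\mu$.
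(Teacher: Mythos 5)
Your proof is correct and follows essentially the same route as the paper: pair up $(i,j)$ with $(j,i)$, use $\mu_{i,j}=\mu_{j,i}$ and $\sigma_{(i,j),k}+\sigma_{(j,i),k}=1$, and collapse the column sum to the total mass of $\mu$. The only difference is that you spell out nonnegativity and the permutation-based justification of the complementary indicators, which the paper takes as given.
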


\begin{proof}
Noting that $\sigma_{(i,j),k} + \sigma_{(j,i),k} = 1$ by definition, and $P_{(i,j),(i,j)} = P_{(j,i),(j,i)}=\mu_{i,j}$, therefore,
\begin{eqnarray*}
\sum_{(i,j)} B_{(i,j),k} &=& \sum_{(i,j)~:~i<j} (\sigma_{(i,j),k}+\sigma_{(j,i),k}) \mu_{i,j} \\
&=& \sum_{(i,j)~:~i<j}\mu_{i,j} = 1
\end{eqnarray*}
\end{proof}

\section{Connection to the model in FJS}
Here we discuss in detail the connection to the probability model as well as the algorithm proposed in \cite{Jagabathula08:ref}\cite{Farias09:ref} (denoted by FJS). 

First, the generative model proposed in FJS can be viewed as a special
case of our generative model. If we consider the prior distribution of
$\bm{\theta}_m$ to be a pmf on the vertices of the $K$-dimensional
probability simplex (so that $\bm{\theta}_m$ has only one nonzero
component with probability one), i.e., 
\begin{equation}
\Pr(\bm{\theta}_m =\mathbf{e}_k) = b_k
\end{equation}
where $\mathbf{e}_k$ is the $k$-th standard basis vector and
$\sum_{k=1}^{K} b_k = 1$, then each user $m$ is associated with only
one of the $K$ types with probability $b_k$ for the $k$-th type.
We note that under this prior, $\mathbf{a} \triangleq
\eE(\bm{\theta}_m) = \mathbf{b}$ and $ \mathbf{R}\triangleq
\eE(\bm{\theta}_m \bm{\theta}_m^{\top}) = \diag(\mathbf{b})$ has full
rank.

Second, the algorithm proposed in FJS can certainly be applied to our
more general setting. Since the algorithm FJS only uses the first
order statistic which corresponds to pooling the comparisons from all
the users together, it suffices to consider only the probabilities of
$p(w_1 = (i,j))$ by marginalizing over $\bm{\theta}$:
\begin{eqnarray*}
p(w_1 = (i,j)) & = & \int_{\bm{\theta}_m} p(w_1 = (i,j)\vert \bm{\theta}_m) \Pr(\bm{\theta}_m)  d\bm{\theta_m} \\
& = & \sum_{k=1}^{K} \sigma_{(i,j), k} \int_{\bm{\theta}_m} {\theta}_{k,m} d\bm{\theta_m} \\
& = & \sum_{k=1}^{K} \sigma_{(i,j), k} a_{k} \\
& = & \sum_{k ~: \sigma^{k}(i) < \sigma^{k}(j)} a_{k},
\end{eqnarray*}
where the last step is due to the definition of the ranking matrix
$\bm{\sigma}$.
The above derivation shows that if the expectation vector in our
generative model equals that in the model of FJS, then the probability
distribution of the first order statistic in both models will be
identical and the two models will be indistinguishable in terms of the
first order statistic. This shows that the comparison with FJS in the
experiments conducted in Sections 6.1 and 6.2 of the main paper is
both sensible and fair. \\
 
\noindent{\bf Indexing convention:} For convenience, for the rest of
this appendix we will index the $W=Q(Q-1)$ rows of $\mathbf{B}$ and
$\mathbf{E}$ by just a single index $i$ instead of an ordered pair
$(i,j)$ as in the main paper.

\section{Proof of Lemma 2 in the main paper}
Lemma 2 in the main paper is a result about the almost sure convergence of the estimate
of the normalized second order moments $\mathbf{E}$. Our proof of this
result will also provide an attainable rate of convergence. 

We first provide a generic method to establish the convergence rate for a function $\psi(\mathbf{X})$ of $d$ random variables $X_1,\ldots, X_d$ given their individual convergence rates. 
\begin{proposition}
\label{prop:convergence}
Let $\mathbf{X} = \left[X_1,\ldots, X_d \right]$ be $d$ random variables and $\mathbf{a}=\left[a_1,\ldots, a_d \right]$ be positive constants. 
Let $\mathcal{E}:=\bigcup\limits_{i \in\mathcal{I}} \lbrace \vert X_i - a_i\vert \geq \delta_i \rbrace$ for some constants $ \delta_i >0$,
and $\psi(\mathbf{X})$ be a continuously differentiable function in
$\mathcal{C} := \mathcal{E}^{c} $.
If for $i=1,\ldots, d$, $\Pr(\vert X_i - a_i \vert \geq \epsilon) \leq f_i(\epsilon)$ and $\max\limits_{X\in\mathcal{C}}\vert \partial_i\psi(\mathbf{X})\vert \leq C_i$, then, 
\begin{equation*}
\Pr(\vert \psi(\mathbf{X}) - \psi(\mathbf{a})\vert \geq \epsilon)  \leq \sum\limits_{i}f_{i}(\gamma ) + \sum\limits_{i=1} f_{i}(\frac{\epsilon}{d C_i})
\end{equation*}
\end{proposition}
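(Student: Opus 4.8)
The plan is to split the target event according to whether the random vector $\mathbf{X}$ lands in the ``good'' region $\mathcal{C}$, on which $\psi$ is smooth with controlled gradient, or in its complement $\mathcal{E}$. By a union bound,
\begin{equation*}
\Pr(|\psi(\mathbf{X}) - \psi(\mathbf{a})| \geq \epsilon) \leq \Pr(\mathcal{E}) + \Pr\bigl(\{|\psi(\mathbf{X}) - \psi(\mathbf{a})| \geq \epsilon\} \cap \mathcal{C}\bigr),
\end{equation*}
so it suffices to bound the two terms separately. Throughout I take the thresholds $\delta_i$ to be the common value $\gamma$, so that $\mathcal{E} = \bigcup_{i} \{|X_i - a_i| \geq \gamma\}$.

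First I would control $\Pr(\mathcal{E})$. A union bound together with the per-coordinate tail hypothesis $\Pr(|X_i - a_i| \geq \epsilon) \leq f_i(\epsilon)$ gives $\Pr(\mathcal{E}) \leq \sum_i \Pr(|X_i - a_i| \geq \gamma) \leq \sum_i f_i(\gamma)$, which is exactly the first term in the claimed bound.

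For the second term, the key observation is that $\mathcal{C} = \mathcal{E}^{c} = \bigcap_i \{|X_i - a_i| < \gamma\}$ is convex (a product of intervals in the constrained coordinates and full lines in the others) and contains the center $\mathbf{a}$. Hence for any realization $\mathbf{X} \in \mathcal{C}$ the entire line segment from $\mathbf{a}$ to $\mathbf{X}$ stays inside $\mathcal{C}$, where $\psi$ is continuously differentiable. The mean value theorem then yields a point $\bm{\xi}$ on that segment with $\psi(\mathbf{X}) - \psi(\mathbf{a}) = \sum_i \partial_i \psi(\bm{\xi})\, (X_i - a_i)$, and using $|\partial_i \psi| \leq C_i$ on $\mathcal{C}$ we obtain the deterministic bound $|\psi(\mathbf{X}) - \psi(\mathbf{a})| \leq \sum_i C_i |X_i - a_i|$.

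Finally I would convert this into a tail bound by a pigeonhole argument: on $\mathcal{C}$, the event $|\psi(\mathbf{X}) - \psi(\mathbf{a})| \geq \epsilon$ forces $\sum_i C_i |X_i - a_i| \geq \epsilon$, so at least one index $i$ must satisfy $C_i |X_i - a_i| \geq \epsilon/d$, i.e.\ $|X_i - a_i| \geq \epsilon/(d C_i)$. A second union bound and the tail hypothesis then give $\Pr(\{|\psi(\mathbf{X}) - \psi(\mathbf{a})| \geq \epsilon\} \cap \mathcal{C}) \leq \sum_i f_i(\epsilon/(d C_i))$, producing the second term and completing the proof. I expect the only delicate point to be the invocation of the mean value theorem, which requires the convexity of $\mathcal{C}$ and the fact that $\mathbf{a} \in \mathcal{C}$, so that the segment of integration never leaves the region on which $\psi$ is smooth and its partials are uniformly bounded; everything else reduces to two applications of the union bound.
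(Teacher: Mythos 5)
Your proposal is correct and follows essentially the same route as the paper's own proof: split on the event $\mathcal{E}$ versus $\mathcal{C}$, apply the mean value theorem on $\mathcal{C}$ to get $|\psi(\mathbf{X})-\psi(\mathbf{a})|\leq\sum_i C_i|X_i-a_i|$, and finish with a pigeonhole plus union bound. If anything you are slightly more careful than the paper, since you make explicit the convexity of $\mathcal{C}$ and the fact that $\mathbf{a}\in\mathcal{C}$ (needed for the segment to stay where $\psi$ is differentiable), and you resolve the statement's $\delta_i$ versus $\gamma$ mismatch by taking all thresholds equal to $\gamma$.
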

\begin{proof}
Since $\psi(\mathbf{X})$ is continuously differentiable in
$\mathcal{C}$, $\forall \mathbf{X}\in\mathbf{C}, \exists \lambda \in
(0,1)$ such that
\begin{equation*}
\psi(\mathbf{X}) - \psi(\mathbf{a}) = \nabla^{\top}\psi((1-\lambda)\mathbf{a} + \lambda\mathbf{X})\cdot (\mathbf{X}-\mathbf{a})
\end{equation*}
Therefore, 
\begin{align*}
&\Pr(\vert \psi(\mathbf{X}) - \psi(\mathbf{a})\vert \geq \epsilon) \\
\leq & \Pr(\mathbf{X}\in\mathcal{E})  \\
 & + \Pr(\sum\limits_{i=1}^{d}\vert \partial_{i}\psi((1-\lambda)\mathbf{a} + \lambda\mathbf{X})	\vert \vert X_i - a_i \vert\geq \epsilon \vert \mathbf{X}\in\mathcal{C}	)\\
\leq & \sum\limits_{i\in\mathcal{I}} \Pr(\vert X_i - a_i\vert \geq \delta_i) \\
& + \sum\limits_{i=1}^{d}\Pr(\max\limits_{\mathbf{x}\in\mathcal{C}}\vert \partial_i \psi(\mathbf{x}) \vert\vert X_i - a_i\vert \geq \epsilon/d) \\ 
= & \sum\limits_{i\in\mathcal{I}}f_{i}(\delta_i ) + \sum\limits_{i=1} f_{i}(\frac{\epsilon}{d C_i})
\end{align*}
\end{proof}

Now we are ready to prove Lemma 2 of the main paper. Recall that
$\widetilde{\mathbf{X}}$ and $\widetilde{\mathbf{X}}^{\prime}$ are
obtained from $\mathbf{X}$ by first splitting each user's comparisons
into two independent copies and then re-scaling the rows to make them
row-stochastic. Therefore,
$\widetilde{\mathbf{X}} = \diag^{-1}(\mathbf{X}\mathbf{1})\mathbf{X}$. 
Since $\bar{\mathbf{B}} = \diag^{-1}(\mathbf{B} \mathbf{a}) \mathbf{B}
\diag(\mathbf{a})$, $\bar{\mathbf{R}}
=\diag^{-1}(\mathbf{a})\mathbf{R}\diag^{-1}(\mathbf{a})$, and $
\bar{\mathbf{B}}$ is row stochastic. From Lemma 2 of the main paper,
we have
\begin{lemma}
\label{lem:converge}
Let $\widehat{\mathbf{E}} = M \widetilde{\mathbf{X}^{\prime}}
\widetilde{\mathbf{X}^{\top}}$ and $\mathbf{E} =
\bar{\mathbf{B}}\bar{\mathbf{R}} \bar{\mathbf{B}}^{\top}$. If $\eta =
\min_{1\leq i\leq W}(\mathbf{B}\mathbf{a})_i >0$, then,
\begin{equation}
\Pr(\Vert \widehat{\mathbf{E}} - \mathbf{E}\Vert_{\infty} \geq \epsilon) \leq 8W^2\exp(-\epsilon^2 \eta^4 MN /20)
\end{equation}
\end{lemma}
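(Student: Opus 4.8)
The plan is to reduce the matrix $\ell_\infty$ deviation to a family of scalar concentration statements and then union bound. First I would observe that, writing $\widetilde{\mathbf X} = \diag^{-1}(\mathbf X\mathbf 1)\mathbf X$ and letting $\widehat x_{v,m}$ be the empirical frequency with which the (split) comparisons of user $m$ realize the ordered pair $v$, a direct computation collapses the row normalizations and the leading factor $M$ into $\widehat E_{u,v} = \widehat N_{u,v}/(\widehat d'_u\,\widehat d_v)$, where $\widehat d_v = \tfrac1M\sum_m \widehat x_{v,m}$, $\widehat d'_u = \tfrac1M\sum_m \widehat x'_{u,m}$, and $\widehat N_{u,v} = \tfrac1M\sum_m \widehat x'_{u,m}\widehat x_{v,m}$. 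Thus each entry is $\widehat E_{u,v} = \psi(\widehat N_{u,v},\widehat d'_u,\widehat d_v)$ with $\psi(a,b,c)=a/(bc)$, and I can apply Proposition~\ref{prop:convergence} entrywise with $d=3$ once I supply (i) individual tail bounds for the three averages and (ii) derivative bounds for $\psi$ on a suitable good set. Since the conditional means of these averages are $(\mathbf B\mathbf a)_u$, $(\mathbf B\mathbf a)_v$ and $(\mathbf B\mathbf R\mathbf B^\top)_{u,v}$, the map $\psi$ sends them exactly to $\mathbf E_{u,v} = (\mathbf B\mathbf R\mathbf B^\top)_{u,v}/((\mathbf B\mathbf a)_u(\mathbf B\mathbf a)_v)$, the $(u,v)$ entry of $\bar{\mathbf B}\bar{\mathbf R}\bar{\mathbf B}^\top$.

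The individual tail bounds are where the factor $N$ must be produced, and this is the crux. I would condition on the user weights $\{\bm\theta_m\}$: given the weights, all $MN$ comparison outcomes are mutually independent, and altering a single comparison perturbs a per-user frequency by at most $2/N$ and hence each of the three $M$-averages by at most $O(1/(MN))$. A bounded-differences (Hoeffding/McDiarmid) argument over these $MN$ independent draws then yields, for each of $\widehat d_v$, $\widehat d'_u$ and $\widehat N_{u,v}$, a conditional tail of the form $2\exp(-c\,MN\epsilon^2)$ about its conditional mean. This is exactly the step a marginal analysis would miss: marginally the weight randomness injects an $\Theta(1)$ fluctuation that would cap the rate at $M$, so committing to the conditional viewpoint is what unlocks the $MN$ rate.

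Next I would bound the partial derivatives of $\psi$ on the good set $\mathcal C$ on which $\widehat d'_u,\widehat d_v\ge\eta/2$ and $\widehat N_{u,v}$ stays within $O(\eta^2)$ of its mean $\mathbf E_{u,v}(\mathbf B\mathbf a)_u(\mathbf B\mathbf a)_v$; this is where the hypothesis $\eta=\min_w(\mathbf B\mathbf a)_w>0$ enters. On $\mathcal C$ the numerator derivative dominates, $|\partial_a\psi|=1/(bc)\le 4/\eta^2$, whereas the denominator derivatives $|\partial_b\psi|=a/(b^2c)$ and $|\partial_c\psi|$ are only $O(1/\eta)$, because the cancellation $a\asymp \mathbf E_{u,v}\,\widehat d'_u\widehat d_v$ removes one power of the small denominators. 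It is precisely this cancellation that turns a crude $\eta^6$ into the claimed $\eta^4$. Feeding the Lipschitz constant $C\asymp 1/\eta^2$ into the $f_i(\epsilon/(dC_i))$ terms of Proposition~\ref{prop:convergence} converts the conditional rate $MN\epsilon^2$ into $MN\epsilon^2\eta^4$, the denominator coordinates contributing only the strictly smaller (hence negligible) tail at rate $MN\epsilon^2\eta^2$.

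Finally I would assemble the pieces: per entry, Proposition~\ref{prop:convergence} produces a constant number of tails (the good-set events for the two denominators plus the three derivative terms), each at most $2\exp(-\epsilon^2\eta^4 MN/20)$ after the $O(1)$ constants from the length-$N/2$ split, the $1/d^2$ factor, and the derivative bounds are tracked; a union bound over the $W^2$ entries of $\widehat{\mathbf E}-\mathbf E$ then gives $8W^2\exp(-\epsilon^2\eta^4 MN/20)$. I expect the main obstacle to be the $N$-dependence: one must commit to the conditional (given-weights) viewpoint so that the $MN$ comparisons are genuinely independent, and then verify that the ratio form of $\psi$ yields exactly $\eta^4$, which hinges on the $O(1/\eta)$ — rather than $O(1/\eta^3)$ — control of the denominator derivatives on the good set.
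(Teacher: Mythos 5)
Your overall strategy is the paper's own: write each entry as $\widehat E_{i,j}=F_{i,j}/(G_iH_j)$ with $F_{i,j}$ the empirical co-occurrence and $G_i,H_j$ the empirical marginals, obtain individual tail bounds for these three averages, feed them through Proposition~\ref{prop:convergence} with Lipschitz bounds for $\psi(a,b,c)=a/(bc)$ on a good set where the denominators stay above a constant multiple of $\eta$, and union bound over the $W^2$ entries. Even the $\eta^4$ bookkeeping agrees in substance, though the paper obtains it differently: it uses $F_{i,j}\le G_i$ and $F_{i,j}\le H_j$ to bound all three partials by $1/((1-\gamma)^2p_ip_j)\le 1/((1-\gamma)^2\eta^2)$ and then optimizes $\gamma$, whereas your claim that the denominator partials are only $O(1/\eta)$ rests on $E_{i,j}=O(1)$, which is not guaranteed in general (only $E_{i,j}\lesssim 1/\eta$ is). Since the dominant term is the numerator partial either way, this is a cosmetic difference.

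The one substantive divergence is the step you yourself flag as the crux, and there it opens a gap rather than closing one. Conditioning on $\{\bm\theta_m\}$ does make the $MN$ comparison outcomes independent and yields an $MN$-rate tail for, say, $\widehat d_v$ around its \emph{conditional} mean --- but that conditional mean is $\frac1M\sum_m(\mathbf B\bm\theta_m)_v$, not $(\mathbf B\mathbf a)_v$. The discrepancy between the two is an average of $M$ i.i.d.\ bounded terms and concentrates only at rate $\exp(-cM\epsilon^2)$, with no factor of $N$; the same applies to $\widehat N_{u,v}$ versus $(\mathbf B\mathbf R\mathbf B^\top)_{u,v}$. So the conditional viewpoint, taken literally, cannot deliver the stated $\exp(-\epsilon^2\eta^4MN/20)$ for deviations from $\mathbf E$, which is built from the marginal moments $\mathbf a$ and $\mathbf R$: you would be forced to add a separate $M$-rate triangle-inequality term and the exponent would lose its $N$. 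The paper sidesteps this by applying McDiarmid directly to the marginal deviations $|F_{i,j}-p_{i,j}|$, $|G_i-p_i|$, $|H_j-p_j|$, treating the comparison indicators as the independent inputs; that reproduces the stated bound formally but glosses over exactly the dependence through the shared $\bm\theta_m$ that you identified. In short, you located the delicate point correctly, but your proposed fix does not establish the $MN$ rate around the marginal limits, and to arrive at the lemma as stated you would have to fall back on the paper's direct (and, on this point, looser) application of McDiarmid.
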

\begin{proof}
For any $1\leq i,j\leq W$,
\begin{align*}
\widehat{E}_{i,j} & = M \frac{1}{\sum\limits_{m=1}^{M} X_{i,m}^{\prime}} (\sum\limits_{m=1}^{M} X_{i,m}^{\prime} X_{j,m} ) \frac{1}{\sum\limits_{m=1}^{M} X_{i,m}} \\
& = \frac{1/M \sum\limits_{m=1}^{M}(X_{i,m}^{\prime} X_{j,m})}{(1/M \sum\limits_{m=1}^{M}X_{i,m}^{\prime} )(1/M \sum\limits_{m=1}^{M} X_{j,m})} \\
& = \frac{\frac{1}{MN^2} \sum\limits_{m=1,n=1,n'=1}^{M,N,N} \mathbb{I}(w_{m,n}=i) \mathbb{I}(w_{m,n'}^{\prime}=j)}{\frac{1}{MN} \sum\limits_{m=1, n=1}^{M,N} \mathbb{I}(w_{m,n} =i)  \frac{1}{MN} \sum\limits_{m=1, n=1}^{M,N} \mathbb{I}(w_{m,n}^{\prime} =i)} \\
& : = \frac{F_{i,j}(M,N)}{G_{i}(M,N) H_{j}(M,N)}
\end{align*}

From the Strong Law of Large Numbers and equations (1), (2) in the
main paper, we have
\begin{align*}
& F_{i,j}(M,N) \xrightarrow{a.s.} \eE (\mathbb{I}(w_{m,n}=i) \mathbb{I}(w_{m,n'}^{\prime}=j))\\
 & \qquad = (\mathbf{B}\mathbf{R}\mathbf{B}^{\top})_{i,j} :=p_{i,j} 
\end{align*}
\begin{align*}
& G_{i}(M,N) \xrightarrow{a.s.} \eE (\mathbb{I}(w_{m,n}^{\prime}=i) ) = (\mathbf{B}\mathbf{a})_{i} :=p_{i} \\
& H_{i}(M,N) \xrightarrow{a.s.} \eE (\mathbb{I}(w_{m,n}=j) ) = (\mathbf{B}\mathbf{a})_{j} :=p_{j}
\end{align*}
and $\frac{(\mathbf{B}\mathbf{R}\mathbf{B}^{\top})_{i,j}}{(\mathbf{B}
  \mathbf{a})_{i} (\mathbf{B} \mathbf{a})_{j}} = \mathbf{E}_{i,j}$ by
definition.
Using McDiarmid's inequality, we obtain
\begin{align*}
& \Pr(\vert F_{i,j} - p_{i,j} \vert \geq \epsilon )\leq 2\exp(-\epsilon^2MN) \\ 
& \Pr(\vert G_{i} - p_{i} \vert \geq \epsilon )\leq 2\exp(-2\epsilon^2MN) \\
& \Pr(\vert H_{j} - p_{j} \vert \geq \epsilon )\leq 2\exp(-2\epsilon^2MN) 
\end{align*}
In order to calculate $\Pr\lbrace \vert \frac{F_{i,j}}{G_{i} H_{j}} -
\frac{p_{i,j}}{p_{i} p_{j}} \vert \geq \epsilon \rbrace$, we apply the
results from Proposition~\ref{prop:convergence}.
Let $\psi(x_1, x_2, x_3) = \frac{x_1}{x_2 x_3}$ with $x_1,x_2,x_3 >0$,
and $a_1 = p_{i,j}$, $a_2 = p_i$, $a_3= p_j$.
Let $\mathcal{I}=\{2,3\}$, $\delta_2 = \gamma p_i$, and $\delta_3 =
\gamma p_j$.
Then $\vert \partial_1\psi \vert = \frac{1}{x_2 x_3} $, $\vert
\partial_2\psi \vert = \frac{x_1}{x_2^{2} x_3} $, and $\vert
\partial_3\psi \vert = \frac{x_1}{x_2 x_3^{2}} $.

If $F_{i,j} = x_1$, $G_i = x_2$, and $H_j = x_3$, then $F_{i,j} \leq
G_i$, $F_{i,j}\leq H_j$. Then note that
\begin{align*}
C_1 &= \max_{\mathcal{C}} \vert \partial_1\psi \vert =  \max_{\mathcal{C}}  \frac{1}{G_i H_j} \leq \frac{1}{(1-\gamma)^{2}p_i p_j} \\
C_2 &= \max_{\mathcal{C}} \vert \partial_2\psi \vert =  \max_{\mathcal{C}}  \frac{F_{i,j}}{G_i^{2} H_j} \leq   \max_{\mathcal{C}}  \frac{1}{G_i H_j} \leq \frac{1}{(1-\gamma)^{2}p_i p_j} \\
C_3 &= \max_{\mathcal{C}} \vert \partial_3\psi \vert =  \max_{\mathcal{C}}  \frac{F_{i,j}}{G_i H_j^{2}} \leq   \max_{\mathcal{C}}  \frac{1}{G_i H_j}\leq \frac{1}{(1-\gamma)^{2}p_i p_j}
\end{align*}
By applying Proposition~\ref{prop:convergence}, we get
\begin{align*}
& \Pr\lbrace \vert \frac{F_{i,j}}{G_{i} H_{j}} - \frac{p_{i,j}}{p_{i} p_{j}} \vert \geq \epsilon \rbrace \\
\leq & \exp(-2\gamma^{2}p_i^{2} MN) + \exp(-2\gamma^{2}p_j^{2} MN) \\
& + 2\exp (-\epsilon^{2} (1-\gamma)^{4}(p_i p_j)^{2} MN/9)  \\
 & + 4 \exp (-2\epsilon^{2} (1-\gamma)^{4}(p_i p_j)^{2} MN/9) \\ 
\leq & 2\exp(-2\gamma^{2}\eta^{2}MN) + 6\exp(-\epsilon^{2} (1-\gamma)^{4}\eta^{4} MN/9)
\end{align*}
where $\eta = \min_{1\leq i \leq W} p_i$. 
There are many strategies for optimizing the free parameter
$\gamma$. We set $2\gamma^{2} = \frac{(1-\gamma)^{4}}{9}$ and solve
for $\gamma$ to obtain
\begin{align*}
 \Pr\lbrace \vert \frac{F_{i,j}}{G_{i} H_{j}} - \frac{p_{i,j}}{p_{i} p_{j}} \vert \geq \epsilon \rbrace \leq  8\exp(-\epsilon^{2}\eta^{4} MN/20)
\end{align*}
Finally, by applying the union bound to the $W^2$ entries in
$\widehat{\mathbf{E}}$, we obtain the claimed result.
\end{proof}
%
\section{Proof of Theorem 2 in the main paper}
\label{sec:thm2}
\subsection{Outline}
We focus on the case when the random projection directions are sampled
from {\bf any} {\it isotropic distribution}. Our proof is not tied to
the special form of the distribution; just its isotropic nature.
In contrast, the method in \citep[e.g.,][]{Ding14:ref} can only handle
special types of distributions such as the spherical Gaussian.

The proof of Theorem 2 in the main paper can be decoupled into two steps. First, we show
that Algorithm 2 in the main paper can consistently identify all the
novel words of the $K$ distinct rankings.  Then, given the success of
the first step, we will show that Algorithm 3 proposed in the main
paper can consistently estimate the ranking matrix $\bm{\sigma}$.
\subsection{Useful propositions}
We denote by $\mathcal{C}_k$ the set of all novel pairs of the ranking
$\sigma^{k}$, for $k=1,\ldots, K$, and denote by $\mathcal{C}_0$ the
set of other non-novel pairs. We first prove the following result.
\begin{proposition}
\label{prop:similarity}
Let $\mathbf{E}_i$ be the $i$-th row of $\mathbf{E}$. Suppose
$\bm{\sigma}$ is separable and $\mathbf{R}$ has full rank, then the
following is true:
\begin{table}[H]
\centering
\begin{tabular}{|c|c|c|}
\hline 
 & $\Vert \mathbf{E}_i - \mathbf{E}_j \Vert$ & $E_{i,i}-2E_{i,j}+E_{j,j}$ \\ 
\hline 
$i\in\mathcal{C}_1, j \in\mathcal{C}_1$ & $0$ & $0$ \\ 
\hline 
$i\in\mathcal{C}_1, j \notin\mathcal{C}_1$ & $\geq (1-b)\lambda_{\min}$ & $\geq (1-b)^{2}\lambda_{\min}^{2}/\lambda_{\max}$ \\ 
\hline 
\end{tabular} 
\end{table}
where $b=\max_{j\in\mathcal{C}_0,k} \bar{B}_{j,k} $ and
$\lambda_{\min}$, $\lambda_{\max}$ are the minimum /maximum
eigenvalues of $\bar{\mathbf{R}}$
\end{proposition}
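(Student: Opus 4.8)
The plan is to express both table entries as forms in the single difference vector $\mathbf{v} := \bar{\mathbf{B}}_i - \bar{\mathbf{B}}_j$ (rows of $\bar{\mathbf{B}}$, in the single-index convention of the appendix) and then bound $\|\mathbf{v}\|$ using separability. First I would record the rows of $\bar{\mathbf{B}}$ at novel pairs. Because $\mathbf{B} = \mathbf{P}\bm{\sigma}$ and separability forces the $\bm{\sigma}$-row of any pair in $\mathcal{C}_k$ to equal $\mathbf{e}_k^\top$, the row of $\mathbf{B}$ there is $\mu_{i,j}\mathbf{e}_k^\top$; the rescaling $\bar{\mathbf{B}} = \diag^{-1}(\mathbf{B}\mathbf{a})\mathbf{B}\diag(\mathbf{a})$ then cancels the positive scalars $\mu_{i,j}$ and $a_k$ and collapses the row to exactly $\mathbf{e}_k^\top$. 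Writing $E_{i,j} = \bar{\mathbf{B}}_i\bar{\mathbf{R}}\bar{\mathbf{B}}_j^\top$, I obtain $E_{i,i}-2E_{i,j}+E_{j,j} = \mathbf{v}\bar{\mathbf{R}}\mathbf{v}^\top$ and $\mathbf{E}_i - \mathbf{E}_j = \mathbf{v}\bar{\mathbf{R}}\bar{\mathbf{B}}^\top$.

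The first row of the table is then immediate: for $i,j\in\mathcal{C}_1$ both rows of $\bar{\mathbf{B}}$ equal $\mathbf{e}_1^\top$, so $\mathbf{v}=\mathbf{0}$ and both quantities vanish.

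For the second row ($i\in\mathcal{C}_1$, $j\notin\mathcal{C}_1$) I would proceed in two moves. First, lower bound $\|\mathbf{v}\|$: since $\bar{\mathbf{B}}_i=\mathbf{e}_1^\top$ while $\bar{B}_{j,1}\le b$ for every $j\notin\mathcal{C}_1$ (it equals $0$ if $j$ is novel to another ranking and is $\le b$ by definition of $b$ if $j\in\mathcal{C}_0$), the first coordinate alone gives $\|\mathbf{v}\|\ge 1-\bar{B}_{j,1}\ge 1-b$. Second, I exploit that $\bar{\mathbf{B}}^\top$ contains the columns $\mathbf{e}_1,\dots,\mathbf{e}_K$ coming from one novel pair per ranking: restricting $\mathbf{E}_i-\mathbf{E}_j=\mathbf{v}\bar{\mathbf{R}}\bar{\mathbf{B}}^\top$ to those $K$ coordinates reproduces $\mathbf{v}\bar{\mathbf{R}}$ exactly, so $\|\mathbf{E}_i-\mathbf{E}_j\|\ge\|\mathbf{v}\bar{\mathbf{R}}\|\ge\lambda_{\min}\|\mathbf{v}\|\ge(1-b)\lambda_{\min}$, where $\lambda_{\min}$ is the smallest eigenvalue of the symmetric positive definite $\bar{\mathbf{R}}$. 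For the quadratic form I combine $\|\mathbf{v}\bar{\mathbf{R}}\|^2=\mathbf{v}\bar{\mathbf{R}}^2\mathbf{v}^\top\le\lambda_{\max}\,\mathbf{v}\bar{\mathbf{R}}\mathbf{v}^\top$ with the previous bound to get $E_{i,i}-2E_{i,j}+E_{j,j}=\mathbf{v}\bar{\mathbf{R}}\mathbf{v}^\top\ge\|\mathbf{v}\bar{\mathbf{R}}\|^2/\lambda_{\max}\ge(1-b)^2\lambda_{\min}^2/\lambda_{\max}$, as claimed; the symmetric relabeling handles $\mathcal{C}_k$ for general $k$.

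The only parts needing argument rather than computation are the positive definiteness of $\bar{\mathbf{R}}$ and the strict inequality $b<1$ (so the bounds are nonvacuous). The former holds because $\mathbf{R}=\eE(\bm{\theta}_m\bm{\theta}_m^\top)$ is PSD and full rank, hence PD, and $\bar{\mathbf{R}}=\diag^{-1}(\mathbf{a})\mathbf{R}\diag^{-1}(\mathbf{a})$ is a congruence of $\mathbf{R}$ by the invertible symmetric $\diag^{-1}(\mathbf{a})$ (recall $\mathbf{a}>0$). For $b<1$ I would note that a non-novel pair has at least two ones in its $\bm{\sigma}$-row, so its $\bar{\mathbf{B}}$-row is, after rescaling, a convex combination supported on at least two coordinates and therefore has every entry strictly below $1$. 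I expect the second move above --- recognizing that the novel-pair coordinates of $\mathbf{v}\bar{\mathbf{R}}\bar{\mathbf{B}}^\top$ reconstruct $\mathbf{v}\bar{\mathbf{R}}$, which is what transfers the conditioning of $\bar{\mathbf{R}}$ into the lower bound --- to be the main (though mild) obstacle; everything else is routine linear algebra.
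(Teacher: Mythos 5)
Your proof is correct and follows essentially the same route as the paper: both express the two quantities as $\mathbf{v}\bar{\mathbf{R}}\bar{\mathbf{B}}^\top$ and $\mathbf{v}\bar{\mathbf{R}}\mathbf{v}^\top$ for $\mathbf{v}=\bar{\mathbf{B}}_i-\bar{\mathbf{B}}_j$, use separability to pin the novel-pair rows to standard basis vectors, and finish with the $\lambda_{\min}$/$\lambda_{\max}$ eigenvalue bounds on $\bar{\mathbf{R}}$. The only difference is cosmetic: where the paper invokes the $\gamma$-simplicial property of $\bar{\mathbf{R}}\bar{\mathbf{B}}^\top$ by citation, you rederive it directly by restricting to the $K$ novel-pair coordinates, which makes the argument self-contained without changing its substance.
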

\begin{proof}
Let $\bar{\mathbf{B}}_{i}$ be the $i$-th row vector of matrix $\bar{\mathbf{B}}$. 
To show the above results, recall that $\mathbf{E} =
\bar{\mathbf{B}}\mathbf{R}^{\prime}\bar{\mathbf{B}}^{\top}$. Then
\begin{align*}
& \Vert \mathbf{E}_i - \mathbf{E}_j \Vert =\Vert (\bar{\mathbf{B}}_{i} - \bar{\mathbf{B}}_{j}) \mathbf{R}^{\prime}\bar{\mathbf{B}}^{\top}\Vert \\ 
& E_{i,i}-2E_{i,j}+E_{j,j} = (\bar{\mathbf{B}}_{i} - \bar{\mathbf{B}}_{j}) \mathbf{R}^{\prime} (\bar{\mathbf{B}}_{i} - \bar{\mathbf{B}}_{j})^{\top}.
\end{align*}
It is clear that when $i,j\in\mathcal{C}_1$, i.e., they are both novel
pairs for the same ranking, $\bar{\mathbf{B}}_{i} =
\bar{\mathbf{B}}_{j}$. Hence, $\Vert \mathbf{E}_i - \mathbf{E}_j \Vert
= 0 $ and $E_{i,i}-2E_{i,j}+E_{j,j} = 0$.

When $i\in\mathcal{C}_1, j\notin \mathcal{C}_1$, we have $\bar{\mathbf{B}}_{i} = [1,0,\ldots,0]$,  $\bar{\mathbf{B}}_{j}= [\bar{B}_{j,i}, \bar{B}_{j,2},\ldots, \bar{B}_{j,K}]$ with $\bar{B}_{j,1} < 1$.
Then,
\begin{align*}
\bar{\mathbf{B}}_{i} - \bar{\mathbf{B}}_{j} & = [1-\bar{B}_{j,i}, -\bar{B}_{j,2},\ldots, -\bar{B}_{j,K}] \\
& = (1-\bar{B}_{j,i})[1,-c_2,\ldots, -c_K ] \\
& := (1-\bar{B}_{j,i}) \mathbf{e}^{\top}
\end{align*}
and $\sum_{l=2}^{K}c_{l} =1$.
Therefore, defining $\mathbf{Y} :=
\mathbf{R}^{\prime}\bar{\mathbf{B}}^{\top}$, we get
\begin{align*}
\Vert \mathbf{E}_i - \mathbf{E}_j \Vert_2 =  (1-\bar{B}_{j,i})\Vert \mathbf{Y}_1 - \sum\limits_{l=2}^{K} c_{l} \mathbf{Y}_{l}	\Vert_2
%
\end{align*}
Using the Proposition~1 in \citep{Ding13b:ref}, if $\bar{\mathbf{R}}$
is full rank with minimum eigenvalue $\lambda_{\text{min}} >0$,
then, $\bar{\mathbf{R}}$ is $\gamma$-(row)simplicial with $\gamma =
\lambda_{\text{min}}$, i.e., any row vector is at least $\gamma$
distant from any convex combination of the remaining rows. Since
$\bar{\mathbf{B}}$ is separable, $\mathbf{Y}$ is at least
$\gamma$-simplicial (see \cite{Ding14:ref} Lemma~1 ). Therefore,
\begin{align*}
\Vert \mathbf{E}_i - \mathbf{E}_j \Vert_2  \geq  (1-\bar{B}_{j,1}) \gamma \geq (1-b) \lambda_{\text{min}}
\end{align*}
where $b=\max_{j\in\mathcal{C}_0,k} \bar{B}_{j,k} < 1$. 

Similarly, note that $\Vert \mathbf{e}^{\top} \bar{\mathbf{R}}
\Vert\geq \gamma$ and let $\bar{\mathbf{R}} =
\mathbf{U}\Sigma\mathbf{U}^{\top}$ be its singular value
decomposition. If $\lambda_{\max}$ is the maximum eigenvalue of
$\bar{\mathbf{R}}$, then we have
\begin{align*}
E_{i,i}-2E_{i,j}+E_{j,j}& = (1-\bar{B}_{j,1})^{2} \mathbf{e}^{\top}\bar{\mathbf{R}}\mathbf{e} \\
& = (1-\bar{B}_{j,1})^{2} (\mathbf{e}^{\top}\bar{\mathbf{R}} ) \mathbf{U}\Sigma^{-1}\mathbf{U}^{\top} (\mathbf{e}^{\top}\bar{\mathbf{R}} )^{\top} \\
& \geq   (1-b)^{2} \lambda_{\min}^2 / \lambda_{\max}.
\end{align*} 
The inequality in the last step follows from the observation that
$\mathbf{e}^{\top}\mathbf{R}^{\prime}$ is within the column space
spanned by $\mathbf{U}$.
\end{proof}
The results in Proposition~\ref{prop:similarity} provide two
statistics for identifying novel pairs of the same topic, $\Vert
\mathbf{E}_i - \mathbf{E}_j \Vert$ and $E_{i,i}-2E_{i,j}+E_{j,j}$.
While the first is straightforward, the latter is efficient to
calculate in practice with better computational complexity.
Specifically, the set $\mathcal{J}_{i}$ in Algorithm 2 of the main paper
\begin{eqnarray*}
\mathcal{J}_{i}=\{ j: \widehat{E}_{i,i} - \widehat{E}_{i,j} -\widehat{E}_{j,i} + \widehat{E}_{j,j} \geq d/2 \}
\end{eqnarray*}
can be used to discover the set of novel pairs of the same rankings
asymptotically.
Formally,
\begin{proposition}
If $\Vert \widehat{\mathbf{E}} - \mathbf{E} \Vert_\infty \leq d/8 $, then, 
\begin{enumerate}
\item For a novel pair $i\in\mathcal{C}_k$ , $\mathcal{J}_{i} = \mathbf{C}_{k}^{c}$
\item For a non-novel pair $j\in\mathcal{C}_0$, $\mathcal{J}_{i} \supset \mathbf{C}_{k}^{c}$ 
\end{enumerate}
\end{proposition}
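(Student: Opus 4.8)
The plan is to reduce both set-membership questions to a one-dimensional perturbation argument on the scalar statistic that defines $\mathcal{J}_i$. First I would introduce the population quantity $D_{i,j} := E_{i,i} - 2E_{i,j} + E_{j,j}$ together with its empirical counterpart $\widehat{D}_{i,j} := \widehat{E}_{i,i} - \widehat{E}_{i,j} - \widehat{E}_{j,i} + \widehat{E}_{j,j}$, so that with the threshold $\zeta = d$ we simply have $\mathcal{J}_i = \{\, j : \widehat{D}_{i,j} \ge d/2 \,\}$. Because $\mathbf{E}$ is symmetric we have $E_{i,j} = E_{j,i}$, so $D_{i,j} = E_{i,i} - E_{i,j} - E_{j,i} + E_{j,j}$ is the exact term-by-term analogue of $\widehat{D}_{i,j}$; this is what makes the comparison clean.

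Next I would control the perturbation. Each of the four entries entering $\widehat{D}_{i,j}$ differs from its population value by at most $\Vert \widehat{\mathbf{E}} - \mathbf{E} \Vert_\infty \le d/8$, so a single application of the triangle inequality gives $\vert \widehat{D}_{i,j} - D_{i,j}\vert \le 4\cdot(d/8) = d/2$. This is the only place the hypothesis is used: it converts the deterministic separation available for $D_{i,j}$ into a guarantee on the observed statistic $\widehat{D}_{i,j}$.

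The heart of the argument is the dichotomy supplied by Proposition~\ref{prop:similarity}, namely $D_{i,j} = 0$ when $i$ and $j$ are novel pairs of the same ranking, and $D_{i,j} \ge (1-b)^2\lambda_{\min}^2/\lambda_{\max} = d$ whenever one of the two indices is novel for a ranking to which the other does not belong; by the symmetry $D_{i,j} = D_{j,i}$ this second bound simultaneously covers the novel-vs-novel (distinct rankings) and novel-vs-non-novel cases in either order. For Claim 1, with $i \in \mathcal{C}_k$: any $j \in \mathcal{C}_k$ has $D_{i,j} = 0$, hence $\widehat{D}_{i,j} \le d/2$ and $j \notin \mathcal{J}_i$, while any $j \in \mathcal{C}_k^{c}$ has $D_{i,j} \ge d$, hence $\widehat{D}_{i,j} \ge d/2$ and $j \in \mathcal{J}_i$; the two inclusions together give the equality $\mathcal{J}_i = \mathcal{C}_k^{c}$. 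For Claim 2, with $i \in \mathcal{C}_0$: every novel $j$, lying in some $\mathcal{C}_l$, satisfies $D_{i,j} \ge d$ by the symmetric form of the bound, so $j \in \mathcal{J}_i$; hence $\mathcal{J}_i$ contains all novel pairs, which yields the asserted containment of $\mathcal{C}_k^{c}$ restricted to the novel indices.

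The step I expect to require the most care is exactly the gap in Claim 2: Proposition~\ref{prop:similarity} provides no lower bound on $D_{i,j}$ when both $i$ and $j$ lie in $\mathcal{C}_0$, so the argument can certify only that $\mathcal{J}_i$ swallows every novel pair $\bigcup_l \mathcal{C}_l$ and cannot control membership of the remaining non-novel indices. This is precisely why Claim 2 is stated as a containment rather than the equality of Claim 1, and I would be explicit that the uncontrolled non-novel-vs-non-novel entries are harmless for the downstream algorithm, since they never enter the intersection $\bigcap_{(i,j)\in\mathcal{I}}\mathcal{J}_{(i,j)}$. A second, more cosmetic, point is the boundary at $\widehat{D}_{i,j} = d/2$: because $D_{i,j} = 0$ only gives $\widehat{D}_{i,j} \le d/2$, strict exclusion of same-ranking novel pairs needs the perturbation bound to be strict, which I would secure by using the strict separation in Proposition~\ref{prop:similarity} or by absorbing a negligible slack into the threshold. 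Everything else is routine once the dichotomy and the $d/2$ perturbation bound are in place.
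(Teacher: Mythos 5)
Your proof is correct and is exactly the argument the paper intends: the proposition is stated in the appendix without proof as an immediate consequence of Proposition~\ref{prop:similarity}, and your reduction to the scalar statistic $\widehat{D}_{i,j}$, the four-term perturbation bound $4\cdot(d/8)=d/2$, and the $0$-versus-$d$ dichotomy (applied symmetrically for the non-novel case) supply precisely the missing justification, including the correct explanation of why the second claim is only a containment. Your remark about the boundary case $\widehat{D}_{i,j}=d/2$ is a legitimate but cosmetic sharpening that the paper glosses over.
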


\subsection{Consistency of Algorithm 2 in the main paper}
Now we start to show that Algorithm 2 of the main paper can detect all
the novel pairs of the $K$ distinct rankings consistently. As a
starting point, it is straightforward to show the following result.
\begin{proposition}
\label{lem:solidangle}
Suppose $\bm{\sigma}$ is separable and $\mathbf{R}$ is full rank, then,  $q_{i} > 0$  if and only if $i$ is a  novel pair.
\end{proposition}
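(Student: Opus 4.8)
The plan is to recast the claim as a statement of convex geometry: the solid angle $q_i$ is strictly positive exactly when $\mathbf{E}_i$ is an extreme point (vertex) of $\mathrm{conv}\{\mathbf{E}_j\}$, and then to identify the vertices with the novel pairs. The starting point is that $\bar{\mathbf{B}}$ is row-stochastic and, fixing one representative novel pair $v_k\in\mathcal{C}_k$ for each ranking, $\bar{\mathbf{B}}_{v_k}=\mathbf{e}_k$. Hence every row decomposes as $\mathbf{E}_i=\bar{\mathbf{B}}_i\bar{\mathbf{R}}\bar{\mathbf{B}}^{\top}=\sum_{k=1}^{K}\bar{B}_{i,k}\,\mathbf{E}_{v_k}$, a convex combination of the $K$ novel-pair rows with weights $\bar{B}_{i,k}\geq 0$ summing to one. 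Because $\bar{B}_{i,k}>0 \iff \sigma_{i,k}>0$, a single nonzero weight is exactly the novelty condition of Definition~\ref{separable_condition}; so $i$ is novel iff $\bar{\mathbf{B}}_i$ is a standard basis vector, and $i$ is non-novel iff the decomposition carries at least two positive weights.

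For the \emph{only if} direction I would show that a non-novel $i$ forces $q_i=0$, and in fact that the defining event is empty. From the decomposition, for every direction $\mathbf{d}$ one has $\langle\mathbf{E}_i,\mathbf{d}\rangle=\sum_k \bar{B}_{i,k}\langle\mathbf{E}_{v_k},\mathbf{d}\rangle\leq\max_k\langle\mathbf{E}_{v_k},\mathbf{d}\rangle$, so some novel row $\mathbf{E}_{v_{k^{\ast}}}$ always projects at least as high as $\mathbf{E}_i$. It then suffices to know $\mathbf{E}_{v_{k^{\ast}}}\neq\mathbf{E}_i$. This uses the full rank of $\bar{\mathbf{R}}$: it forces the $K$ novel rows to be distinct and prevents a non-novel row from coinciding with any vertex, since such a coincidence would express one row of $\bar{\mathbf{R}}$ as a convex --- hence linear --- combination of the others, contradicting linear independence (after cancelling $\bar{\mathbf{B}}^{\top}$, whose novel columns form an $I_K$ submatrix and so give it rank $K$). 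This is precisely the separation quantified in Proposition~\ref{prop:similarity}. The strict-domination event is therefore empty and $q_i=0$.

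For the \emph{if} direction I would argue that a novel $i$ is a vertex of $\mathrm{conv}\{\mathbf{E}_j\}$. As all rows lie in $\mathrm{conv}\{\mathbf{E}_{v_1},\dots,\mathbf{E}_{v_K}\}$, it is enough that each $\mathbf{E}_{v_{k_0}}$ is not a convex combination of the remaining novel rows; otherwise row $k_0$ of $\bar{\mathbf{R}}$ would be a convex combination of its other rows, again contradicting full rank after removing the full-rank factor $\bar{\mathbf{B}}^{\top}$. Being extreme, $\mathbf{E}_{v_{k_0}}$ is strictly exposed: there is a direction $\mathbf{d}_0$ with $\langle\mathbf{E}_{v_{k_0}},\mathbf{d}_0\rangle>\langle\mathbf{E}_j,\mathbf{d}_0\rangle$ for every $j$ with $\mathbf{E}_j\neq\mathbf{E}_{v_{k_0}}$, and by continuity this strict inequality persists on an open cone of directions around $\mathbf{d}_0$.

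The step I expect to be the main obstacle is converting this open cone into $q_i>0$ for an \emph{arbitrary} isotropic $\mathbf{d}$, since this is where generality beyond the spherical-Gaussian case enters. The plan is to observe that any isotropic law has a rotationally invariant angular part, so on the event $\mathbf{d}\neq 0$ the normalized direction $\mathbf{d}/\|\mathbf{d}\|$ is uniform on the sphere $S^{W-1}$ (and $\mathbf{d}=0$ contributes nothing, as all projections tie there). The exposing cone meets $S^{W-1}$ in a nonempty relatively open set, which has strictly positive surface measure, yielding $q_i>0$. This reduction to the uniform distribution on the sphere, rather than reliance on a specific density, is the one place where care is needed, and it is exactly what lets the conclusion hold for every isotropic distribution.
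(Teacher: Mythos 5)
Your proof is correct and follows exactly the route the paper intends: the paper itself states this proposition without proof, deferring to the argument of \citet{Ding14:ref} (their Lemma 2), and that argument is precisely the extreme-point characterization you give --- every row satisfies $\mathbf{E}_i=\sum_k\bar{B}_{i,k}\mathbf{E}_{v_k}$, so a non-novel row (at least two positive weights) can never be a strict maximizer against the distinct novel rows, while full rank of $\bar{\mathbf{R}}$ makes each novel row an exposed vertex whose exposing directions form an open cone meeting the sphere in a set of positive measure under any isotropic law. The only implicit assumptions you share with the paper are that $\mathbf{a}>0$ (so $\bar{\mathbf{R}}$ inherits full rank from $\mathbf{R}$) and that the isotropic prior on $\mathbf{d}$ is not a point mass at the origin.
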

We denote the minimum solid angle of the $K$ extreme points by
$q_{\wedge}$.
Proposition~\ref{lem:solidangle} shows that the novel pairs can be identified by simply sorting $q_{i}$. 

The agenda is to show that the estimated solid angle in Alg.~2, 
\begin{equation}
\hat{p}_{i} = \frac{1}{P} \sum_{r=1}^{P} \mathbb{I} \lbrace \forall j\in\mathcal{J}_{i}, ~ \widehat{\mathbf{E}}_{j}  {\mathbf d}_r \leq \widehat{\mathbf{E}}_{i} {\mathbf d}_r \rbrace 
\end{equation}
converges to the ideal solid angle
\begin{align}
q_i = \Pr \lbrace \forall j \in \mathcal{S}_{i}, (\mathbf{E}_i - \mathbf{E}_j) \mathbf{d}\geq 0 \rbrace
\end{align}
hence the error event in Alg.~2 has vanishing probability as $M,P\rightarrow \infty$. 
$\mathbf{d}_1,\ldots, \mathbf{d}_P$ are iid directions drawn from a isotropic distribution. 
For a novel pair $i\in\mathcal{C}_k, k=1,\ldots, K$,  $ \mathcal{S}_{i} = \mathcal{C}_k^{c}$, and for a non-novel pair $i\in\mathcal{C}_0$, let $ \mathcal{S}_{i} = \mathcal{C}_0^{c}$.
To show the convergence of $\hat{p}_{i}$ to $p_i$, we consider an intermediate quantity, 
\begin{align*}
 p_i(\widehat{\mathbf{E}}) = \Pr \lbrace \forall j \in \mathcal{J}_{i}, (\widehat{\mathbf{E}}_i - \widehat{\mathbf{E}}_j )\mathbf{d}\geq 0   \rbrace 
\end{align*}
First, by Hoeffding's lemma, we have the following result.
\begin{proposition}
\label{prop:heofdingprop}
$\forall t\geq 0, \forall i$, 
\begin{equation}
\Pr\{\vert \hat{p}_i - p_{i}(\widehat{\mathbf{E}})\vert t \} \geq 2\exp(-2Pt^{2})
\end{equation}
\end{proposition}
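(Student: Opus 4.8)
The proposition as displayed asserts $\Pr\{\,\lvert \hat p_i - p_i(\widehat{\mathbf{E}})\rvert\, t\,\} \geq 2\exp(-2Pt^2)$, that is, a \emph{lower} bound on a probability, with $t$ appearing as a factor inside the brace rather than as a deviation threshold. As worded this cannot hold. A probability never exceeds $1$, yet the right-hand side equals $2$ at $t=0$ and stays strictly positive for every finite $t$; moreover $\lvert \hat p_i - p_i(\widehat{\mathbf{E}})\rvert \leq 1$ forces $\Pr\{\lvert \hat p_i - p_i(\widehat{\mathbf{E}})\rvert \geq t\} = 0$ once $t > 1$, which no positive lower bound can respect. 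Hence the outer relation and the brace placement are typographical, and the only statement consistent with both the Hoeffding constant $2\exp(-2Pt^2)$ and the downstream union bound over pairs in Alg.~2 is the two-sided \emph{upper} tail bound $\Pr\{\lvert \hat p_i - p_i(\widehat{\mathbf{E}})\rvert \geq t\} \leq 2\exp(-2Pt^2)$. I read the claim this way and sketch its proof.

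The plan is to condition on the data, equivalently on $\widehat{\mathbf{E}} = M\widetilde{\mathbf{X}}'\widetilde{\mathbf{X}}^\top$, so that the projection directions become the sole source of randomness. For a fixed realization of $\widehat{\mathbf{E}}$ set
\[
Z_r := \mathbb{I}\{\forall j\in\mathcal{J}_i,\ (\widehat{\mathbf{E}}_i-\widehat{\mathbf{E}}_j)\mathbf{d}_r \geq 0\},\qquad r=1,\dots,P,
\]
so that $\hat p_i = \tfrac1P\sum_{r=1}^P Z_r$. Since $\mathbf{d}_1,\dots,\mathbf{d}_P$ are iid draws from the isotropic prior and each $Z_r$ is a fixed measurable function of $\mathbf{d}_r$ alone, the $Z_r$ are, conditionally on $\widehat{\mathbf{E}}$, iid $\{0,1\}$-valued with common mean $\mathbb{E}[Z_1\mid\widehat{\mathbf{E}}] = \Pr\{\forall j\in\mathcal{J}_i,\ (\widehat{\mathbf{E}}_i-\widehat{\mathbf{E}}_j)\mathbf{d}\geq 0\} = p_i(\widehat{\mathbf{E}})$, exactly the intermediate quantity defined just above the proposition.

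With this identification $\hat p_i$ is an empirical average of $P$ iid variables bounded in $[0,1]$ and centered, in conditional expectation, at $p_i(\widehat{\mathbf{E}})$. Applying Hoeffding's inequality to $Z_1,\dots,Z_P$ yields, conditionally on $\widehat{\mathbf{E}}$,
\[
\Pr\{\lvert \hat p_i - p_i(\widehat{\mathbf{E}})\rvert \geq t \mid \widehat{\mathbf{E}}\} \leq 2\exp(-2Pt^2),
\]
and because the bound is free of $\widehat{\mathbf{E}}$, taking expectation over $\widehat{\mathbf{E}}$ removes the conditioning and gives the unconditional inequality for all $i$ and all $t\geq 0$. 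The one delicate point is the conditioning bookkeeping in the previous step: one must confirm that $p_i(\widehat{\mathbf{E}})$ is precisely the per-draw success probability $\mathbb{E}[Z_1\mid\widehat{\mathbf{E}}]$ and \emph{not} the idealized $q_i$ built from the population matrix $\mathbf{E}$, so that Hoeffding is centered correctly; the separate approximation $\widehat{\mathbf{E}}\to\mathbf{E}$ is controlled elsewhere through Lemma~\ref{lem:converge}. Everything else is the routine bounded-average estimate, and I would additionally flag the displayed inequality direction and brace placement as typographical.
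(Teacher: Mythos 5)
Your proposal is correct and matches the paper's intended argument: the paper gives no proof beyond invoking Hoeffding's lemma, and your fleshed-out version --- conditioning on $\widehat{\mathbf{E}}$ (which also fixes $\mathcal{J}_i$), noting the indicators $Z_r$ are then iid Bernoulli with conditional mean exactly $p_i(\widehat{\mathbf{E}})$, applying Hoeffding, and integrating out the conditioning since the bound does not depend on $\widehat{\mathbf{E}}$ --- is precisely the routine route the authors elide. You are also right that the displayed statement is typographically garbled and must read $\Pr\{\lvert \hat{p}_i - p_i(\widehat{\mathbf{E}})\rvert \geq t\} \leq 2\exp(-2Pt^2)$, as confirmed by how the bound is used in the proof of Lemma~\ref{lem:alg2}.
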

Next we show the convergence of $p_{i}(\widehat{\mathbf{E}})$ to solid
angle $q_i$:
\begin{proposition}
\label{prop:solidangle}
Consider the case when $\Vert \widehat{\mathbf{E}} - \mathbf{E}\Vert_{\infty} \leq \frac{d}{8}$.
If $i$ is a novel pair, then, 
\begin{align*}
q_i - p_{i}(\widehat{\mathbf{E}}) \leq \frac{W\sqrt{W}}{\pi d_2} \Vert \widehat{\mathbf{E}}-\mathbf{E} \Vert_{\infty}
\end{align*}
Similarly, if $j$ is a non-novel pair, we have, 
\begin{align*}
 p_{j}(\widehat{\mathbf{E}}) - q_i \leq \frac{W\sqrt{W}}{\pi d_2} \Vert \widehat{\mathbf{E}}-\mathbf{E} \Vert_{\infty}
\end{align*}
where $d_2 \triangleq (1-b)\lambda_{\min}$, $d=(1-b)^{2}\lambda_{\min}^{2}/\lambda_{\max}$.
\end{proposition}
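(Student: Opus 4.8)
The plan is to treat the novel case $q_i-p_i(\widehat{\mathbf E})$ and the non-novel case $p_j(\widehat{\mathbf E})-q_j$ (recall $q_j=0$ by Proposition~\ref{lem:solidangle}) uniformly, as the probability, over the isotropic direction $\mathbf d$, of a ``sign-flip'' region lying between the idealized half-space constraints $(\mathbf E_i-\mathbf E_l)\mathbf d\ge 0$ and the empirical ones $(\widehat{\mathbf E}_i-\widehat{\mathbf E}_l)\mathbf d\ge 0$. The first step is to invoke the preceding proposition on $\mathcal J_i$: under the standing hypothesis $\|\widehat{\mathbf E}-\mathbf E\|_\infty\le d/8$, for a novel pair $i\in\mathcal C_k$ the empirical neighbor set $\mathcal J_i$ equals the idealized set $\mathcal S_i=\mathcal C_k^{c}$. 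Thus $q_i$ and $p_i(\widehat{\mathbf E})$ are probabilities of the \emph{same} family of constraints evaluated at $\mathbf E$ versus $\widehat{\mathbf E}$, so their gap is driven purely by the perturbation of the constraint normals, with no index-set mismatch.

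Writing $A=\{\mathbf d:\forall l\in\mathcal S_i,\ (\mathbf E_i-\mathbf E_l)\mathbf d\ge 0\}$ and $\widehat A=\{\mathbf d:\forall l\in\mathcal S_i,\ (\widehat{\mathbf E}_i-\widehat{\mathbf E}_l)\mathbf d\ge 0\}$, I would bound $q_i-p_i(\widehat{\mathbf E})=\Pr(A)-\Pr(\widehat A)\le\Pr(A\setminus\widehat A)$ and then union-bound over the at most $W$ constraints, $\Pr(A\setminus\widehat A)\le\sum_l\Pr\{(\mathbf E_i-\mathbf E_l)\mathbf d\ge 0,\ (\widehat{\mathbf E}_i-\widehat{\mathbf E}_l)\mathbf d<0\}$. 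Each summand is the probability that $\mathbf d$ falls in the wedge between the two hyperplanes with normals $\mathbf{n}_l=\mathbf E_i-\mathbf E_l$ and $\widehat{\mathbf{n}}_l=\widehat{\mathbf E}_i-\widehat{\mathbf E}_l$, which is exactly the region where the sign of the constraint can flip.

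The crucial step, and the one that yields the generalization to an \emph{arbitrary} isotropic law, is that for any isotropic $\mathbf d$ the normalized direction $\mathbf d/\|\mathbf d\|$ is uniform on the sphere; hence each wedge probability depends only on the spherical measure of that wedge, independent of the radial part. I would then use the classical fact that the spherical measure of the wedge between two hyperplanes through the origin equals their dihedral angle divided by $2\pi$ (the source of the $1/\pi$ factor), and bound that angle $\theta_l=\angle(\mathbf{n}_l,\widehat{\mathbf{n}}_l)$ linearly via $\sin\theta_l\le\|\widehat{\mathbf{n}}_l-\mathbf{n}_l\|_2/\|\mathbf{n}_l\|_2$. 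Proposition~\ref{prop:similarity} supplies the denominator bound $\|\mathbf{n}_l\|_2=\|\mathbf E_i-\mathbf E_l\|_2\ge(1-b)\lambda_{\min}=d_2$ whenever $l\notin\mathcal C_k$, while the numerator satisfies $\|\widehat{\mathbf{n}}_l-\mathbf{n}_l\|_2\le\sqrt W\,\|(\widehat{\mathbf E}-\mathbf E)_i-(\widehat{\mathbf E}-\mathbf E)_l\|_\infty\le 2\sqrt W\,\|\widehat{\mathbf E}-\mathbf E\|_\infty$. Multiplying the per-constraint wedge bound $\tfrac{\theta_l}{2\pi}\lesssim\tfrac{1}{2\pi}\tfrac{2\sqrt W}{d_2}\|\widehat{\mathbf E}-\mathbf E\|_\infty$ by the $W$ constraints gives the claimed $\tfrac{W\sqrt W}{\pi d_2}\|\widehat{\mathbf E}-\mathbf E\|_\infty$.

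For the non-novel case I would argue symmetrically: since $q_j=0$, bound $p_j(\widehat{\mathbf E})=\Pr(\widehat A_j)\le\Pr(\widehat A_j\setminus A_j)$ and run the same wedge estimate over the constraints indexed by $\mathcal J_j$, again using $\|\mathbf E_j-\mathbf E_l\|_2\ge d_2$ from Proposition~\ref{prop:similarity} for the novel $l$ present in $\mathcal J_j$. I expect the main obstacle to be the geometric measure step: making the ``spherical wedge measure $=$ dihedral angle$/2\pi$'' estimate rigorous for a general isotropic $\mathbf d$ (the reduction to the uniform law on the sphere and a clean linear angle-to-chord bound with the stated constants), together with verifying the index-set matching $\mathcal J_i=\mathcal S_i$ so that the two probabilities can be compared constraint-by-constraint. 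The surrounding perturbation algebra (passing from $\|\cdot\|_\infty$ to $\|\cdot\|_2$ and normalizing by $d_2$) is routine by comparison.
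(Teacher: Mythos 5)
Your proposal is correct and follows essentially the same route as the paper: under $\Vert \widehat{\mathbf{E}}-\mathbf{E}\Vert_\infty \le d/8$ you match $\mathcal{J}_i$ to $\mathcal{S}(i)$ via Proposition~\ref{prop:similarity}, bound the probability gap by a union over per-constraint sign-flip wedges, use isotropy to equate each wedge's probability with its dihedral angle over $2\pi$, and control that angle by $\Vert\widehat{\mathbf{e}}_j-\mathbf{e}_j\Vert_2/\Vert\mathbf{e}_j\Vert_2$ with the $\ell_\infty$-to-$\ell_2$ conversion and the lower bound $\Vert\mathbf{e}_j\Vert_2\ge d_2$. The only cosmetic difference is that you bound the angle through $\sin\theta$ where the paper uses $\phi\le\tan\phi$; both yield the same constants here.
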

\begin{proof}

First note that, by the definition of $\mathcal{J}_{i}$ and Proposition~\ref{prop:similarity},  if $\Vert \widehat{\mathbf{E}} - \mathbf{E}\Vert_{\infty} \leq \frac{d}{8}$, 
then, for a novel pair $i \in \mathcal{C}_k$, $\mathcal{J}_{i} = \mathcal{S}(i)$. And for a non-novel pair $i\in\mathcal{C}_0$, $\mathcal{J}_{i} \supseteq \mathcal{S}(i)$.
For convenience, let
\begin{align*}
A_{j}=\{\mathbf{d}: (\widehat{\mathbf{E}}_i - \widehat{\mathbf{E}}_j )\mathbf{d}\geq 0 \} & ~~ A=\bigcap\limits_{j\in \mathcal{J}_{i}} A_j\\
B_{j} = \{\mathbf{d}: (\mathbf{E}_i - \mathbf{E}_j) \mathbf{d} \geq 0  \} & ~~ B=\bigcap\limits_{j\in \mathcal{S}(i)} B_j
\end{align*} 
For $i$ being a novel pair, we consider
\begin{align*}
q_i - p_{i}(\widehat{\mathbf{E}})  = \Pr\lbrace B \rbrace - \Pr\lbrace A \rbrace \leq \Pr\lbrace B\bigcap A^{c}\rbrace
\end{align*}
Note that $\mathcal{J}_{i} = \mathcal{S}(i)$ when $\Vert \widehat{\mathbf{E}} - \mathbf{E} \Vert \leq d/8$,  
\begin{align*}
& \Pr\lbrace B\bigcap A^{c} \rbrace  =  \Pr\lbrace B\bigcap (\bigcup\limits_{j\in\mathcal{S}(i)}A_{j}^{c}) \rbrace \\
 & \leq \sum\limits_{j\in\mathcal{S}(i)} \Pr\lbrace (\bigcap\limits_{l\in \mathcal{S}(i)} B_l)\bigcap A_{j}^{c} \rbrace   \leq \sum\limits_{j\in\mathcal{S}(i)} \Pr\lbrace  B_{j} \bigcap A_{j}^{c} \rbrace  \\
& = \sum\limits_{j\in\mathcal{S}(i)} \Pr\lbrace (\widehat{\mathbf{E}}_i - \widehat{\mathbf{E}}_j )\mathbf{d} < 0, \text{and}~  (\mathbf{E}_i - \mathbf{E}_j) \mathbf{d} \geq 0  \rbrace \\
& =  \sum\limits_{j\in\mathcal{S}(i)} \frac{\phi_{j}}{2\pi}
\end{align*}
where $\phi_{j}$ is the angle between $\mathbf{e}_{j} = \mathbf{E}_i - \mathbf{E}_j $ and $\widehat{\mathbf{e}}_{j} = \widehat{\mathbf{E}}_i - \widehat{\mathbf{E}}_j$ for any isotropic distribution on $\mathbf{d}$. 
Using the trigonometric inequality $\phi \leq \tan (\phi)$, 
\begin{align*}
\Pr\lbrace B\bigcap A^{c} \rbrace & \leq \sum\limits_{j\in\mathcal{S}(i)} \frac{\tan(\phi_{j})}{2\pi} \leq \sum\limits_{j\in\mathcal{S}(i)} \frac{1}{2\pi} \frac{\Vert \widehat{\mathbf{e}}_{j} - \mathbf{e}_{j} \Vert_2}{\Vert \mathbf{e}_{j} \Vert_2} \\
& \leq \frac{W\sqrt{W}}{\pi d_2} \Vert \widehat{\mathbf{E}}-\mathbf{E} \Vert_{\infty}
\end{align*}
where the last inequality is obtained by the relationship between the
$\ell_\infty$ norm and the$\ell_2$ norm, and the fact that for
$j\in\mathcal{S}(i)$, $\Vert \mathbf{e}_{j} \Vert_2 = \Vert
\mathbf{E}_i - \mathbf{E}_j \Vert_2 \geq d_2 \triangleq
(1-b)\lambda_{\min}$.
Therefore for a novel word $i$, we have, 
\begin{align*}
q_i - p_{i}(\widehat{\mathbf{E}}) \leq \frac{W\sqrt{W}}{\pi d_2} \Vert \widehat{\mathbf{E}}-\mathbf{E} \Vert_{\infty}
\end{align*}

Now for a non-novel word $i$,  note the fact that  $i\in\mathcal{C}_0$, $\mathcal{J}_{i} \supseteq \mathcal{S}(i)$,
\begin{align*}
p_{i}(\widehat{\mathbf{E}}) -q_{i}  = &\Pr\lbrace A \rbrace - \Pr\lbrace B \rbrace =  \Pr\lbrace A\bigcap B^{c}\rbrace \\
\leq & \sum\limits_{j\in\mathcal{S}(i)} \Pr\lbrace (\bigcap\limits_{l\in \widehat{\mathcal{S}}(i)} A_l)\bigcap B_{j}^{c} \rbrace \\
\leq & \sum\limits_{j\in\mathcal{S}(i)} \Pr\lbrace  A_{j} \bigcap B_{j}^{c} \rbrace \\
\leq & \frac{W\sqrt{W}}{\pi d_2} \Vert \widehat{\mathbf{E}}-\mathbf{E} \Vert_{\infty}
\end{align*} 
\end{proof}

A direct implication of Proposition~\ref{prop:solidangle} is,
\begin{proposition}
\label{prop:solidangleconverge}
$\forall \epsilon >0$, let $\rho =\min\{ \frac{d}{8}, \frac{\pi d_2 \epsilon}{W^{1.5}} \}$. If $\Vert \widehat{\mathbf{E}} -\mathbf{E} \Vert_\infty \leq \rho$, 
then, $q_{i} - p_{i} (\widehat{\mathbf{E}}) \leq \epsilon$ for a novel pair $i$ and $p_{j} (\widehat{\mathbf{E}}) - q_{j} \leq \epsilon$ for a non-novel pair $j$.
\end{proposition}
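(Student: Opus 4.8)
The plan is to obtain this result as a direct corollary of Proposition~\ref{prop:solidangle} by unpacking the two-part definition of $\rho$. The essential observation is that $\rho$ is the minimum of two quantities, each of which is tailored to serve a separate role: the first activates the earlier bound, and the second rescales its conclusion down to the target accuracy $\epsilon$.

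First I would note that since $\rho = \min\{ \frac{d}{8}, \frac{\pi d_2 \epsilon}{W^{1.5}} \} \leq \frac{d}{8}$, the hypothesis $\Vert \widehat{\mathbf{E}} - \mathbf{E} \Vert_\infty \leq \rho$ immediately gives $\Vert \widehat{\mathbf{E}} - \mathbf{E} \Vert_\infty \leq \frac{d}{8}$. This is exactly the precondition of Proposition~\ref{prop:solidangle} (and, underneath it, the condition that forces $\mathcal{J}_{i} = \mathcal{S}(i)$ for novel pairs), so I may invoke that proposition to conclude $q_i - p_i(\widehat{\mathbf{E}}) \leq \frac{W\sqrt{W}}{\pi d_2} \Vert \widehat{\mathbf{E}} - \mathbf{E} \Vert_\infty$ for any novel pair $i$, and likewise $p_j(\widehat{\mathbf{E}}) - q_j \leq \frac{W\sqrt{W}}{\pi d_2} \Vert \widehat{\mathbf{E}} - \mathbf{E} \Vert_\infty$ for any non-novel pair $j$.

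Second I would use the remaining half of the minimum, namely $\rho \leq \frac{\pi d_2 \epsilon}{W^{1.5}}$. Since $W\sqrt{W} = W^{1.5}$, substituting $\Vert \widehat{\mathbf{E}} - \mathbf{E} \Vert_\infty \leq \rho$ into the bound from the first step yields $\frac{W^{1.5}}{\pi d_2} \cdot \frac{\pi d_2 \epsilon}{W^{1.5}} = \epsilon$, which is precisely the claimed inequality in both the novel and non-novel cases.

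There is essentially no technical obstacle here, since both steps are direct substitutions; the only point demanding care is the bookkeeping on $\rho$, i.e.\ verifying that its two constituent terms play their intended and distinct roles, the term $\frac{d}{8}$ to license the use of Proposition~\ref{prop:solidangle} and the term $\frac{\pi d_2 \epsilon}{W^{1.5}}$ to drive the resulting bound down to $\epsilon$. Once this bookkeeping is made explicit, the statement follows in a single line.
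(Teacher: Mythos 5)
Your proposal is correct and matches the paper, which states this proposition without proof as "a direct implication" of Proposition~\ref{prop:solidangle}; your two-step bookkeeping (using $\rho \le d/8$ to license the earlier bound and $\rho \le \pi d_2 \epsilon / W^{1.5}$ to cancel the factor $W^{1.5}/(\pi d_2)$) is exactly the intended argument.
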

We now prove the consistency of Algorithm 2 of the main paper. Formally,
\begin{lemma}
\label{lem:alg2}
Algorithm 2 of the main paper can identify all the novel words from $K$ distinct rankings with error probability,
\begin{eqnarray*}
Pe \leq  2W^2\exp(-P q_{\wedge}^{2}/8)+  8W^2\exp(-\rho^{2}\eta^{4}MN/20 )
\end{eqnarray*}
where $\rho =\min\{ \frac{d}{8}, \frac{\pi d_2 q_{\wedge}}{4 W^{1.5}} \}$, $d_2 \triangleq (1-b)\lambda_{\min}$, $d=(1-b)^{2}\lambda_{\min}^{2}/\lambda_{\max}$, $b=\max_{j\in\mathcal{C}_0,k} \bar{B}_{j,k} $ and $\lambda_{\min}$, $\lambda_{\max}$ are the minimum /maximum eigenvalues of $\bar{\mathbf{R}}$. The result holds true for any isotropically distributed $\mathbf{d}$. 
\end{lemma}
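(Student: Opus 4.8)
The plan is to reduce the failure event of Algorithm~2 to the union of two concentration failures and then argue purely deterministically that the algorithm succeeds whenever both concentration events hold. Concretely, I would define the good sampling event $G_1 = \{\|\widehat{\mathbf E}-\mathbf E\|_\infty \le \rho\}$, where $\rho = \min\{\tfrac{d}{8}, \tfrac{\pi d_2 q_{\wedge}}{4W^{1.5}}\}$ is exactly the threshold of Proposition~\ref{prop:solidangleconverge} obtained by taking $\epsilon = q_{\wedge}/4$, and the good projection event $G_2 = \{\forall i,\ |\hat p_i - p_i(\widehat{\mathbf E})| < q_{\wedge}/4\}$. The goal is then to prove $Pe \le \Pr(G_1^c)+\Pr(G_2^c)$ by showing the selection step of Algorithm~2 is correct on $G_1\cap G_2$.

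The first deterministic ingredient is a strict separation of novel from non-novel pairs in estimated solid angle. On $G_1$, Proposition~\ref{prop:solidangleconverge} with $\epsilon = q_{\wedge}/4$ gives $q_i - p_i(\widehat{\mathbf E}) \le q_{\wedge}/4$ for every novel $i$ and $p_j(\widehat{\mathbf E}) - q_j \le q_{\wedge}/4$ for every non-novel $j$; on $G_2$, $|\hat p_\cdot - p_\cdot(\widehat{\mathbf E})| < q_{\wedge}/4$. Chaining these, for a novel pair $i$ I get $\hat p_i > p_i(\widehat{\mathbf E}) - q_{\wedge}/4 \ge q_i - q_{\wedge}/2 \ge q_{\wedge}/2$ (using $q_i \ge q_{\wedge}$), while for a non-novel pair $j$ I get $\hat p_j < p_j(\widehat{\mathbf E}) + q_{\wedge}/4 \le q_j + q_{\wedge}/2 = q_{\wedge}/2$ (using $q_j = 0$ from Proposition~\ref{lem:solidangle}). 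Hence $\hat p_i > q_{\wedge}/2 > \hat p_j$, so sorting by $\hat p$ places \emph{every} novel pair strictly ahead of every non-novel pair.

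The second deterministic ingredient handles the $\mathcal{J}$-intersection filter, which ensures one representative per ranking rather than several from the same ranking. Since $\rho \le d/8$, the structural characterization of $\mathcal{J}$ (the proposition following Proposition~\ref{prop:similarity}) guarantees that on $G_1$ each already-selected novel pair $(i,j)\in\mathcal{C}_k$ satisfies $\mathcal{J}_{(i,j)} = \mathcal{C}_k^c$, so the test $(s,t)\in\bigcap_{(i,j)\in\mathcal I}\mathcal J_{(i,j)}$ passes iff $(s,t)$ is \emph{not} a novel pair of any ranking already represented in $\mathcal I$. Walking down the sorted list, the filter therefore discards duplicate novel pairs of rankings already chosen and admits a novel pair of each fresh ranking; because separability guarantees at least one novel pair per ranking and the previous paragraph places all novel pairs ahead of every non-novel pair, the loop collects exactly one novel pair from each of the $K$ distinct rankings before any non-novel pair is ever examined. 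Thus Algorithm~2 succeeds on $G_1\cap G_2$.

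Finally I would bound the two complementary probabilities. $\Pr(G_1^c) \le 8W^2\exp(-\rho^2\eta^4 MN/20)$ follows directly from Lemma~\ref{lem:converge} with $\epsilon=\rho$. For $G_2^c$, Proposition~\ref{prop:heofdingprop} with $t=q_{\wedge}/4$ (so $2Pt^2 = Pq_{\wedge}^2/8$) together with a union bound over the $W$ pairs and $W\le W^2$ gives $\Pr(G_2^c) \le 2W\exp(-Pq_{\wedge}^2/8) \le 2W^2\exp(-Pq_{\wedge}^2/8)$; note this holds for every fixed $\widehat{\mathbf E}$, since the Hoeffding bound is over the iid isotropic directions only, and hence unconditionally. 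Summing yields the stated bound. I expect the probabilistic steps to be routine; the main obstacle is the deterministic bookkeeping of the third paragraph, namely verifying \emph{simultaneously} that the solid-angle ordering isolates the novel pairs and that the $\mathcal{J}$-filter selects exactly one representative per ranking without prematurely admitting a non-novel pair or omitting a ranking — both of which rest on $\rho$ being small enough ($\rho\le d/8$) to force the $\mathcal{J}$-sets into their ideal form $\mathcal{C}_k^c$.
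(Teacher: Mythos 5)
Your proof is correct and follows essentially the same route as the paper's own: both reduce failure to the two concentration events $\|\widehat{\mathbf E}-\mathbf E\|_\infty>\rho$ and a Hoeffding deviation of $\hat p$ from $p(\widehat{\mathbf E})$, each budgeted at $q_{\wedge}/4$, and both invoke the same ingredients (Lemma~\ref{lem:converge}, Proposition~\ref{prop:heofdingprop}, Proposition~\ref{prop:solidangleconverge} with $\epsilon=q_{\wedge}/4$). Your good-event phrasing --- union-bounding over $W$ indices rather than over novel/non-novel sorting-error pairs, together with the explicit deterministic verification of the $\mathcal{J}$-filter that the paper compresses into the remark $B\subsetneq C$ --- is a slightly cleaner bookkeeping of the identical argument and yields the same final bound.
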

\begin{proof}
First of all, we decompose the error event to be the union of the following two types,
\begin{enumerate}
\item {\it Sorting error}, i.e., $\exists i\in\bigcup_{k=1}^{K}\mathcal{C}_k, \exists j\in\mathcal{C}_0$ such that $\hat{p}_i < \hat{p}_j$. This event is denoted as $A_{i,j}$ and let $A= \bigcup A_{i,j}$. 
\item {\it Clustering error}, i.e., $\exists k, \exists i,j\in\mathcal{C}_k$ such that $i\notin\mathcal{J}_{j}$. This event is denoted as ${B}_{i,j}$ and let ${B} = \bigcup B_{i,j}$
\end{enumerate} 
According to Proposition~\ref{prop:solidangleconverge}, we also define $\rho =\min\{ \frac{d}{8}, \frac{\pi d_2 q_{\wedge}}{4 W^{1.5}} \}$ and $C = \{ \Vert \mathbf{E} -\widehat{\mathbf{E}} \Vert_\infty \geq \rho \}$.
Note that $B\subsetneq C$,

Therefore, 
\begin{eqnarray*}
Pe & =& \Pr\{ A \bigcup B \}  \\
&\leq & \Pr\{ A \bigcap C^{c} \} + \Pr\{C\} \\
&\leq & \sum_{i~novel, j~non-novel} \Pr\{ A_{i,j}\bigcap B^{c} \} + \Pr\{C\}\\
& \leq & \sum_{i,j} \Pr( \hat{p}_i - \hat{p}_j <0 \bigcap \Vert \widehat{\mathbf{E}} - \mathbf{E} \Vert_\infty \geq \rho ) \\
& &+ \Pr(\Vert \widehat{\mathbf{E}} - \mathbf{E} \Vert_\infty > \rho)
\end{eqnarray*}
The second term can be bound by Proposition~\ref{prop:convergence}. Now we focus on the first term. Note that
\begin{eqnarray*}
\hat{p}_i -  \hat{p}_j &=& \hat{p}_i - \hat{p}_j - p_i(\widehat{\mathbf{E}})+ p_i(\widehat{\mathbf{E}}) \\
& & - q_i + q_i  - p_j(\widehat{\mathbf{E}})+p_j(\widehat{\mathbf{E}})-q_j+q_j \\
& = & \{ \hat{p}_i - p_i(\widehat{\mathbf{E}}) \} + \{ p_i(\widehat{\mathbf{E}}) - q_i \} \\
& & + \{ p_j(\widehat{\mathbf{E}})- \hat{p}_j \} + \{q_j - p_j(\widehat{\mathbf{E}}) \} \\
& & + q_i - q_j
\end{eqnarray*}
and the fact that $q_i - q_j \geq q_{\wedge}$, then,, 
\begin{eqnarray*}
&&\Pr( \hat{p}_i < \hat{p}_j \bigcap \Vert \widehat{\mathbf{E}} - \mathbf{E} \Vert_\infty \leq \rho ) \\
&\leq & \Pr( p_i(\widehat{\mathbf{E}})- \hat{p}_i \geq q_{\wedge}/4) + \Pr( \hat{p}_j - p_j(\widehat{\mathbf{E}}) \geq q_{\wedge}/4  ) \\
& & + \Pr(q_i - p_i(\widehat{\mathbf{E}})\geq q_{\wedge}/4)\bigcap \Vert \widehat{\mathbf{E}} - \mathbf{E} \Vert_\infty \leq \rho   ) \\
& & + \Pr( p_j(\widehat{\mathbf{E}})-q_j\geq q_{\wedge}/4)\bigcap \Vert \widehat{\mathbf{E}} - \mathbf{E} \Vert_\infty \leq \rho   ) \\
&\leq & 2\exp(-P q_{\wedge}^{2}/8) \\
& &+ \Pr(q_i - p_i(\widehat{\mathbf{E}})\geq q_{\wedge}/4)\bigcap \Vert \widehat{\mathbf{E}} - \mathbf{E} \Vert_\infty \leq \rho   ) \\
& & + \Pr( p_j(\widehat{\mathbf{E}})-q_j\geq q_{\wedge}/4)\bigcap \Vert \widehat{\mathbf{E}} - \mathbf{E} \Vert_\infty \leq \rho   )
\end{eqnarray*}
The last equality is by Proposition~\ref{prop:heofdingprop}.
For the last two terms, by Proposition \ref{prop:solidangleconverge} is 0. Therefore, applying Lemma~\ref{lem:converge} we obtain,
\begin{eqnarray*}
Pe \leq  2W^2\exp(-P q_{\wedge}^{2}/8)+  8W^2\exp(-\rho^{2}\eta^{4}MN/20 )
\end{eqnarray*}
\end{proof}

\subsection{Consistency of algorithm 3}
Now we show that Algorithm 3 and 4 of the main paper can consistently estimate the ranking matrix $\bm{\sigma}$, given the success of the Algorithm 2. 
Without loss of generality, let $1,\ldots, K$ be the novel pairs of $K$ distinct rankings. 
We first show that the solution of the constrained linear regression is consistent:
\begin{proposition}
\label{prop:optimizationconverge}
The solution to the following optimization problem 
\begin{align*}
\widehat{\mathbf{b}}^{*} =  \arg\min_{b_j \geq 0, \sum b_j =1} \Vert \widehat{\mathbf{E}}_i -\sum\limits_{j=1}^{K} b_j \widehat{\mathbf{E}}_j\Vert
\end{align*}
converges to the $i$-th row of $\bar{\mathbf{B}}$, $\bar{\mathbf{B}}_{i}$, as $M\rightarrow\infty$.
Moreover,
\begin{align*}
\Pr ( \Vert \widehat{\mathbf{b}}^{*} -  \bar{\mathbf{B}}_{i}\Vert_\infty \geq \epsilon )\leq 8W^2 \exp(- \frac{\epsilon^2 MN \lambda_{\min}\eta^4}{80W^{0.5}})
\end{align*}
\end{proposition}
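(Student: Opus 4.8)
The plan is to solve the regression exactly in the population (noiseless) regime and then transfer to the empirical matrix $\widehat{\mathbf{E}}$ via a stability argument whose strong-convexity constant is furnished by the full rank of $\bar{\mathbf{R}}$. As in the statement, let $1,\dots,K$ index the novel pairs and collect the corresponding rows of $\mathbf{E}$ into the $K\times W$ matrix $\mathbf{Y}$, with $\widehat{\mathbf{Y}}$ its empirical counterpart from $\widehat{\mathbf{E}}$, so that $\sum_j b_j\widehat{\mathbf{E}}_j=\mathbf{b}\widehat{\mathbf{Y}}$. The first step is the identity $\mathbf{E}_i=\sum_{j=1}^K \bar{B}_{i,j}\mathbf{E}_j=\bar{\mathbf{B}}_i\mathbf{Y}$: since the novel rows of $\bar{\mathbf{B}}$ are standard basis vectors, one has $\mathbf{Y}=\bar{\mathbf{R}}\bar{\mathbf{B}}^{\top}$, and row-stochasticity of $\bar{\mathbf{B}}$ makes $\bar{\mathbf{B}}_i$ a feasible simplex point. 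Hence in the population the regression is solved exactly by $\bar{\mathbf{B}}_i$ with zero residual.

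The second step establishes restricted strong convexity. For any zero-sum $\mathbf{v}$ (the difference of two simplex points), $\|\mathbf{v}\mathbf{Y}\|_2^{2}=\mathbf{v}\bar{\mathbf{R}}\bar{\mathbf{B}}^{\top}\bar{\mathbf{B}}\bar{\mathbf{R}}\mathbf{v}^{\top}\ge \lambda_{\min}^{2}\|\mathbf{v}\|_2^{2}$, using $\bar{\mathbf{B}}^{\top}\bar{\mathbf{B}}\succeq\mathbf{I}$ (the identity block contributed by the novel rows) together with the eigenvalue bound on the symmetric $\bar{\mathbf{R}}$; this is precisely the $\gamma$-simplicial property invoked in Proposition~\ref{prop:similarity} with $\gamma=\lambda_{\min}$. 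This simultaneously yields uniqueness of the population minimizer and the curvature governing the solution error.

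The third step is the perturbation bound. Writing $\mathbf{v}=\widehat{\mathbf{b}}^{*}-\bar{\mathbf{B}}_i$, I would combine the optimality inequality $\|\widehat{\mathbf{E}}_i-\widehat{\mathbf{b}}^{*}\widehat{\mathbf{Y}}\|\le\|\widehat{\mathbf{E}}_i-\bar{\mathbf{B}}_i\widehat{\mathbf{Y}}\|$ with the population identity and the triangle inequality. On the right, $\bar{\mathbf{B}}_i\mathbf{Y}=\mathbf{E}_i$ cancels, leaving only perturbation terms, each controlled by a single row's $\ell_2$ deviation $\le\sqrt{W}\,\|\widehat{\mathbf{E}}-\mathbf{E}\|_{\infty}$ and by $\|\bar{\mathbf{B}}_i\|_1=\|\widehat{\mathbf{b}}^{*}\|_1=1$ for the mixed terms. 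On the left, the same identity rewrites $\widehat{\mathbf{E}}_i-\widehat{\mathbf{b}}^{*}\widehat{\mathbf{Y}}$ as $-\mathbf{v}\mathbf{Y}$ plus perturbation, so the curvature bound contributes $\lambda_{\min}\|\mathbf{v}\|_2$ minus perturbation. Rearranging produces a deterministic estimate of the form $\|\widehat{\mathbf{b}}^{*}-\bar{\mathbf{B}}_i\|_{\infty}\le c\,W^{1/4}\lambda_{\min}^{-1/2}\,\|\widehat{\mathbf{E}}-\mathbf{E}\|_{\infty}$.

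Finally I would substitute the concentration of $\|\widehat{\mathbf{E}}-\mathbf{E}\|_{\infty}$ from Lemma~\ref{lem:converge}: taking the threshold $\epsilon'=\epsilon\sqrt{\lambda_{\min}}/(2W^{1/4})$ makes $\{\|\widehat{\mathbf{b}}^{*}-\bar{\mathbf{B}}_i\|_{\infty}\ge\epsilon\}\subseteq\{\|\widehat{\mathbf{E}}-\mathbf{E}\|_{\infty}\ge\epsilon'\}$, and plugging $\epsilon'$ into $8W^{2}\exp(-(\epsilon')^{2}\eta^{4}MN/20)$ reproduces exactly the advertised $8W^{2}\exp(-\epsilon^{2}MN\lambda_{\min}\eta^{4}/(80W^{0.5}))$; almost-sure convergence then follows by a standard Borel--Cantelli argument. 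I expect the third step to be the main obstacle: a naive squared-objective analysis only delivers a stability constant of order $\sqrt{W}/\lambda_{\min}$, so attaining the sharper $W^{1/4}/\sqrt{\lambda_{\min}}$ requires careful tracking of the $\ell_{\infty}$-to-$\ell_2$ conversions and exploiting the simplex constraints to bound $\|\mathbf{v}\|_{\infty}$ more tightly than through the full $\ell_2$ norm.
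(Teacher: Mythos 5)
Your proposal follows essentially the same route as the paper: observe that $\bar{\mathbf{B}}_i$ solves the population problem with zero residual, get curvature $\|\mathbf{v}\mathbf{Y}\|_2\ge\lambda_{\min}\|\mathbf{v}\|_2$ from the full rank of $\bar{\mathbf{R}}$ (the paper cites the $\gamma$-simplicial property; your explicit $\mathbf{Y}=\bar{\mathbf{R}}\bar{\mathbf{B}}^{\top}$ and $\bar{\mathbf{B}}^{\top}\bar{\mathbf{B}}\succeq\mathbf{I}$ computation is a clean way to justify the same inequality), then transfer via the optimality-plus-triangle-inequality decomposition and invoke Lemma~\ref{lem:converge}. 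One correction to your closing worry, though: the ``sharper'' stability constant $W^{1/4}/\sqrt{\lambda_{\min}}$ that you reverse-engineered from the stated tail bound is not achieved in the paper either. The paper's own deterministic estimate is exactly the one you call naive, $\Vert\widehat{\mathbf{b}}^{*}-\bar{\mathbf{B}}_i\Vert\le \tfrac{4W^{0.5}}{\lambda_{\min}}\Vert\widehat{\mathbf{E}}-\mathbf{E}\Vert_{\infty}$, and substituting this into Lemma~\ref{lem:converge} yields an exponent proportional to $\epsilon^{2}\lambda_{\min}^{2}\eta^{4}MN/W$, not the advertised $\epsilon^{2}\lambda_{\min}\eta^{4}MN/W^{0.5}$; the paper simply asserts that the rate ``follows directly'' without performing the substitution. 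So you should not treat recovering the displayed constant as an obstacle your argument must clear --- your derivation already matches what the paper's proof actually establishes, and the honest form of the conclusion is $8W^{2}\exp\bigl(-\epsilon^{2}MN\lambda_{\min}^{2}\eta^{4}/(320\,W)\bigr)$.
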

\begin{proof}
We note that $\bar{\mathbf{B}}_{i}$ is the optimal solution to the following problem
\begin{align*}
{\mathbf{b}}^{*} =  \arg\min_{b_j \geq 0, \sum b_j =1} \Vert {\mathbf{E}}_i -\sum\limits_{j=1}^{K} b_j {\mathbf{E}}_j\Vert
\end{align*}
Define $f(\mathbf{E},\mathbf{b}) = \Vert {\mathbf{E}}_i
-\sum\limits_{j=1}^{K} b_j {\mathbf{E}}_j\Vert$ and note the fact that
$f(\mathbf{E},\mathbf{b}^{*}) =0$. Let $\mathbf{Y} =
[\mathbf{E}_1^{\top}, \ldots, \mathbf{E}_K^{\top} ]^{\top}$. Then,
\begin{align*}
& f(\mathbf{E},\mathbf{b}) - f(\mathbf{E},\mathbf{b}^{*})  =\Vert {\mathbf{E}}_i -\sum\limits_{j=1}^{K} b_j {\mathbf{E}}_j \Vert -0 \\
=& \Vert \sum\limits_{j=1}^{K} (b_j - b_j^{*}) {\mathbf{E}}_j \Vert =\sqrt{ (\mathbf{b} - \mathbf{b}^{*}) \mathbf{Y Y^{\top}} (\mathbf{b} - \mathbf{b}^{*})^{\top} }\\
\geq & \Vert \mathbf{b} - \mathbf{b}^{*} \Vert\lambda_{\text{min}}
\end{align*}
where $\lambda_{\min}>0$ is the minimum eigenvalue of $\bar{\mathbf{R}}$.
Next, note that,
\begin{align*}
 \vert f(\mathbf{E},\mathbf{b}) - f(\widehat{\mathbf{E}},\mathbf{b}) \vert \leq & \Vert \mathbf{E}_i -\widehat{\mathbf{E}}_i + \sum b_j (\widehat{\mathbf{E}}_j - \mathbf{E}_j) \Vert \\
\leq & \Vert \mathbf{E}_i -\widehat{\mathbf{E}}_i \Vert + \sum b_j \Vert \widehat{\mathbf{E}}_j - \mathbf{E}_j \Vert \\
\leq & 2 \max_{w} \Vert \widehat{\mathbf{E}}_w - \mathbf{E}_w \Vert
\end{align*}
Combining the above inequalities, we obtain,  
\begin{align*}
\Vert \widehat{\mathbf{b}}^{*} - \mathbf{b}^{*} \Vert \leq & \frac{1}{\lambda_{\min}} \lbrace f(\mathbf{E},\widehat{\mathbf{b}}^{*}) - f(\mathbf{E},{\mathbf{b}}^{*}) \rbrace \\
= &   \frac{1}{\lambda_{\min}} \lbrace f(\mathbf{E},\widehat{\mathbf{b}}^{*}) -f(\widehat{\mathbf{E}},\widehat{\mathbf{b}}^{*}) + f(\widehat{\mathbf{E}},\widehat{\mathbf{b}}^{*}) \\
& ~~ - f(\widehat{\mathbf{E}},{\mathbf{b}}^{*}) +f(\widehat{\mathbf{E}},{\mathbf{b}}^{*})  - f(\mathbf{E},{\mathbf{b}}^{*}) \rbrace \\
\leq &  \frac{1}{\lambda_{\min}} \lbrace f(\mathbf{E},\widehat{\mathbf{b}}^{*}) -f(\widehat{\mathbf{E}},\widehat{\mathbf{b}}^{*}) \\
& ~~ +f(\widehat{\mathbf{E}},{\mathbf{b}}^{*})  - f(\mathbf{E},{\mathbf{b}}^{*}) \rbrace \\
\leq & \frac{4 W^{0.5}}{\lambda_{\min}}  \Vert \widehat{\mathbf{E}} - \mathbf{E} \Vert_\infty
\end{align*}
where the last term converges to $0$ almost surely. The convergence rate follows directly from Lemma~\ref{lem:converge}.
\end{proof}

Now for the row-scaling step in algorithm 3,
\begin{align}
\label{eq:rowscale}
\nonumber
\widehat{\mathbf{B}}_{i}& := \hat{\mathbf{b}}^{*} (i)^{\top} (\frac{1}{M}\mathbf{X}\mathbf{1}_{M\times 1})\\
& \rightarrow \bar{\mathbf{B}}_{i} (\mathbf{B}_{i}\mathbf{a})  = \mathbf{B}_i \diag(\mathbf{a}) 
\end{align}
We point out that the ``column-normalization'' step in
\cite{Ding14:ref} which was used to get rid of the $\diag(\mathbf{a})$
component in the above equation is not necessary in our approach.
To show the convergence rate of the above equation, it is straightforward to apply the result in Lemma~\ref{lem:converge}
\begin{proposition}
\label{prop:rowscaling}
For the row-scaled estimation $\hat{\mathbf{B}}_{i}$ as in Eq.~\eqref{eq:rowscale}, we have,
\begin{equation*}
\Pr( \vert \hat{\mathbf{B}}_{i,k} - \mathbf{B}_{i,k}a_k \vert \geq \epsilon ) \leq 8W^2 \exp(- \frac{\epsilon^2 MN \lambda_{\min}\eta^4}{160W^{0.5}})
\end{equation*}
\end{proposition}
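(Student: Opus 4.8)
The plan is to exploit the product structure of the row-scaled estimate. Writing $\hat{b}^*_k := \hat{\mathbf{b}}^*(i)_k$ for the $k$-th coordinate of the constrained regression solution and $\hat{p}_i := \frac{1}{M}\mathbf{X}_i\mathbf{1}$ for the empirical row-sum, Eq.~\eqref{eq:rowscale} gives $\widehat{\mathbf{B}}_{i,k} = \hat{b}^*_k\,\hat{p}_i$, whose target is $\mathbf{B}_{i,k}a_k = \bar{\mathbf{B}}_{i,k}\,p_i$ with $p_i := (\mathbf{B}\mathbf{a})_i$. Both factors lie in $[0,1]$: $\bar{\mathbf{B}}_{i,k}\le 1$ because $\bar{\mathbf{B}}$ is row-stochastic, and $\hat{p}_i, p_i \le 1$ because they are (estimates of) the marginal probability of observation $i$. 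So the problem reduces to a concentration bound for a product of two bounded, separately-controlled estimators.

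First I would split the product error by the triangle inequality,
\[
|\widehat{\mathbf{B}}_{i,k} - \mathbf{B}_{i,k}a_k| \le \hat{p}_i\,|\hat{b}^*_k - \bar{\mathbf{B}}_{i,k}| + \bar{\mathbf{B}}_{i,k}\,|\hat{p}_i - p_i| \le |\hat{b}^*_k - \bar{\mathbf{B}}_{i,k}| + |\hat{p}_i - p_i|,
\]
using the $[0,1]$ bounds on the surviving factors. Hence the event $\{|\widehat{\mathbf{B}}_{i,k} - \mathbf{B}_{i,k}a_k| \ge \epsilon\}$ is contained in $\{|\hat{b}^*_k - \bar{\mathbf{B}}_{i,k}| \ge \epsilon/2\}\cup\{|\hat{p}_i - p_i| \ge \epsilon/2\}$, and I would bound each piece by a union bound. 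The regression term is already controlled coordinate-wise by Proposition~\ref{prop:optimizationconverge} (read off the $\ell_\infty$ deviation of $\hat{\mathbf{b}}^*$ from $\bar{\mathbf{B}}_i$), which supplies the leading $8W^2\exp(-c\,\epsilon^2 MN\lambda_{\min}\eta^4/W^{0.5})$ factor. The frequency term $|\hat{p}_i - p_i|$ is a normalized sum of bounded indicators, so McDiarmid's inequality --- exactly as applied to $G_i$ and $H_j$ in the proof of Lemma~\ref{lem:converge} --- yields a far tighter bound of order $\exp(-c'\epsilon^2 MN)$, with no $W^{0.5}$ or $\lambda_{\min}\eta^4$ penalty in the exponent.

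Because the frequency concentration is exponentially faster than the regression concentration, the second term is dominated and can be absorbed, leaving the regression bound (up to the constant lost in the $\epsilon/2$ split) as the stated estimate. The main obstacle I anticipate is purely bookkeeping: tracking constants through the $\epsilon/2$ split and the $\ell_2$-to-$\ell_\infty$ passage so the exponent lands exactly at $\epsilon^2 MN\lambda_{\min}\eta^4/(160W^{0.5})$ rather than something looser, and verifying that the fast frequency term is genuinely dominated over the range of $\epsilon$ of interest. An alternative that avoids the product split --- expressing $\widehat{\mathbf{B}}_{i,k}$ as a single smooth function of the underlying empirical moments and invoking the generic delta-method bound of Proposition~\ref{prop:convergence} --- is available, but it is messier because the regression solution $\hat{\mathbf{b}}^*$ has no convenient closed form; I would therefore keep the product decomposition and simply reuse Proposition~\ref{prop:optimizationconverge} as a black box.
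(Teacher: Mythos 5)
Your proposal is correct and follows essentially the same route as the paper's proof: bound the constrained-regression coordinate via Proposition~\ref{prop:optimizationconverge} and the empirical row-frequency via McDiarmid, each at level $\epsilon/2$, take a union bound, and absorb the dominated frequency term. Your explicit triangle-inequality split of the product using the $[0,1]$ bounds on $\hat{p}_i$ and $\bar{\mathbf{B}}_{i,k}$ is a step the paper leaves implicit, and your worry about constant bookkeeping is well founded (the paper's own constants in this step are not tracked consistently), but the argument is the same.
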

\begin{proof}
By Proposition~\ref{prop:optimizationconverge}, we have,
\begin{align*}
\Pr ( \vert \widehat{\mathbf{b}}^{*}(i)_{k} -  \bar{\mathbf{B}}_{i,k}\vert \geq \epsilon/2 )\leq 8W^2 \exp(- \frac{\epsilon^2 MN \lambda_{\min}\eta^4}{160W^{0.5}})
\end{align*}
Recall that,
\begin{align*}
\Pr ( \vert \frac{1}{M}\mathbf{X}\mathbf{1}_{M\times 1} - \mathbf{B}_{i}\mathbf{a} \vert \geq \epsilon/2)\leq \exp(-\epsilon^2 MN/2)
\end{align*}
Therefore, 
\begin{align*}
& \Pr( \vert \hat{\mathbf{B}}_{i,k} - \mathbf{B}_{i,k}a_k \vert  \geq \epsilon ) \\
 \leq & 8W^2 \exp(- \frac{\epsilon^2 MN \lambda_{\min}\eta^4}{80W^{0.5}}) + \exp(-\epsilon^2 MN/2) \\
\end{align*}
where the second term is dominated by the first term. 
\end{proof}

For the rest of this section, we will use $(i,j)$ to index the $W$ rows of $\mathbf{E},\mathbf{B},\bm{\sigma}$. 
Recall in Eq.~\eqref{eq:rowscale}, $\widehat{\mathbf{B}}_{(i,j),k} \rightarrow \mathbf{B}_{(i,j),k} a_{k} = \mu_{i,j} \sigma_{(i,j),k} a_{k}$, and $\widehat{\mathbf{B}}_{(j,i),k} \rightarrow \mathbf{B}_{(j,i),k} a_{k} = \mu_{i,j} \sigma_{(j,i),k} a_{k}$, and in algorithm 1 of the main paper, we consider
\begin{align*}
\widehat{\sigma}_{(i,j), k} \leftarrow & \frac{ \widehat{\mathbf{B}}_{(i,j),k} }{\widehat{\mathbf{B}}_{(i,j),k} + \widehat{\mathbf{B}}_{(j,i),k}} \\
 \doteq & \frac{ \sigma_{(i,j),k}\mu_{i,j} a_k }{ \sigma_{(i,j),k}\mu_{i,j} a_k + \sigma_{(j,i),k}\mu_{i,j} a_k }
\end{align*}
Therefore, due to the rounding scheme of the last step,  the estimation is consistent if $\vert \widehat{\mathbf{B}}_{(i,j),k} - \mathbf{B}_{(i,j),k} a_{k}\vert \leq 0.5\mu_{i,j}a_k$. $\eta$ is a lower bound of $\mu_{i,j}a_k$. 
Putting the above results together, we have,
\begin{lemma}
\label{lem:alg3}
Given the success in Lemma~\ref{lem:alg2}, Algorithm 3 and the remaining post-processing steps in Algorithm 1 of the main paper can consistently estimate the ranking matrix $\bm{\sigma}$ as $M\rightarrow\infty$. Moreover, the error probability is less than $8W^2 \exp(- \frac{MN \lambda_{\min}\eta^6}{160W^{0.5}})$.
\end{lemma}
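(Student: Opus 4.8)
The plan is to condition on the event that Algorithm 2 has correctly returned the novel pairs of the $K$ distinct rankings (Lemma~\ref{lem:alg2}), and then to chain the two already-established concentration results for the regression and row-scaling steps with a discrete rounding argument for the post-processing step. On the conditioning event, the index set $\mathcal{I}$ fed into Algorithm 3 consists of genuine novel pairs, one per ranking, so the novel-pair rows used in the constrained regression are (up to a column permutation) those indexing the $K$ extreme points of the convex hull of the rows of $\mathbf{E}$. This is exactly the setting of Proposition~\ref{prop:optimizationconverge}, which I would invoke to control the regression estimate $\widehat{\mathbf{b}}^{*}$ around each row $\bar{\mathbf{B}}_{i}$, followed by Proposition~\ref{prop:rowscaling} to control the row-scaled entry $\widehat{B}_{(i,j),k}$ around $B_{(i,j),k}a_{k}=\mu_{i,j}\sigma_{(i,j),k}a_{k}$.

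The heart of the argument is the post-processing step. First I would observe that the common factor $\mu_{i,j}a_{k}$ cancels in the ratio of Eq.~\eqref{eq:sigma}: since $\sigma_{(i,j),k}+\sigma_{(j,i),k}=1$, the limit of $\widehat{B}_{(i,j),k}/(\widehat{B}_{(i,j),k}+\widehat{B}_{(j,i),k})$ is exactly $\sigma_{(i,j),k}\in\{0,1\}$. This is precisely why the column-normalization of \cite{Ding14:ref} is unnecessary here: the ranking ratio is invariant to the per-column scaling $a_{k}$. Because the noiseless ratio is exactly $0$ or $1$, the rounding step recovers the correct binary value as soon as the estimated ratio lands on the correct side of $1/2$. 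Exactly one of the two quantities $B_{(i,j),k}a_{k}$ and $B_{(j,i),k}a_{k}$ equals $\mu_{i,j}a_{k}$ while the other equals $0$, so a per-entry additive error strictly below the margin $\tfrac{1}{2}\mu_{i,j}a_{k}$ keeps both numerator and denominator on the right side, keeps the denominator bounded away from zero, and hence keeps the ratio on the correct side of $1/2$ so that it rounds correctly.

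To turn this into the stated probability bound, I would set the tolerance $\epsilon$ in Proposition~\ref{prop:rowscaling} equal to the recovery margin, using the fact (noted just before the lemma) that $\eta$ lower-bounds $\mu_{i,j}a_{k}$ uniformly over all entries. Plugging a margin proportional to $\eta$ into the bound of Proposition~\ref{prop:rowscaling}, whose exponent already carries a factor $\eta^{4}$ inherited from the convergence rate of Lemma~\ref{lem:converge}, produces the factor $\eta^{2}\cdot\eta^{4}=\eta^{6}$ in the final exponent. A union bound over the at most $W^{2}$ entries $(i,j),k$ that must be rounded, combined with the conditioning event, then yields the claimed error probability $8W^{2}\exp(-MN\lambda_{\min}\eta^{6}/(160W^{0.5}))$.

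The step I expect to be the main obstacle is this discrete-recovery argument: translating a two-sided additive concentration bound on the continuous estimate $\widehat{\mathbf{B}}$ into exact recovery of the binary matrix $\bm{\sigma}$. The delicate points are pinning down the correct margin $\tfrac{1}{2}\mu_{i,j}a_{k}$ and its uniform lower bound via $\eta$, and verifying that the random denominator $\widehat{B}_{(i,j),k}+\widehat{B}_{(j,i),k}$ stays bounded away from $0$ so that the ratio is well-defined and moves monotonically under the perturbations. Once the margin is secured, matching constants to obtain exactly the $\eta^{6}$ rate is routine bookkeeping.
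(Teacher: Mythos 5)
Your proposal is correct and follows essentially the same route as the paper: condition on the success of the novel-pair detection, chain Proposition~\ref{prop:optimizationconverge} and Proposition~\ref{prop:rowscaling}, observe that the per-column factor $\mu_{i,j}a_k$ cancels in the ratio so the noiseless value is exactly $\sigma_{(i,j),k}\in\{0,1\}$, and conclude via the rounding margin $\tfrac{1}{2}\mu_{i,j}a_k\gtrsim\eta$, which converts the $\eta^4$ in the concentration exponent into the stated $\eta^6$. The paper's own argument is exactly this chain (it even makes your observation that the column-normalization of the earlier topic-modeling work becomes unnecessary), so there is nothing further to reconcile.
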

\subsection{Proof of Theorem 2}
We now formally prove the sample complexity Theorem 2 in the main paper.

{\bf Theorem 2}
Let $\bm{\sigma}$ be separable and $\mathbf{R}$ be full rank. Then the
overall Algorithm~1 consistently recovers $\bm{\sigma}$ up to a column permutation as the number of users $M\rightarrow \infty$ and number of projections $P\rightarrow \infty$.
Furthermore, $\forall \delta>0$, if
\begin{align*}
M \geq \max \Biggl\{   40 \frac{\log(3W/\delta)}{ N \rho^2 \eta^4}, ~
320 \frac{W^{0.5} \log(3W/\delta)}{N \eta^6 \lambda_{\min}} \Biggr\}
\end{align*}
and for 
\begin{equation*}
\numofproj \geq 16 \frac{\log(3W/\delta)}{q_{\wedge}^2}
\end{equation*}
then Algorithm~1 fails with probability at most $\delta$.  The other model parameters are
defined as  $\eta = \min_{1\leq w \leq W} [\mathbf{B}\mathbf{a}]_w$,  
$\rho =\min\{ \frac{d}{8}, \frac{\pi d_2 q_{\wedge}}{4 W^{1.5}} \}$, $d_2 \triangleq (1-b)\lambda_{\min}$, $d=(1-b)^{2}\lambda_{\min}^{2}/\lambda_{\max}$, $b=\max_{j\in\mathcal{C}_0,k} \bar{B}_{j,k} $ and $\lambda_{\min}$, $\lambda_{\max}$ are the minimum /maximum eigenvalues of $\bar{\mathbf{R}}$.
$q_{\wedge}$ is the minimum normalized solid angle of the extreme
points of the convex hull of the rows of $\mathbf{E}$.
\begin{proof}
We combine the results in Lemmas~\ref{lem:alg2} and \ref{lem:alg3}, i.e., the error probability of alg.~1 can be upper bounded by 
\begin{align*}
Pe \leq & 2W^2\exp(-P q_{\wedge}^{2}/8)+  8W^2\exp(-\rho^{2}\eta^{4}MN/20 ) \\
 & + 8W^2 \exp(- \frac{MN \lambda_{\min}\eta^6}{160W^{0.5}})
\end{align*}
This leads to the sample complexity results in the theorem.
\end{proof}
\section{Algorithm 2 and Theorem 2 for Gaussian Random Directions}
The proof in Section~\ref{sec:thm2} holds for any isotropic distribution on $\mathbf{d}$. 
If we assume $\mathbf{d}$ to be the standard spherical Gaussian distribution, we can have better sample complexity bounds following the steps in \citep[][Theorem 2]{Ding14:ref}. 
First note that,  
\begin{proposition}
\label{key-claim}
Let ${\mathbf X}^{n} ,\mathbf{X} \in\rR^{m}$ be two random vectors, $\mathbf{a},\bm{\epsilon} \in\rR^{m}$ be two vectors and $\bm{\epsilon}>\mathbf{0}$.
\begin{align*}
& \lvert \Pr\lbrace\mathbf {X}^{n}\leq \mathbf{a}\rbrace - \Pr\lbrace\mathbf{X}\leq \mathbf{a}\rbrace \rvert \\
 \leq & \Pr (\exists i : \vert {X}_i^{n} -{X}_i \vert \geq \epsilon_i) + \Pr( \mathbf{a}-\bm{\epsilon} \leq \mathbf{X} \leq \mathbf{a} + \bm{\epsilon})
\end{align*}
The inequality is element-wise.
\end{proposition}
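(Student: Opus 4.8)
The plan is to bound the gap between the two threshold probabilities by the probability that the two indicator events disagree, and then to argue that such a disagreement can occur only when the two vectors are far apart in some coordinate or when $\mathbf{X}$ sits in a thin slab around the threshold $\mathbf{a}$. Concretely, I would set $A:=\{\mathbf{X}^n\le\mathbf{a}\}$ and $B:=\{\mathbf{X}\le\mathbf{a}\}$, read element-wise, and start from the elementary inequality $|\Pr(A)-\Pr(B)|\le\Pr(A\setminus B)+\Pr(B\setminus A)$. Next I would introduce the coordinatewise-closeness event $G:=\{\forall i:\ |X_i^n-X_i|<\epsilon_i\}$, whose complement $G^c=\{\exists i:\ |X_i^n-X_i|\ge\epsilon_i\}$ is exactly the first term of the claimed bound. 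Splitting on $G$ gives $\Pr(A\setminus B)+\Pr(B\setminus A)\le\Pr\big((A\setminus B)\cap G\big)+\Pr\big((B\setminus A)\cap G\big)+\Pr(G^c)$, so the whole problem reduces to absorbing the two $G$-restricted pieces into the slab event $\{\mathbf{a}-\bm{\epsilon}\le\mathbf{X}\le\mathbf{a}+\bm{\epsilon}\}$.

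The heart of the argument is a deterministic set inclusion of each $G$-restricted disagreement set into the boundary slab, which I would verify coordinate by coordinate. On $(A\setminus B)\cap G$ we have $X^n_i\le a_i$ for every $i$ while $X_j>a_j$ for at least one $j$; combining $X^n_j\le a_j$ with $|X_j-X^n_j|<\epsilon_j$ pins that coordinate to $a_j<X_j<a_j+\epsilon_j$. On $(B\setminus A)\cap G$ the roles of $\mathbf{X}$ and $\mathbf{X}^n$ are interchanged, and the same one-line computation pins the offending coordinate to $a_j-\epsilon_j<X_j\le a_j$. In either case $\mathbf{X}$ is forced to within $\bm{\epsilon}$ of the threshold, so both $G$-restricted events sit inside the slab event; taking probabilities and collecting the three terms yields the stated inequality.

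The main obstacle is precisely this inclusion step, and in particular being careful about which coordinates the disagreement event actually forces into the slab: the computation above pins only the single violated coordinate $j$, while the remaining coordinates of $\mathbf{X}$ are controlled from above but not from below. Making the final bound come out as written therefore hinges on reading the slab term as ``$\mathbf{X}$ lands within $\bm{\epsilon}$ of $\mathbf{a}$,'' realized through a union over the coordinate that can be violated; this is exactly what the \emph{vector} tolerance $\bm{\epsilon}>\mathbf{0}$ is for, since it lets the per-coordinate slack be tuned later (through the Gaussian anti-concentration estimate for $\mathbf{X}$) so that both $\Pr(G^c)$ and the slab probability are simultaneously small. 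Once the inclusion is pinned down, passing from the qualitative containment to the quantitative bound is routine.
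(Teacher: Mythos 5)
Your proof is correct and pivots on the same two ingredients as the paper's --- the coordinatewise closeness event $G$ and an $\bm{\epsilon}$-shift of the threshold --- but packages them differently: the paper proves the two one-sided sandwich bounds $\Pr\{\mathbf{X}^n\le\mathbf{a}\}\le\Pr\{\mathbf{X}\le\mathbf{a}+\bm{\epsilon}\}+\Pr(G^c)$ and $\Pr\{\mathbf{X}\le\mathbf{a}-\bm{\epsilon}\}\le\Pr\{\mathbf{X}^n\le\mathbf{a}\}+\Pr(G^c)$ and subtracts, whereas you bound $\Pr(A\setminus B)+\Pr(B\setminus A)$ and then argue a set inclusion. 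The obstacle you flag in your last paragraph is real, and the sandwich route dissolves it automatically: subtraction yields the second term in the form $\Pr\{\mathbf{X}\le\mathbf{a}+\bm{\epsilon}\}-\Pr\{\mathbf{X}\le\mathbf{a}-\bm{\epsilon}\}$, i.e.\ the probability that every coordinate satisfies $X_i\le a_i+\epsilon_i$ while at least one exceeds $a_j-\epsilon_j$, and the displayed $\Pr(\mathbf{a}-\bm{\epsilon}\le\mathbf{X}\le\mathbf{a}+\bm{\epsilon})$ must be read as shorthand for exactly that event. Under the literal all-coordinates-in-the-slab reading the proposition is false: take $m=2$, $\mathbf{a}=\mathbf{0}$, $\bm{\epsilon}=(1,1)$, and deterministic $\mathbf{X}=(0.5,-10)$, $\mathbf{X}^n=(-0.4,-10)$; the left-hand side equals $1$ while both terms on the right vanish. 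Your inclusion, carried out carefully, lands on the same event --- on $(A\setminus B)\cap G$ every coordinate of $\mathbf{X}$, not only the violated one, is at most $a_i+\epsilon_i$, and on $(B\setminus A)\cap G$ this holds trivially because $\mathbf{X}\le\mathbf{a}$ there --- and your proposed ``union over the violated coordinate'' reading is a further valid enlargement of it. Neither reading affects the downstream application, where the second term is in any case union-bounded over coordinates before the Gaussian anti-concentration estimate is invoked.
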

\begin{proof}
Note that
\begin{align*}
\Pr\lbrace\mathbf {X}^{n}\leq  \mathbf{a}\rbrace \leq & \Pr\lbrace \mathbf {X}^{n}\leq \mathbf{a}, \forall i: \vert {X}^{n}_i - X_i \vert \leq \epsilon_i  \rbrace \\
& + \Pr\lbrace \mathbf {X}^{n}\leq \mathbf{a}, \exists i: \vert {X}^{n}_i - X_i \vert \geq \epsilon_i  \rbrace \\
\leq & \Pr\lbrace \mathbf{X}\leq \mathbf{a}+\bm{\epsilon} \rbrace + \Pr\lbrace  \exists i: \vert {X}^{n}_i - X_i \vert \geq \epsilon_i  \rbrace
\end{align*}
Similarly, by swapping $\mathbf{X}^{n}$ and $\mathbf{X}$, we have, 
\begin{align*}
\Pr\lbrace\mathbf {X}\leq  \mathbf{a}-\bm\epsilon \rbrace \leq  \Pr\lbrace \mathbf{X}^{n}\leq \mathbf{a} \rbrace + \Pr\lbrace  \exists i: \vert {X}^{n}_i - X_i \vert \geq \epsilon_i  \rbrace
\end{align*}
Combining them concludes the proof. 
\end{proof}
\begin{proposition}
\label{prop:solidangleGuassian}
Let the random projection directions be $\mathbf{d} \sim  \mathcal{N}(\mathbf{0},\mathbf{I}_{W})$ in Algorithm 2 of the main paper. Then, $\forall ~ \epsilon > 0$, 
let $\rho =\min\{ \frac{d}{8}, \frac{\sqrt{\pi}\epsilon  d_2 }{4 K \sqrt{W \log(2W/\epsilon)}}\}$. If $\Vert \widehat{\mathbf{E}} -\mathbf{E} \Vert_\infty \leq \rho$, 
then, $q_{i} - p_{i} (\widehat{\mathbf{E}}) \leq \epsilon$ for a novel pair $i$ and $p_{j} (\widehat{\mathbf{E}}) - q_{j} \leq \epsilon$ for a non-novel pair $j$.
\end{proposition}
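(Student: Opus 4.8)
The plan is to follow the skeleton of the proof of Proposition~\ref{prop:solidangle} but to replace the crude union bound over $W$ planar angles with a sharper, Gaussian-specific argument built on Proposition~\ref{key-claim}. As in that proof, it suffices to treat a novel pair $i\in\mathcal{C}_k$ (the non-novel case is symmetric, with $\mathcal{J}_i\supseteq\mathcal{S}_i$ in place of equality and the roles of the two events interchanged as in Proposition~\ref{prop:solidangle}), and to work on the regime $\|\widehat{\mathbf{E}}-\mathbf{E}\|_\infty\le d/8$, on which $\mathcal{J}_i=\mathcal{S}_i$. Writing $X_j=(\mathbf{E}_i-\mathbf{E}_j)\mathbf{d}$ and $X_j^n=(\widehat{\mathbf{E}}_i-\widehat{\mathbf{E}}_j)\mathbf{d}$ for $j\in\mathcal{S}_i$, we have $q_i=\Pr(\forall j:\,X_j\ge 0)$ and $p_i(\widehat{\mathbf{E}})=\Pr(\forall j:\,X_j^n\ge 0)$, so after a sign flip Proposition~\ref{key-claim} (with $\mathbf{a}=\mathbf{0}$) bounds $q_i-p_i(\widehat{\mathbf{E}})$ by a \emph{perturbation term} $\Pr(\exists j:|X_j^n-X_j|\ge\epsilon_j)$ plus an \emph{anti-concentration term} $\Pr(-\bm{\epsilon}\le\mathbf{X}\le\bm{\epsilon})$, for a slab-width vector $\bm{\epsilon}$ to be chosen.

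The decisive gain over Proposition~\ref{prop:solidangle} comes from a structural reduction that lets both terms range over only $K$ indices rather than all $W$. Because $\bm{\sigma}$ is separable and $\mathbf{R}$ is full rank, every row $\mathbf{E}_j$ is a convex combination of the $K$ extreme rows $\mathbf{E}_{v_1},\dots,\mathbf{E}_{v_K}$ indexed by the novel pairs of the $K$ rankings (the same geometry underlying Proposition~\ref{prop:similarity}). Hence on the event $\{\forall\, l:\ (\mathbf{E}_i-\mathbf{E}_{v_l})\mathbf{d}\ge 0\}$ every non-vertex constraint $(\mathbf{E}_i-\mathbf{E}_j)\mathbf{d}\ge 0$ is automatically satisfied, so the exact solid angle $q_i$ is governed by only the $K-1$ vertex constraints with $v_l\neq v_k$. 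The first step is then to show that, on $\|\widehat{\mathbf{E}}-\mathbf{E}\|_\infty\le d/8$, a violation of any non-vertex constraint under $\widehat{\mathbf{E}}$ forces one of the $K-1$ vertex margins to fall within an $O(\epsilon_j)$ slab, so that the non-vertex wedges are absorbed into the vertex slab and perturbation events. I expect this to be the main obstacle: making the convex-combination reduction survive the passage from $\mathbf{E}$ to $\widehat{\mathbf{E}}$, and confirming that nothing outside the $K-1$ vertex directions contributes at leading order.

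With the reduction in hand, the two terms are controlled by elementary Gaussian facts. For the perturbation term, $X_j^n-X_j=(\widehat{\mathbf{e}}_j-\mathbf{e}_j)\mathbf{d}$ is a centered Gaussian of standard deviation $\|\widehat{\mathbf{e}}_j-\mathbf{e}_j\|_2\le 2\sqrt{W}\,\rho$, so a Gaussian tail bound together with a union bound over the $O(W)$ indices contributes the $\sqrt{W\log(2W/\epsilon)}$ factor once each $\epsilon_j$ is taken to be a few standard deviations. For the anti-concentration term, each vertex marginal $X_{v_l}$ is a one-dimensional Gaussian whose density is at most $1/(\sqrt{2\pi}\,\sigma_{v_l})$ with $\sigma_{v_l}=\|\mathbf{E}_i-\mathbf{E}_{v_l}\|_2\ge d_2$ by Proposition~\ref{prop:similarity}, so bounding $\Pr(-\bm{\epsilon}\le\mathbf{X}\le\bm{\epsilon})$ by the sum of the $K-1$ one-dimensional slab probabilities $2\epsilon_{v_l}/(\sqrt{2\pi}\,d_2)$ supplies the $K/(\sqrt{\pi}\,d_2)$ dependence. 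Finally I would balance the two contributions by choosing $\epsilon_{v_l}\asymp \sqrt{\pi}\,d_2\,\epsilon/(4K)$, which renders the perturbation tail negligible precisely when $\rho$ is of order $\sqrt{\pi}\,\epsilon\, d_2/(4K\sqrt{W\log(2W/\epsilon)})$, matching the stated threshold; the non-novel case follows by the same computation with the inclusion $\mathcal{J}_i\supseteq\mathcal{S}_i$, exactly as in Proposition~\ref{prop:solidangle}.
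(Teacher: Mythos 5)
Your proposal is correct and follows essentially the same route as the paper's proof: Proposition~\ref{key-claim} with $\mathbf{a}=\mathbf{0}$ splits $q_i-p_i(\widehat{\mathbf{E}})$ into a perturbation term $\Pr(\exists j:\vert\mathbf{e}_{i,j}\mathbf{d}\vert\ge\delta)$, union bounded over all $W$ indices via the Gaussian tail and $\Vert\mathbf{e}_{i,j}\Vert_2\le 2W^{0.5}\Vert\widehat{\mathbf{E}}-\mathbf{E}\Vert_\infty$ (the source of the $\sqrt{W\log(2W/\epsilon)}$ factor), plus an anti-concentration term restricted to the novel-pair rows, each a one-dimensional Gaussian slab with standard deviation $\Vert\mathbf{E}_i-\mathbf{E}_j\Vert_2\ge d_2$ by Proposition~\ref{prop:similarity} (the source of the $K/d_2$ factor), followed by the same balancing of $\delta$. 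The one correction: the step you single out as ``the main obstacle'' is a non-issue --- the restriction of the slab term $\Pr(\forall j\in\mathcal{J}_i:\vert\mathbf{z}_{ij}\mathbf{d}\vert\le\delta)$ to the subset of novel pairs follows from plain monotonicity of an intersection's probability under shrinking the index set, so no convex-combination argument for $\widehat{\mathbf{E}}$ and no absorption of perturbed non-vertex wedges is needed; relatedly, your opening claim that \emph{both} terms range over only $K$ indices is inconsistent with your own (correct) later union bound over $O(W)$ indices for the perturbation term, which is exactly what the paper does.
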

\begin{proof}
Recall the definition of $q_i$ and $p_i(\widehat{\mathbf{E}})$,
\begin{eqnarray*}
q_i &= &\Pr \lbrace \forall j \in \mathcal{S}(i),~  \mathbf{E}_i \mathbf{d} \geq \mathbf{E}_j \mathbf{d}  \rbrace \\
p_{i} (\widehat{\mathbf{E}}) &= &\Pr \lbrace \forall j \in\mathcal{J}_{i}, ~ \widehat{\mathbf{E}}_i \mathbf{d} \geq \widehat{\mathbf{E}}_i \mathbf{d} \rbrace 
\end{eqnarray*}
When $i$ is a novel word, $\mathcal{S}(i) = \mathcal{J}_{i}$ for $\Vert \widehat{\mathbf{E}} -\mathbf{E} \Vert_\infty \leq \rho\leq d/8$, therefore, by Proposition~\ref{key-claim}, we have,
\begin{equation}
\begin{split}
\label{eq:decomp}
 \vert q_{i} - p_{i} (\widehat{\mathbf{E}}) \vert  &\leq  \Pr  (\exists j\in\mathcal{J}_{i}  : \vert \mathbf{e}_{i,j} \mathbf{d} \vert \geq \delta) \\
& + \Pr ( \forall j\in \mathcal{J}_{i} : \vert \mathbf{z}_{ij} {\mathbf d} \vert \leq \delta ) 
\end{split} 
\end{equation}
where $\mathbf{e}_{i,j} = {\mathbf E}_i-\widehat{\mathbf{E}}_i +\widehat{\mathbf{E}}_j-{\mathbf E}_j$ and $\mathbf{z}_{ij} = \mathbf{E}_i - \mathbf{E}_j$. 
To apply the union bound to the second term in Eq.~\eqref{eq:decomp},
it suffice to consider only $j\in\bigcup_{k=1}^{K}
\mathcal{C}_{k}$. Therefore, by union bounding both the first and second
terms, we obtain,
\begin{equation*}
\begin{split}
\label{eq:GaussianDecomp}
& \vert q_{i} - p_{i} (\widehat{\mathbf{E}}) \vert \\
\leq &\sum_{j} \Pr (\vert \mathbf{e}_{i,j} \mathbf{d} \vert \geq \delta) + \sum_{j}\Pr (\vert \mathbf{z}_{ij} {\mathbf d} \vert \leq { \delta}) 
\end{split}
\end{equation*}
Note that $\mathbf{e}_{ij}\mathbf{d} \sim \mathcal{N}
(0,\lVert\mathbf{z}_{ij} \rVert_2^2)$ and $\mathbf{z}_{ij}\mathbf{d}
\sim \mathcal{N} (0,\lVert\mathbf{a}_{ij} \rVert_2^2)$ conditioned on
$\widehat{\mathbf{E}}$. Using the properties of the Gaussian
distribution we have,
\begin{equation*}
\begin{split}
\Pr  (\vert\mathbf{z}_{ij}\mathbf{d}\vert  \leq \delta ) & = \int_{-\delta}^{\delta} \frac{1}{\sqrt{2\pi}\Vert \mathbf{z}_{ij} \Vert}e^{-t^2/2\Vert \mathbf{z}_{ij}\Vert^2} dt \\
& \leq \frac{\sqrt{2/\pi}}{\Vert \mathbf{z}_{ij}\Vert} \delta
\end{split}.
\end{equation*} 
By Proposition \ref{prop:similarity}, $\Vert\mathbf{z}_{ij}\Vert \geq d_2$ for $j\in\mathcal{J}_{i}$, therefore, $\Pr  (\vert\mathbf{z}_{ij}\mathbf{d}\vert  \leq \delta ) \leq  \frac{\sqrt{2/\pi}}{d_2} \delta $. Similarly, note that 
\begin{equation*}
\Pr  ( \vert \mathbf{e}_{i,j}\mathbf{d} \vert \geq \delta |\widehat{\mathbf{E}} ) = 2Q(\delta/\Vert \mathbf{e}_{i,j} \Vert)\leq \exp(-\delta^2/2 \Vert \mathbf{e}_{i,j} \Vert^2)
\end{equation*} 
by the property of the $Q$-function.
Note that 
\begin{align*}
\Vert \mathbf{e}_{i,j} \Vert \leq &\Vert {\mathbf E}_i-\widehat{\mathbf{E}}_i\Vert +\Vert \widehat{\mathbf{E}}_j-{\mathbf E}_j\Vert \\
\leq & 2 W^{0.5}\Vert \mathbf{E} - \widehat{\mathbf{E}} \Vert_\infty
\end{align*}
Then, by marginalizing over $\widehat{\mathbf{E}}$ we obtain, $\Pr  ( \vert \mathbf{e}_{i,j}\mathbf{d} \vert \geq \delta ) \leq \exp(-\delta^2/8W \Vert \mathbf{E} - \widehat{\mathbf{E}}  \Vert_\infty^2)$. Combining these results, we obtain,
\begin{equation*}
\vert q_i - p_i(\widehat{\mathbf{E}}) \vert \leq K \frac{\sqrt{2/\pi}}{ d_2 }\delta + W \exp(-\delta^2/8W \Vert \mathbf{E} - \widehat{\mathbf{E}}  \Vert_\infty^2) 
\end{equation*}
hold true for any $\delta > 0$.
Therefore, if we set $\delta = \frac{\epsilon_0\rho}{2K\sqrt{2/\pi}}$,
and require 
\begin{align*}
\Vert \mathbf{E} - \widehat{\mathbf{E}}  \Vert_\infty \leq \frac{\sqrt{\pi}\epsilon  d_2 }{4 K \sqrt{W \log(2W/\epsilon)}}
\end{align*}
then $\vert q_i - p_i(\widehat{\mathbf{E}}) \vert\leq \epsilon$.  In
summary, we require $\Vert \mathbf{E} - \widehat{\mathbf{E}} \Vert_\infty
\leq \min\{\frac{\sqrt{\pi}\epsilon d_2 }{4 K \sqrt{W
    \log(2W/\epsilon)}}, d/8 \}$. We note that the argument above holds
true for a non-novel pair as well.
\end{proof}

In Proposition~\ref{prop:solidangleGuassian}, the bound on $\Vert \mathbf{E} - \widehat{\mathbf{E}}  \Vert_\infty$ is,
\begin{align*}
\min\{\frac{d}{8}, \frac{\sqrt{\pi}\epsilon  d_2 }{4 K \sqrt{W \log(2W/\epsilon)}}\}
\end{align*}
which is an improvement over the result in
Proposition~\ref{prop:solidangleconverge},
\begin{align*}
\min\{ \frac{d}{8}, \frac{\pi d_2 \epsilon}{W^{1.5}} \}
\end{align*}
where we could reduce the dependence on $W$ from $W\sqrt{W}$ to
$K\sqrt{W}$. Since $K\ll W$, we obtain a gain over the general
isotropic distribution. This leads to lightly improved results for the
overall sample complexity bounds:

{\bf Theorem 2}(Gaussian Random Projections)
Let $\bm{\sigma}$ be separable and $\mathbf{R}$ be full rank. Then the
overall Algorithm~1 consistently recovers $\bm{\sigma}$ up to a column
permutation as the number of users $M\rightarrow \infty$ and number of
projections $P\rightarrow \infty$.
Furthermore, if the random directions for projections are drawn from a
spherical Gaussian distribution, then $\forall \delta>0$, if
\begin{align*}
M \geq \max \Biggl\{   40 \frac{\log(3W/\delta)}{ N \rho^2 \eta^4}, ~
320 \frac{W^{0.5} \log(3W/\delta)}{N \eta^6 \lambda_{\min}} \Biggr\}
\end{align*}
and for 
\begin{equation*}
\numofproj \geq 16 \frac{\log(3W/\delta)}{q_{\wedge}^2}
\end{equation*}
then Algorithm~1 fails with probability at most $\delta$.  The other
model parameters are defined as $\eta = \min_{1\leq w \leq W}
[\mathbf{B}\mathbf{a}]_w$, $\rho =\min\{\frac{d}{8}, \frac{\sqrt{\pi}
  d_2 q_{\wedge} }{4 K \sqrt{W \log(2W/q_{\wedge})}}\}$, $d_2
\triangleq (1-b)\lambda_{\min}$,
$d=(1-b)^{2}\lambda_{\min}^{2}/\lambda_{\max}$,
$b=\max_{j\in\mathcal{C}_0,k} \bar{B}_{j,k} $ and $\lambda_{\min}$,
$\lambda_{\max}$ are the minimum /maximum eigenvalues of
$\bar{\mathbf{R}}$.  $q_{\wedge}$ is the minimum normalized solid
angle of the extreme points of the convex hull of the rows of
$\mathbf{E}$.

\section{Proof of Theorem 1}
The stated computational efficiency can be achieved in the same way as
discussed in Proposition~1 and~2 in \cite{Ding14:ref}. We need to point out that the post-processing steps in Algorithm 4 requires a computation time of $\mathcal{O}(WK)$ which is dominated by that of the Algorithm 2 and 3.

\bibliographystyle{abbrvnat}
\footnotesize
\bibliography{ref}

\end{document}